\def\eqref#1{equation~\ref{#1}}
\def\1{\bm{1}}
\DeclareMathAlphabet{\mathsfit}{\encodingdefault}{\sfdefault}{m}{sl}
\SetMathAlphabet{\mathsfit}{bold}{\encodingdefault}{\sfdefault}{bx}{n}
\newcommand{\R}{\mathbb{R}}
\DeclareMathOperator*{\argmin}{arg\,min}
\newtheorem*{question*}{Question}
\newtheorem{theorem}{Theorem}
\newtheorem{proposition}{Proposition}
\newtheorem{lemma}{Lemma}
\theoremstyle{remark}
\def\mbf{\mathbf}
\def\mbb{\mathbb}
\begin{document}
%
% paper title
% Titles are generally capitalized except for words such as a, an, and, as,
% at, but, by, for, in, nor, of, on, or, the, to and up, which are usually
% not capitalized unless they are the first or last word of the title.
% Linebreaks \\ can be used within to get better formatting as desired.
% Do not put math or special symbols in the title.
\title{Solving Inverse Problems with Latent Diffusion Models via Hard Data Consistency}
%
%
% author names and IEEE memberships
% note positions of commas and nonbreaking spaces ( ~ ) LaTeX will not break
% a structure at a ~ so this keeps an author's name from being broken across
% two lines.
% use \thanks{} to gain access to the first footnote area
% a separate \thanks must be used for each paragraph as LaTeX2e's \thanks
% was not built to handle multiple paragraphs
%

\author{%
  Bowen Song\textsuperscript{1}\thanks{Corresponding authors \{bowenbw, kwonsm\}@umich.edu}, ~Soo Min Kwon\textsuperscript{1}$^*$, ~Zecheng Zhang\textsuperscript{2}, ~Xinyu Hu\textsuperscript{3}, ~Qing Qu\textsuperscript{1} ~Liyue Shen\textsuperscript{1} \\
  \textsuperscript{1} University of Michigan,~~ \textsuperscript{2} Kumo.AI,~~ \textsuperscript{3} Microsoft
}

% make the title area
\maketitle

% As a general rule, do not put math, special symbols or citations
% in the abstract or keywords.
\begin{abstract}
Latent diffusion models have been demonstrated to generate high-quality images, while offering efficiency in model training compared to diffusion models operating in the pixel space. However, incorporating latent diffusion models to solve inverse problems remains a challenging problem due to the nonlinearity of the encoder and decoder. To address these issues, we propose \textit{ReSample}, an algorithm that can solve general inverse problems with pre-trained latent diffusion models. Our algorithm incorporates data consistency by solving an optimization problem during the reverse sampling process, a concept that we term as hard data consistency. Upon solving this optimization problem, we propose a novel resampling scheme to map the measurement-consistent sample back onto the noisy data manifold and theoretically demonstrate its benefits. Lastly, we apply our algorithm to solve a wide range of linear and nonlinear inverse problems in both natural and medical images, demonstrating that our approach outperforms existing state-of-the-art approaches, including those based on pixel-space diffusion models. Our code is available at 
\href{https://github.com/soominkwon/resample}{\texttt{https://github.com/soominkwon/resample}}. 

\end{abstract}

\section{Introduction}

Inverse problems arise from a wide range of applications across many domains, including computational imaging~\cite{inverse1, inverse2}, medical imaging~\cite{medicalimaging, medicalimaging2}, and remote sensing~\cite{angle, yolov5}, to name a few. When solving these inverse problems, the goal is to reconstruct an unknown signal $\bm{x}_* \in \R^n$ given observed measurements $\bm{y} \in \R^m$ of the form
$$
    \bm{y} = \mathcal{A}(\bm{x}_*) + \bm{\bm{\eta}},
$$
where $\mathcal{A}(\cdot):\R^n \to \R^m$ denotes some forward measurement operator (can be linear or nonlinear) and $\bm{\eta} \in \R^m$ is additive noise. Usually, we are interested in the case when $m < n$, which follows many real-world scenarios. When $m < n$, the problem is ill-posed and some kind of regularizer (or prior) is necessary to obtain a meaningful solution.

In the literature, the traditional approach of using hand-crafted priors (e.g. sparsity) is slowly being replaced by rich, learned priors such as deep generative models. Recently, there has been a lot of interests in using diffusion models as structural priors due to their state-of-the-art performance in image generation~\cite{beatgan, karras2022elucidating, song2023consistency}.
Compared to GANs, diffusion models are generally easier and more stable to train as they do not rely on an adversarial training scheme, making them a generative prior that is more readily accessible~\cite{beatgan}.
The most common approach for using diffusion models as priors is to resort to posterior sampling, which has been extensively explored in the literature~\cite{score, dps, mcg, ddrm, song2023PiGDM, chung2023fast, meng2022diffusion, denoisingself}. However, despite their remarkable success, these techniques exhibit several limitations. 
The primary challenge is that the majority of existing works that employ diffusion models as priors train these models directly in the pixel space, which requires substantial computational resources and a large volume of training data. In real-world applications such as computed tomography (CT) and magnetic resonance imaging (MRI) reconstruction, where images are inherently either 3D or even 4D~\cite{sparsemri}, training diffusion models directly in the pixel space is often infeasible.
%Secondly, diffusion models are known to suffer from slow sampling speeds as the reverse process is an iterative procedure. 
%Leveraging these models for inverse problem solving often requires a huge additional overhead, making them impractical for applications that need fast inference~\cite{stablediffusion, ddim, ddpm}. 

Latent diffusion models (LDMs), which embed data in order to operate in a lower-dimensional space, present a potential solution to this challenge, along with considerable improvements in computational efficiency~\cite{stablediffusion, latent2} by training diffusion models in a compressed latent space. They can also provide a great amount of flexibility, as they can enable one to transfer and generalize these models to different domains by fine-tuning on small amounts of training data~\cite{dreambooth}. 
Nevertheless, using LDMs to solve inverse problems poses a significant challenge.
The main difficulty arises from the 
inherent nonlinearity and nonconvexity of the decoder, making it challenging to directly apply existing solvers designed for pixel space. 
To address this issue, a concurrent work by Rout et al.~\cite{rout2023solving} recently introduced a posterior sampling algorithm operating in the latent space (PSLD), designed to solve \emph{linear} inverse problems with provable guarantees. However, we observe that PSLD may reconstruct images with artifacts in the presence of measurement noise.
% limiting its applicability in domains where accurate inference is crucial. 
Therefore, developing an efficient algorithm capable of addressing these challenges remains an open research question.

%We empirically observe that this is mainly because data consistency is hard to guarantee through the non-linearity of latent priors coming from the encoder and decoder.
\begin{figure}[t!]
\centering
\includegraphics[width=1.0\textwidth]{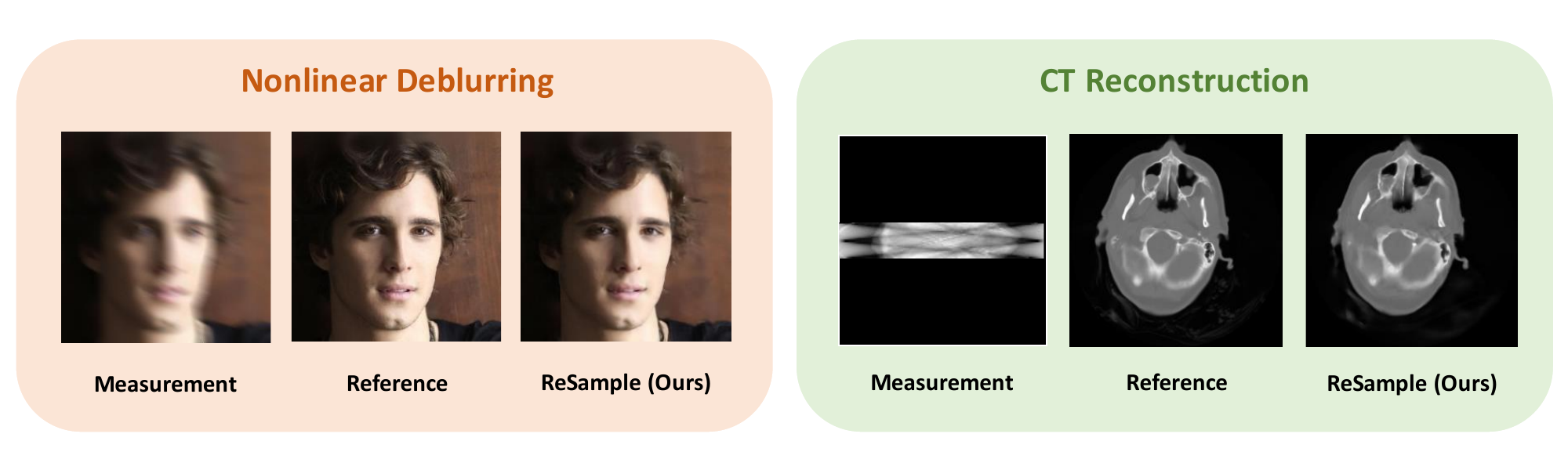}
\vspace{-10pt}
\caption{\textbf{Example reconstructions of our algorithm (\textit{ReSample}) on two noisy inverse problems, nonlinear deblurring and CT reconstruction, on natural and medical images, respectively.}}
\vspace{-10pt}
\label{fig:resample_examples}
\end{figure}

In this work, we introduce a novel algorithm named \textit{ReSample}, which effectively employs LDMs as priors for solving inverse problems. Our algorithm can be viewed as a two-stage process that incorporates data consistency by (1) solving a hard-constrained optimization problem, ensuring we obtain the correct latent variable that is consistent with the observed measurements, and (2) employing a carefully designed resampling scheme to map the measurement-consistent sample back onto the correct noisy data manifold. 
As a result, we show that our algorithm can achieve state-of-the-art performance on various inverse problem tasks and different datasets, compared to existing algorithms. Notably, owing to using the latent diffusion models as generative priors, our algorithm achieves improved computational efficiency with a reduction in memory complexity.
Below, we highlight some of our key contributions.
\begin{itemize}[leftmargin=*]

    \item We propose a novel algorithm that enables us to leverage latent diffusion models for solving general inverse problems (linear and nonlinear) through hard data consistency.

    % \footnote{\small{To our best knowledge, we are the first to propose an algorithm that can tackle general (including nonlinear) inverse problems with latent diffusion models. Although there are concurrent work tacking a similar problem, these methods are restricted to linear problems \cite{rout2023solving}.}}
    
    \item Particularly, we carefully design a stochastic resampling scheme that can reliably map the measurement-consistent samples back onto the noisy data manifold to continue the reverse sampling process. We provide a theoretical analysis to further demonstrate the superiority of the proposed stochastic resampling technique.
    
    % To demonstrate the effectiveness of our algorithm, we show both theoretically and empirically that we can reconstruct samples that are consistent with both the LDM prior and measurements. Additionally, our method is flexible in the sense that it can be readily adapted to solve various inverse problems (\emph{both linear and nonlinear}) in a zero-shot manner.

    \item With extensive experiments on multiple tasks and various datasets, encompassing both natural and medical images, our proposed algorithm achieves state-of-the-art performance on a variety of linear and nonlinear inverse problems.
    % CT reconstruction, nonlinear deblurring, inpainting, and super resolution. 
    % Remarkably, our algorithm demonstrates exceptional proficiency in handling nonlinear inverse problems -- a distinctive feature, as existing solutions employing LDMs cannot handle such problems.
    % We also demonstrate a reduction in memory complexity when using LDMs, as well as the capability to fine-tune them with small datasets. 
    % This fine-tuning ability holds substantial implications in domains such as medical imaging, where data is frequently limited and scarce.
    
\end{itemize}

\section{Background}

\subsection{Diffusion Models}
\paragraph{Denoising Diffusion Probabilistic Models.} We first briefly review the basic fundamentals of diffusion models, namely the denoising diffusion probabilistic model (DDPM) formulation~\cite{ddpm}. Let $\bm{x}_0 \sim p_{\text{data}}(\bm{x})$ denote samples from the data distribution. Diffusion models start by progressively perturbing data to noise via Gaussian kernels, which can be written as the variance-preserving stochastic differential equation (VP-SDE)~\cite{song2021scorebased} of the form
\begin{align}
\label{eq:forward_sde}
    d\bm{x} = -\frac{\beta_t}{2}\bm{x}dt + \sqrt{\beta_t}d\bm{w},
\end{align}
where $\beta_t \in (0, 1)$ is the noise schedule that is a monotonically increasing sequence of $t$ and $\bm{w}$ is the standard Wiener process. This is generally defined such that we obtain the data distribution when $t=0$ and obtain a Gaussian distribution when $t=T$, i.e. $\bm{x}_T \sim \mathcal{N}(\bm{0}, \bm{I})$. The objective of diffusion models is to learn the corresponding reverse SDE of Equation~(\ref{eq:forward_sde}), which is of the form
\begin{align}
\label{eq:reverse_sde}
    d\bm{x} = \left[  -\frac{\beta_t}{2}\bm{x} - \beta_t \nabla_{\bm{x}_t} \log p(\bm{x}_t)\right]dt + \sqrt{\beta_t}d\bar{\bm{w}},
\end{align}
where $d\bar{\bm{w}}$ is the standard Wiener process running backward in time and $\nabla_{\bm{x}_t} \log p(\bm{x}_t)$ is the (Stein) score function. In practice, we approximate the score function using a neural network $\bm{s}_{\theta}$ parameterized by $\theta$, which can be trained via denoising score matching~\cite{vincent}:
\begin{align}
\label{eq:score_matching}
    \hat{\theta} = \underset{\theta}{\argmin} \, \mbb{E}\left[ \| \bm{s}_{\theta}(\bm{x}_t, t) - \nabla_{\bm{x}_t} \log p(\bm{x}_t | \bm{x}_0)\|_2^2 \right],
\end{align}
where $t$ is uniformly sampled from $[0, T]$ and the expectation is taken over $t$, $\bm{x}_t \sim p(\bm{x}_t | \bm{x}_0)$, and $\bm{x}_0 \sim p_{\text{data}}(\bm{x})$.
Once we have access to the parameterized score function $\bm{s}_{\theta}$, we can use it to approximate the reverse-time SDE and simulate it using numerical solvers (e.g. Euler-Maruyama). 

%These numerical solvers yield different discretizations, and one type of discretization yields the ancestral update steps
%\begin{align}
%    \bm{x}_{t-1} = \frac{1}{\sqrt{1-\beta_t}} \left( \bm{x}_t - \beta_t \bm{s}_{\theta}(\bm{x}_t, t) \right) + \sqrt{\beta_t}\bm{\epsilon}, \quad t=T, \ldots, 0,
%\end{align}
%where $\bm{\epsilon} \sim \mathcal{N}(\bm{0}, \bm{I})$. Thus, upon drawing an initial sample $\bm{x}_T \sim \mathcal{N}(\bm{0}, \bm{I})$, we can iteratively follow the step above to obtain $\bm{x}_0$.

%\begin{figure}[t!]
%\centering
%\includegraphics[width=1.0\textwidth]{figures/flowchart57.jpg}
%\caption{An overview of our method of enable data consistency of latent diffusion models by ReSample}
%\label{fig:cherrypick}
%\end{figure}

\begin{figure}[t!]
\centering
\includegraphics[width=0.9\textwidth]{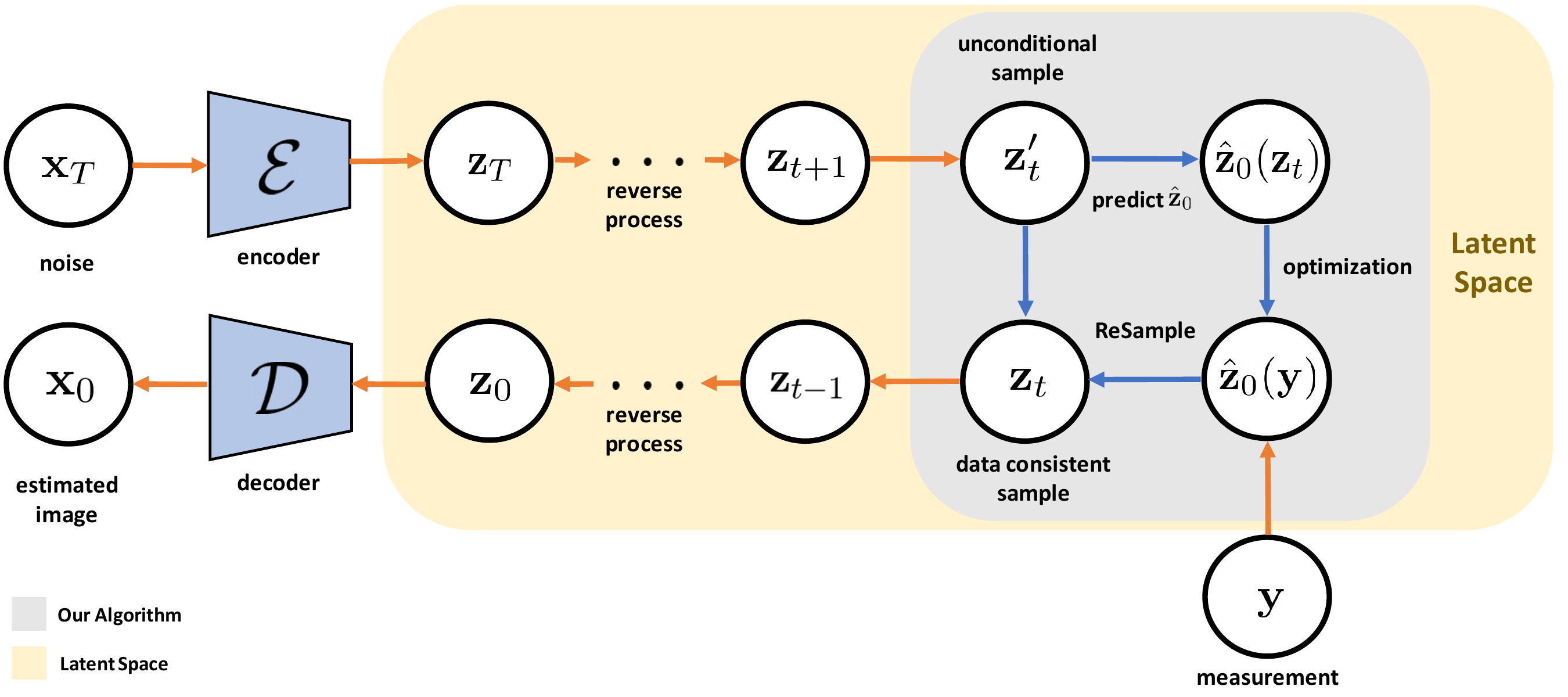}
\caption{\textbf{Overview of our \textit{ReSample} algorithm during the reverse sampling process conditioned on the data constraints from measurement.} The entire sampling process is conducted in the latent space upon passing the sample through the encoder. The proposed algorithm performs hard data consistency at some time steps $t$ via a skipped-step mechanism.}
\label{fig:resample_flowchart}
\vspace{-10pt}
\end{figure}

\paragraph{Denoising Diffusion Implicit Models.}
As the DDPM formulation is known to have a slow sampling process, Song et al.~\cite{ddim} proposed denoising diffusion implicit models (DDIMs) that defines the diffusion process as a non-Markovian process to remedy this~\cite{ddpm, ddim, song2023consistency, lu2022dpmsolver}.
This enables a faster sampling process with the sampling steps given by
\begin{align}
\label{eq:ddimbackward}
    \bm{x}_{t-1} = \sqrt{\bar{\alpha}_{t-1}}\hat{\bm{x}}_0(\bm{x}_t) + \sqrt{1 - \bar{\alpha}_{t-1} - \eta\delta_t^2 } \bm{s}_{\theta}(\bm{x}_t, t) + \eta\delta_t  \bm{\epsilon}, \quad t=T, \ldots, 0,
\end{align}
where $\alpha_t = 1 - \beta_t$,  $\bar{\alpha}_t = \prod_{i=1}^t \alpha_i$, $\bm{\epsilon} \sim \mathcal{N}(\bm{0}, \bm{I})$, $\eta$ is the temperature parameter, $\delta_t$ controls the stochasticity of the update step, and $\hat{\bm{x}}_0(\bm{x}_t)$ denotes the predicted $\bm{x}_0$ from $\bm{x}_t$ which takes the form 
\begin{align}
\label{eq:x0_ddim}
\hat{\bm{x}}_0(\bm{x}_t) = \frac{1}{\sqrt{\bar{\alpha}_t}} (\bm{x}_t + \sqrt{1-\bar{\alpha}_t}\bm{s}_{\theta}(\bm{x}_t, t)),
\end{align}
which is an application of Tweedie's formula.
Here, $\bm{s}_{\theta}$ is usually trained using the epsilon-matching score objective~\cite{ddim}. We use DDIM as the backbone of our algorithm and show how we can leverage these update steps for solving inverse problems.

\subsection{Diffusion Models for Solving Inverse Problems}

\paragraph{Solving Inverse Problems with Diffusion Models.}
Given measurements $\bm{y} \in \R^m$ from some forward measurement operator $\mathcal{A}(\cdot)$, we can use diffusion models to solve inverse problems by replacing the score function in Equation~(\ref{eq:reverse_sde}) with the conditional score function $\nabla_{\bm{x}_t} \log p(\bm{x}_t | \bm{y})$. Then by Bayes rule, notice that we can write the conditional score as
\begin{align*}
    \nabla_{\bm{x}_t} \log p(\bm{x}_t | \bm{y}) = \nabla_{\bm{x}_t} \log p(\bm{x}_t) + \nabla_{\bm{x}_t} \log p(\bm{y} | \bm{x}_t).
\end{align*}
This results in the reverse SDE of the form
\begin{align*}
    d\bm{x} = \left[  -\frac{\beta_t}{2}\bm{x} - \beta_t ( \nabla_{\bm{x}_t} \log p(\bm{x}_t) + \nabla_{\bm{x}_t} \log p(\bm{y} | \bm{x}_t) )\right]dt + \sqrt{\beta_t}d\bar{\bm{w}}.
\end{align*}
In the literature, solving this reverse SDE is referred as \emph{posterior sampling}. However, the issue with posterior sampling is that there does not exist an analytical formulation for the likelihood term $\nabla_{\bm{x}_t} \log p(\bm{y} | \bm{x}_t)$. To resolve this, there exists two lines of work: (1) to resort to alternating projections onto the measurement subspace to avoid using the likelihood directly~\cite{mcg, ddrm, ddnm} and (2) to estimate the likelihood under some mild assumptions~\cite{dps, song2023PiGDM}. For example, Chung et al.~\cite{dps} proposed diffusion posterior sampling (DPS) that uses a Laplacian approximation of the likelihood, which results in the discrete update steps
\begin{align}
\label{eq:dps_updates}
    \bm{x}_{t-1} &= \sqrt{\bar{\alpha}_{t-1}}\hat{\bm{x}}_0(\bm{x}_t) + \sqrt{1 - \bar{\alpha}_{t-1} - \eta\delta_t^2 } \bm{s}_{\theta}(\bm{x}_t, t) + \eta\delta_t  \bm{\epsilon} \\
    \bm{x}_{t-1} &= \bm{x}'_{t-1} - \zeta \nabla_{\bm{x}_t} \|\bm{y} - \mathcal{A}(\hat{\bm{x}}_0(\bm{x}_t)) \|_2^2,
\end{align}
where $\zeta \in \R$ can be viewed as a tunable step-size.
However, as previously mentioned, these techniques have limited applicability for real-world problems as they are all built on the pixel space.

\paragraph{Solving Inverse Problems with Latent Diffusion Models.}

The limited applicability of pixel-based diffusion models can be tackled by alternatively utilizing more efficient LDMs as generative priors.
The setup for LDMs is the following: given an image $\bm{x} \in \mbb{R}^n$, we have an encoder $\mathcal{E}: \mbb{R}^n \to \mbb{R}^k$ and a decoder $\mathcal{D}: \mbb{R}^k \to \mbb{R}^n$ where $k \ll n$.
Let $\bm{z} = \mathcal{E}(\bm{x}) \in \mbb{R}^k$ denote the embedded samples in the latent space. 
%Latent diffusion models operate within this latent space denoted by $\bm{z} \in \R^k$
%by passing all of the images in the pixel space through the encoder.
One way of incorporating LDMs to solve inverse problems would be to replace the update steps in Equation~(\ref{eq:dps_updates}) with
\begin{align}
\label{eq:latent_dps_updates}
    \bm{z}'_{t-1} &= \sqrt{\bar{\alpha}_{t-1}}\hat{\bm{z}}_0(\bm{z}_t) + \sqrt{1 - \bar{\alpha}_{t-1} - \eta \delta_t^2} \bm{s}_{\theta}(\bm{z}_t, t) + \eta\delta_t \bm{\epsilon}, \\
\label{eq:latent_dps_gradient}
    \bm{z}_{t-1} &= \bm{z}'_{t-1} - \zeta \nabla_{\bm{z}_t} \|\bm{y} - \mathcal{A}  (\mathcal{D}(\hat{\bm{z}}_0(\bm{z}_t))) \|_2^2,
\end{align}
After incorporating LDMs, this can be viewed as a non-linear inverse problem due to the non-linearity of the decoder $\mathcal{D}(\cdot)$. As this builds upon the idea behind DPS, we refer to this algorithm as \emph{Latent-DPS}.
While this formulation seems to work, we empirically observe that Latent-DPS often produces reconstructions that are often noisy or blurry and inconsistent with the measurements. We conjecture that since the forward operator involving the decoder is highly nonconvex, the gradient update may lead towards a local minimum. We provide more insights in Appendix (Section D).

\section{ReSample: Inverse Problems using Latent Diffusion Models}
\label{sec:resample}

%In this section, we discuss our proposed algorithm ReSample and how it can resolve the issues raised by Latent-DPS. In Section~\ref{sec:prop}, we discuss our algorithm in more detail and in Section~\ref{sec:theory}, we develop a theory to rigorously argue the benefits of our approach.

%%%% =========
\subsection{Proposed Method}
\label{sec:prop}

\paragraph{Hard Data Consistency.} Similar to Latent-DPS, our algorithm involves incorporating data consistency into the reverse sampling process of LDMs. However, instead of a gradient-like update as shown in Equation~(\ref{eq:latent_dps_gradient}), we propose to solve an optimization problem on \emph{some time steps $t$} of the reverse process:
\begin{align}
\label{eq:hard_data_consistency}
    \hat{\bm{z}}_0(\bm{y}) \in \underset{\bm{z}}{\argmin} \, \frac{1}{2} \| \bm{y} - \mathcal{A}(\mathcal{D}(\bm{z})) \|^2_2,
\end{align}
where we denote $\hat{\bm{z}}_0(\bm{y})$ as the sample consistent with the measurements $\bm{y} \in \R^m$. This optimization problem has been previously explored in other works that use GANs for solving inverse problems, and can be efficiently solved using iterative solvers such as gradient descent~\cite{csgm, robust_csgm, shah2021provably, dip}. However, it is well known that solving this problem starting from a random initial point may lead to unfavorable local minima~\cite{csgm}. To address this, we solve Equation~(\ref{eq:hard_data_consistency}) starting from an initial point $\hat{\bm{z}}_0(\bm{z}_{t+1})$, where $\hat{\bm{z}}_0(\bm{z}_{t+1})$ is the estimate of ground-truth latent vector at time $0$ based on the sample at time $t+1$. 
%One can solve Equation~(\ref{eq:hard_data_consistency}) using iterative solvers such as gradient descent, but instead of starting at some random initial point, we initialize using $\hat{\bm{z}}_0(\bm{z}_t)$, where $\hat{\bm{z}}_0(\bm{z}_t)$ is an estimate of the ground-truth latent sample at time $t$. 
The intuition behind this initialization is that we want to start the optimization process within local proximity of the global solution of Equation~(\ref{eq:hard_data_consistency}), to prevent resulting in a local minimum. We term this overall concept as \emph{hard data consistency}, as we strictly enforce the measurements via optimization, rather than a ``soft'' approach through gradient update like Latent-DPS.
To obtain $\hat{\bm{z}}_0(\bm{z}_{t+1})$, we use Tweedie's formula~\cite{tweedie} that gives us an approximation of the posterior mean which takes the following formula:
\begin{align}
    \hat{\bm{z}}_0(\bm{z}_t) = \mbb{E}[\bm{z}_0 | \bm{z}_{t}] = \frac{1}{\sqrt{\bar{\alpha}_{t}}} (\bm{z}_{t} + (1-\bar{\alpha}_{t})\nabla \log p(\bm{z}_{t})).
\end{align}

However, we would like to note that performing hard data consistency on every reverse sampling iteration $t$ may be very costly. To address this, we first observe that as we approach $t=T$, the estimated $\hat{\bm{z}}_0(\bm{z}_t)$ can deviate significantly from the ground truth. In this regime, we find that hard data consistency provides only marginal benefits. Additionally, in the literature, existing works 
% that incorporate conditional diffusion models 
point out the existence of a three-stage phenomenon~\cite{freedom}, where they demonstrate that data consistency is primarily beneficial for the semantic and refinement stages (the latter two stages when $t$ is closer to $0$) of the sampling process.
Following this reasoning, we divide $T$ into three sub-intervals and only apply the optimization in the latter two intervals. This approach provides both computational efficiency and accurate estimates of $\hat{\bm{z}}_0(\bm{z}_t)$.

Furthermore, even during these two intervals, we observe that we do not need to solve the optimization problem on every iteration $t$. Because of the continuity of the sampling process, after each data-consistency optimization step, the samples in the following steps can still remain similar semantic or structural information to some extent. Thus, we ``reinforce'' the data consistency constraint during the sampling process via a skipped-step mechanism. Empirically, we see that it is sufficient to perform this on every $10$ (or so) iterations of $t$. One can think of hard data consistency as guiding the sampling process towards the ground truth signal $\bm{x}^*$ (or respectively $\bm{z}^*$) such that it is consistent with the given measurements.
Lastly, in the presence of measurement noise, minimizing Equation~(\ref{eq:hard_data_consistency}) to zero loss can lead to overfitting the noise. To remedy this, we perform early stopping, where we only minimize up to a threshold $\tau$ based on the noise level. We will discuss the details of the optimization process in the Appendix. We also observe that an additional Latent-DPS step after unconditional sampling can (sometimes) marginally increase the overall performance. We perform an ablation study on the performance of including Latent-DPS in the Appendix.

\begin{algorithm}[t]
\caption{ReSample: Solving Inverse Problems with Latent Diffusion Models}
\label{alg:resample}
\begin{algorithmic}
\Require Measurements $\bm{y}$, $\mathcal{A}(\cdot)$, Encoder $\mathcal{E}(\cdot)$, Decoder $\mathcal{D}(\cdot)$, Score function $\bm{s}_{\theta}(\cdot, t)$, Pretrained LDM Parameters $\beta_t$, $\bar{\alpha}_t$, $\eta$, $\delta$, Hyperparameter $\gamma$ to control $\sigma^2_t$, Time steps to perform resample $C$
\State{$\bm{z}_T \sim \mathcal{N}(\bm{0}, \bm{I})$}
\Comment{Initial noise vector}
\For{$t=T-1, \ldots, 0$}
\State $\pmb{\epsilon}_1 \sim \mathcal{N}(\bm{0}, \bm{I})$
\State $\hat{\bm{\epsilon}}_{t+1} = \bm{s}_{\theta}(\bm{z}_{t+1}, {t+1})$
\Comment{Compute the score}
\State $\hat{\bm{z}}_0(\bm{z}_{t+1}) = \frac{1}{\sqrt{\bar{\alpha}_{t+1}}} (\bm{z}_{t+1} - \sqrt{1-\bar{\alpha}_{t+1}}\hat{\bm{\epsilon}}_{t+1})$
\Comment{Predict $\hat{\bm{z}}_0$ using Tweedie's formula}
\State $\bm{z}'_{t} = \sqrt{\bar{\alpha}_{t}}\hat{\bm{z}}_0(\bm{z}_{t+1}) + \sqrt{1 - \bar{\alpha}_{t} - \eta\delta^2} \hat{\bm{\epsilon}}_{t+1} + \eta\delta \pmb{\epsilon}_1 $
\Comment{Unconditional DDIM step}
% \State $\bm{z}'_{t} = \bm{z}'_{t} - \frac{\bar{\alpha}_t}{2} \nabla_{\bm{z}_t} \|\bm{y} - \mathcal{A}  (\mathcal{D}(\hat{\bm{z}}_0(\bm{z}_t))) \|_2^2$
% \Comment{(Optional) Latent-DPS step\footnotemark}
\If{$t \in C$}
\Comment{ReSample time step}
% \For{$i = k, \ldots, 1$}
%\State $\pmb{\epsilon}_2 \sim \mathcal{N}(\bm{0}, \bm{I})$
% \State $\bm{z}'_{t-k} = \text{TimeTravel}(\bm{z}'_{t} )$
%\State \smk{$\bm{z}'_{t-i} = \sqrt{\bar{\alpha}_{t-i}}\hat{\bm{z}}_0 + \sqrt{1 - \bar{\alpha}_{t-i} - \eta_{t-i+1}^2} \bm{s}_{\theta}(\bm{z}_{t-i+1}, t-i+1) + \eta_{t-i+1} \pmb{\epsilon}_2 $}
% \Comment{Reverse sample $k$ times}
% \State $\hat{\bm{z}}_{0}(\bm{z}_{t}) = \frac{1}{\sqrt{\bar{\alpha}_{t}}}(\bm{z}'_{t} -+ \sqrt{(1-\bar{\alpha}_{t})}\bm{s}_{\theta}(\bm{z}'_{t}, t))$\Comment{Use Tweedie's formula}
\State $\hat{\bm{z}}_0(\bm{y}) \in \underset{\bm{z}}{\argmin} \, \frac{1}{2}\|\bm{y} - \mathcal{A}(\mathcal{D}(\bm{z}))\|_2^2$
\Comment{Solve with initial point $\hat{\bm{z}}_{0}(\bm{z}_{t+1})$}
\State $\bm{z}_{t} = \text{StochasticResample}(\hat{\bm{z}}_0(\bm{y}), \bm{z}'_{t}, \gamma)$
\Comment{Map back to $t$}
\Else{}
\State{$\bm{z}_{t} = \bm{z}'_{t}$}
\Comment{Unconditional sampling if not resampling}
\EndIf
\EndFor
\State{ $\bm{x}_0 = \mathcal{D}(\bm{z}_0)$}
\Comment{Output reconstructed image}
\end{algorithmic}
\end{algorithm}
% \vspace{-20pt}

% \footnotetext{We observe that an additional Latent-DPS step can (sometimes) marginally increase the overall performance. We perform an ablation study on the performance of including Latent-DPS in the Appendix.}

\paragraph{Remapping Back to $\bm{z}_t$.}
\label{sec:resample_technique}
Following the flowchart in Figure~\ref{fig:resample_flowchart}, the next step is to map the measurement-consistent sample $\hat{\bm{z}}_0(\bm{y})$ back onto the data manifold defined by the noisy samples at time $t$ to continue the reverse sampling process. Doing so would be equivalent to computing the posterior distribution $p(\bm{z}_{t} | \bm{y})$.
%It is difficult to directly sample from $p(\bm{z_{t}} | \bm{y})$.
To incorporate $\hat{\bm{z}}_0(\bm{y})$ into the posterior, we propose to construct an auxiliary distribution $p(\hat{\bm{z}}_{t} | \hat{\bm{z}}_0(\bm{y}), \bm{y}) $ in replace of $p(\bm{z}_{t} | \bm{y})$. Here, $\hat{\bm{z}}_t$ denotes the remapped sample and $\bm{z}'_t$ denotes the unconditional sample before remapping. One simple way of computing this distribution to obtain $\hat{\bm{z}}_t$ is shown in Proposition~\ref{prop:stochastic_enc}.
%Intuitively, $\bm{\hat{z}}_{t}$ should contain information of both $\bm{z_{t}}$ and $\bm{y}$. We denote $p(\hat{\bm{z}}_t)$ be the distribution of the remapped sample and $p(\bm{z}'_t)$ be that of the original sample.
\begin{proposition}[Stochastic Encoding]
\label{prop:stochastic_enc}
%For sampling $\bm{\hat{z}}_{t}$ given $\hat{\bm{z}}_0(\bm{y})$ and $\bm{y}$, it is conditionally independent of $\bm{y}$. So we can use a simple Gaussian distribution to sample $\bm{\hat{z}}_{t}$
Since the sample $\hat{\bm{z}}_t$ given $\hat{\bm{z}}_0(\bm{y})$ and measurement $\bm{y}$ is conditionally independent of $\bm{y}$, we have that
    \begin{align}
        p(\hat{\bm{z}}_{t} |  \hat{\bm{z}}_0(\bm{y}), \bm{y}) = p(\hat{\bm{z}}_{t} |  \hat{\bm{z}}_0(\bm{y})) = \mathcal{N}(\sqrt{\bar{\alpha}_t} \hat{\bm{z}}_0(\bm{y}), (1 -\bar{\alpha}_{t})\bm{I}).
    \end{align}
\end{proposition}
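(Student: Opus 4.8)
The plan is to separate the statement into its two equalities and dispatch each with a distinct tool: the first equality is a purely probabilistic (Markov) fact, whereas the second is a direct recall of the forward diffusion marginal applied to $\hat{\bm{z}}_0(\bm{y})$.

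First I would establish the conditional independence that gives the first equality. The key observation is that $\hat{\bm{z}}_0(\bm{y})$ is a \emph{deterministic} function of the measurement $\bm{y}$ (it is the argmin of Equation~(\ref{eq:hard_data_consistency})), while the remapping to time $t$ is \emph{defined} by injecting fresh Gaussian noise drawn independently of $\bm{y}$. Consequently $\hat{\bm{z}}_t$ depends on $\bm{y}$ only through $\hat{\bm{z}}_0(\bm{y})$, i.e., the triple forms the Markov chain $\bm{y} \to \hat{\bm{z}}_0(\bm{y}) \to \hat{\bm{z}}_t$. This dependence-only-through-$\hat{\bm{z}}_0(\bm{y})$ is exactly the content needed to conclude $p(\hat{\bm{z}}_t \mid \hat{\bm{z}}_0(\bm{y}), \bm{y}) = p(\hat{\bm{z}}_t \mid \hat{\bm{z}}_0(\bm{y}))$.

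Second I would identify $p(\hat{\bm{z}}_t \mid \hat{\bm{z}}_0(\bm{y}))$ with the forward diffusion transition kernel from time $0$ to time $t$, treating $\hat{\bm{z}}_0(\bm{y})$ as the clean latent at time $0$. Starting from the single-step recursion $\bm{z}_t = \sqrt{\alpha_t}\,\bm{z}_{t-1} + \sqrt{1-\alpha_t}\,\bm{\epsilon}_t$ with $\alpha_t = 1-\beta_t$ and independent $\bm{\epsilon}_t \sim \mathcal{N}(\bm{0},\bm{I})$, I would compose the steps and use that a sum of independent Gaussians is Gaussian, together with $\bar{\alpha}_t = \prod_{i=1}^t \alpha_i$, to obtain the closed form $\bm{z}_t = \sqrt{\bar{\alpha}_t}\,\bm{z}_0 + \sqrt{1-\bar{\alpha}_t}\,\bm{\epsilon}$ with $\bm{\epsilon}\sim\mathcal{N}(\bm{0},\bm{I})$ (equivalently one can solve the VP-SDE of Equation~(\ref{eq:forward_sde}), whose variance-preserving schedule yields the same marginal). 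Substituting $\bm{z}_0 = \hat{\bm{z}}_0(\bm{y})$ gives $p(\hat{\bm{z}}_t \mid \hat{\bm{z}}_0(\bm{y})) = \mathcal{N}(\sqrt{\bar{\alpha}_t}\,\hat{\bm{z}}_0(\bm{y}), (1-\bar{\alpha}_t)\bm{I})$, completing the chain.

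The main obstacle is conceptual rather than computational. Because $\hat{\bm{z}}_0(\bm{y})$ is $\bm{y}$-measurable, conditioning on the pair $(\hat{\bm{z}}_0(\bm{y}), \bm{y})$ is, as an event, identical to conditioning on $\bm{y}$ alone; hence the first equality is \emph{not} a formal consequence of the algebra of conditioning. It holds only because the remapping noise is independent of $\bm{y}$, which is precisely what forces $\hat{\bm{z}}_t$ to depend on $\bm{y}$ solely through $\hat{\bm{z}}_0(\bm{y})$. I would therefore state this independence-of-noise property explicitly as the premise (mirroring the proposition's ``Since \ldots'' clause) and check that the remapping in Algorithm~\ref{alg:resample} indeed uses such independent noise; with that granted, the remainder reduces to the standard diffusion marginal computation above.
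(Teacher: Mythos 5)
Your proposal is correct and follows essentially the same route as the paper: the paper likewise combines the DDPM forward kernel $p(\hat{\bm{z}}_t \mid \hat{\bm{z}}_0(\bm{y})) = \mathcal{N}(\sqrt{\bar{\alpha}_t}\,\hat{\bm{z}}_0(\bm{y}), (1-\bar{\alpha}_t)\bm{I})$ with the conditional independence $p(\bm{y} \mid \hat{\bm{z}}_t, \hat{\bm{z}}_0(\bm{y})) = p(\bm{y} \mid \hat{\bm{z}}_0(\bm{y}))$, which it runs through Bayes' rule and which is equivalent to the Markov chain $\bm{y} \to \hat{\bm{z}}_0(\bm{y}) \to \hat{\bm{z}}_t$ you invoke. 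Your closing remark---that this independence is a property of the freshly injected remapping noise rather than a formal consequence of conditioning, since $\hat{\bm{z}}_0(\bm{y})$ is determined by $\bm{y}$ (and the initialization)---is a fair point the paper leaves implicit, but it does not change the argument.
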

We defer all of the proofs to the Appendix. 
Proposition~\ref{prop:stochastic_enc} provides us a way of computing $\hat{\bm{z}}_t$, which we refer to as \emph{stochastic encoding}.
%However, this estimation has a high variance since going forward multiple steps has a much larger variance than going forward one step. 
However, we observe that using stochastic encoding can incur a high variance when $t$ is farther away from $t=0$, where the ground truth signal exists. This large variance can often lead to noisy image reconstructions.
To address this issue, we propose a posterior sampling technique that reduces the variance by additionally conditioning on $\bm{z}'_t$, the unconditional sample at time $t$.
%The idea is that we do not want the remapped sample to be far from the original sample at time $t$. Let $\bm{z}'_t$ be the original sample at time $t$.
Here, the intuition is that
by using information of $\bm{z}'_t$, we can get closer to the ground truth $\bm{z}_t$, which effectively reduces the variance. In Lemma~\ref{lem:resample}, under some mild assumptions, we show that this new distribution $p(\hat{\bm{z}}_t | \bm{z}'_t, \hat{\bm{z}}_0(\bm{y}), \bm{y})$ is a tractable Gaussian distribution.
\begin{proposition}[Stochastic Resampling]
\label{lem:resample}
    Suppose that $p(\bm{z}'_t| \hat{\bm{z}}_t, \hat{\bm{z}}_0(\bm{y}), \bm{y})$ is normally distributed such that  $p(\bm{z}'_t| \hat{\bm{z}}_t, \hat{\bm{z}}_0(\bm{y}), \bm{y}) = \mathcal{N}(\bm{ \mu }_t, \sigma_t^2)$. If we let $p(\hat{\bm{z}_t} | \hat{\bm{z}}_0(\bm{y}), \bm{y})$ be a prior for $\bm{\mu}_t$, then the posterior distribution $p(\hat{\bm{z}}_t | \bm{z}'_t, \hat{\bm{z}}_0(\bm{y}), \bm{y})$ is given by 
    \begin{align}
    p(\hat{\bm{z}}_t | \bm{z}'_t, \hat{\bm{z}}_0(\bm{y}), \bm{y}) = \mathcal{N}\left( \frac{ \sigma_{t}^2 \sqrt{\bar{\alpha}_t}\hat{\bm{z}}_0(\bm{y}) + (1-\bar{\alpha}_t)\bm{z}'_{t} }{ \sigma_{t}^2 + (1 - \bar{\alpha}_t) }, \frac{\sigma_{t}^2 (1-\bar{\alpha}_t)}{\sigma_{t}^2 + (1 - \bar{\alpha}_{t})}\bm{I}  \right).
\end{align} 
\end{proposition}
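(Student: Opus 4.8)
The plan is to recognize this as a standard conjugate Gaussian computation and settle it with Bayes' rule together with completing the square. Writing the target posterior as proportional to likelihood times prior,
\begin{align*}
p(\hat{\bm{z}}_t \mid \bm{z}'_t, \hat{\bm{z}}_0(\bm{y}), \bm{y}) \;\propto\; p(\bm{z}'_t \mid \hat{\bm{z}}_t, \hat{\bm{z}}_0(\bm{y}), \bm{y}) \; p(\hat{\bm{z}}_t \mid \hat{\bm{z}}_0(\bm{y}), \bm{y}),
\end{align*}
the first factor is the assumed likelihood from the statement and the second is exactly the stochastic-encoding law supplied by Proposition~\ref{prop:stochastic_enc}. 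Both are Gaussian, so the product is Gaussian and it remains only to read off its mean and covariance.

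First I would make the identification $\bm{\mu}_t = \hat{\bm{z}}_t$ explicit, so that the likelihood, viewed as a function of the unknown $\hat{\bm{z}}_t$, is $\mathcal{N}(\bm{z}'_t;\, \hat{\bm{z}}_t,\, \sigma_t^2 \bm{I})$, i.e.\ proportional to $\exp\!\big(-\|\bm{z}'_t - \hat{\bm{z}}_t\|^2 / (2\sigma_t^2)\big)$. By Proposition~\ref{prop:stochastic_enc} the prior is $\mathcal{N}\big(\hat{\bm{z}}_t;\, \sqrt{\bar{\alpha}_t}\hat{\bm{z}}_0(\bm{y}),\, (1-\bar{\alpha}_t)\bm{I}\big)$. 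Multiplying these densities produces an exponent that is a quadratic form in $\hat{\bm{z}}_t$, namely
\begin{align*}
-\frac{1}{2}\left( \frac{\|\bm{z}'_t - \hat{\bm{z}}_t\|^2}{\sigma_t^2} + \frac{\|\hat{\bm{z}}_t - \sqrt{\bar{\alpha}_t}\hat{\bm{z}}_0(\bm{y})\|^2}{1-\bar{\alpha}_t} \right).
\end{align*}

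The core step is to collect terms and complete the square in $\hat{\bm{z}}_t$. The coefficient of $\|\hat{\bm{z}}_t\|^2$ is $\tfrac{1}{\sigma_t^2} + \tfrac{1}{1-\bar{\alpha}_t}$, whose inverse gives the posterior covariance $\tfrac{\sigma_t^2(1-\bar{\alpha}_t)}{\sigma_t^2 + (1-\bar{\alpha}_t)}\bm{I}$; the linear coefficient $\tfrac{\bm{z}'_t}{\sigma_t^2} + \tfrac{\sqrt{\bar{\alpha}_t}\hat{\bm{z}}_0(\bm{y})}{1-\bar{\alpha}_t}$, multiplied by this covariance, yields the posterior mean $\tfrac{\sigma_t^2\sqrt{\bar{\alpha}_t}\hat{\bm{z}}_0(\bm{y}) + (1-\bar{\alpha}_t)\bm{z}'_t}{\sigma_t^2 + (1-\bar{\alpha}_t)}$. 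Matching these against the claimed Gaussian closes the argument.

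The step I expect to be the genuine obstacle is conceptual rather than computational: one must justify the framing that makes the conjugacy applicable. Specifically, I would need to argue that $\bm{\mu}_t$ may be identified with the inferred variable $\hat{\bm{z}}_t$ (so the likelihood is centered at the quantity being estimated), and that the stochastic-encoding law of Proposition~\ref{prop:stochastic_enc} is an admissible prior for it; a secondary point to state carefully is the conditional-independence structure that lets $\bm{y}$ drop out of the effective likelihood, mirroring the reasoning already used in Proposition~\ref{prop:stochastic_enc}. Once this interpretation is fixed, the Gaussian-conjugacy algebra above is entirely routine.
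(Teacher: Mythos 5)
Your proposal is correct and follows essentially the same route as the paper's proof: Bayes' rule with the stochastic-encoding law of Proposition~\ref{prop:stochastic_enc} as the Gaussian prior, the assumed likelihood centered at $\hat{\bm{z}}_t$, and completion of the square to read off the posterior mean and covariance. If anything, your version is slightly more careful, since you state explicitly the identification $\bm{\mu}_t = \hat{\bm{z}}_t$ that the paper leaves implicit and you carry the signs and the factor of $\tfrac{1}{2}$ in the exponent correctly.
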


We refer to this new mapping technique as \emph{stochastic resampling}. Since we do not have access to $\sigma_t^2$, it serves as a hyperparameter that we tune in our algorithm. The choice of $\sigma_t^2$ plays a role of controlling the tradeoff between prior consistency and data consistency. If $\sigma_t^2 \to 0$, then we recover unconditional sampling, and if $\sigma_t^2 \to \infty$, we recover stochastic encoding. 
We observe that this new technique also has several desirable properties, for which we rigorously prove in the next section.

%%%%%%==================

\subsection{Theoretical Results}
\label{sec:theory}

In Section~\ref{sec:prop}, we discussed that stochastic resampling induces less variance than stochastic encoding. Here, we aim to rigorously prove the validity of this statement.

\begin{lemma}
\label{thm:smallvariance}
Let $\tilde{\bm{z}}_t$ and $\hat{\bm{z}}_t$ denote the stochastically encoded and resampled image of $\hat{\bm{z}}_0(\bm{y})$, respectively. If $\text{VAR}(\bm{z}'_t) > 0$, then we have that $\text{VAR}(\hat{\bm{z}}_t) < \text{VAR}(\tilde{\bm{z}}_t)$.
\end{lemma}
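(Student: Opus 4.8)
The plan is to compute the two variances explicitly from the Gaussian forms supplied by Propositions~\ref{prop:stochastic_enc} and~\ref{lem:resample}, and then compare them by a direct algebraic manipulation. Throughout I would treat $\hat{\bm z}_0(\bm y)$ as a fixed quantity (it is the output of the optimization step), and regard the injected diffusion noise and the unconditional sample $\bm z'_t$ as the only sources of randomness. Writing $b := 1 - \bar{\alpha}_t$ and $s := \sigma_t^2$ keeps the bookkeeping light.

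First I would record the encoding variance. By Proposition~\ref{prop:stochastic_enc}, $\tilde{\bm z}_t \sim \mathcal{N}(\sqrt{\bar{\alpha}_t}\,\hat{\bm z}_0(\bm y), b\,\bm I)$, so immediately $\text{VAR}(\tilde{\bm z}_t) = b$. The substantive step is the resampling variance. The Gaussian in Proposition~\ref{lem:resample} has a mean that is an affine function of the \emph{random} vector $\bm z'_t$, namely $\frac{s\sqrt{\bar{\alpha}_t}\hat{\bm z}_0(\bm y) + b\,\bm z'_t}{s+b}$, together with an independent fluctuation of variance $\frac{sb}{s+b}$. I would therefore apply the law of total variance over $\bm z'_t$: the conditional-variance term contributes $\frac{sb}{s+b}$, while the variance of the conditional mean contributes $\bigl(\frac{b}{s+b}\bigr)^2 \text{VAR}(\bm z'_t)$. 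Identifying $\text{VAR}(\bm z'_t) = b$ (the natural level-$t$ noise scale, consistent with $\bm z'_t$ being a noised latent at time $t$) gives
\begin{align*}
\text{VAR}(\hat{\bm z}_t) = \frac{sb}{s+b} + \Bigl(\frac{b}{s+b}\Bigr)^2 b = \frac{b\,(s^2 + sb + b^2)}{(s+b)^2}.
\end{align*}
Subtracting from $\text{VAR}(\tilde{\bm z}_t) = b = \frac{b(s+b)^2}{(s+b)^2}$ and cancelling yields $\text{VAR}(\tilde{\bm z}_t) - \text{VAR}(\hat{\bm z}_t) = \frac{s b^2}{(s+b)^2}$, which is strictly positive whenever $b>0$ and $s>0$. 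The hypothesis $\text{VAR}(\bm z'_t) > 0$, i.e. $b = 1-\bar{\alpha}_t > 0$, is exactly what turns this into the claimed strict inequality.

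I expect the main obstacle to be the correct handling of the randomness in $\bm z'_t$: because the mean of the resampling distribution itself depends on $\bm z'_t$, one cannot simply read off the posterior covariance parameter $\frac{sb}{s+b}$ as the variance of $\hat{\bm z}_t$; the law of total variance is essential, and it introduces the cross term $\bigl(\frac{b}{s+b}\bigr)^2\text{VAR}(\bm z'_t)$. The inequality holds precisely as long as this term does not overwhelm the contraction, equivalently $\text{VAR}(\bm z'_t) < s + b$; fixing the value of $\text{VAR}(\bm z'_t)$ (or bounding it by the level-$t$ noise scale) is the one modelling assumption the argument rests on, and it is where I would be most careful to state the convention precisely.
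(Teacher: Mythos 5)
Your proposal is correct, but it takes a genuinely different route from the paper. The paper reads the two variances as the variances of the two Gaussian sampling distributions themselves, i.e.\ it conditions on everything (including $\bm{z}'_t$) and simply compares the covariance parameters of Propositions~\ref{prop:stochastic_enc} and~\ref{lem:resample}: $1-\bar{\alpha}_t$ for stochastic encoding versus the harmonic-type combination $\frac{\sigma_t^2(1-\bar{\alpha}_t)}{\sigma_t^2+(1-\bar{\alpha}_t)} = \bigl(\frac{1}{1-\bar{\alpha}_t}+\frac{1}{\sigma_t^2}\bigr)^{-1}$ for resampling, and the strict inequality is one line of algebra with no appeal to the law of total variance and no assumption on $\text{VAR}(\bm{z}'_t)$ itself. (Incidentally, the paper's displayed proof swaps the symbols $\hat{\bm{z}}_t$ and $\tilde{\bm{z}}_t$ relative to the lemma statement; your assignment is the consistent one.) You instead marginalize over $\bm{z}'_t$, which is arguably the more faithful reading of the hypothesis $\text{VAR}(\bm{z}'_t)>0$, and the law-of-total-variance decomposition you carry out is exactly right: the conditional mean is affine in $\bm{z}'_t$ with coefficient $\frac{1-\bar{\alpha}_t}{\sigma_t^2+(1-\bar{\alpha}_t)}$, so the extra term $\bigl(\frac{1-\bar{\alpha}_t}{\sigma_t^2+(1-\bar{\alpha}_t)}\bigr)^2\text{VAR}(\bm{z}'_t)$ must be added, and one cannot just read off the posterior covariance. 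What your route buys is the honest observation that, under the marginal reading, the conclusion is \emph{not} automatic: it holds iff $\text{VAR}(\bm{z}'_t) < \sigma_t^2 + (1-\bar{\alpha}_t)$, a condition the paper never has to confront because of its conditional reading. What it costs is the extra modelling assumption $\text{VAR}(\bm{z}'_t)=1-\bar{\alpha}_t$ (which identifies the variance of $\bm{z}'_t$ with the forward-process noise level and ignores the variance of the data distribution itself) together with the strict requirement $\sigma_t^2>0$; neither is stated in the lemma, so if you keep your route you should state that convention explicitly, whereas the paper's conditional comparison closes the argument with no such assumption.
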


\begin{theorem}
\label{thm:unbias}
If ${\hat{\bm{z}}}_0(\bm{y})$ is measurement-consistent such that $\bm{y} = \mathcal{A}(\mathcal{D}({\hat{\bm{z}}}_0(\bm{y})))$, i.e. $\hat{\bm{z}}_0= \hat{\bm{z}}_0(\bm{z}_t) = \hat{\bm{z}}_0(\bm{y})$, then stochastic resample is unbiased such that $\mathbb{E}[\hat{\bm{z}}_t |  \bm{y}] = \mathbb{E}[\bm{z}'_t]$.
\end{theorem}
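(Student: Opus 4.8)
The plan is to reduce the statement to the conditional mean of the stochastic-resampling Gaussian from Proposition~\ref{lem:resample} and then peel off the layers of randomness with the tower property. First I would read off the mean of that Gaussian,
\begin{align*}
    \mathbb{E}[\hat{\bm{z}}_t \mid \bm{z}'_t, \hat{\bm{z}}_0(\bm{y}), \bm{y}] = \frac{\sigma_t^2 \sqrt{\bar{\alpha}_t}\,\hat{\bm{z}}_0(\bm{y}) + (1-\bar{\alpha}_t)\,\bm{z}'_t}{\sigma_t^2 + (1-\bar{\alpha}_t)}.
\end{align*}
Since $\hat{\bm{z}}_0(\bm{y})$ is a deterministic function of $\bm{y}$ (it is the minimizer of the hard-data-consistency objective \eqref{eq:hard_data_consistency}), taking the expectation of this conditional mean over the remaining randomness given $\bm{y}$ and using linearity gives
\begin{align*}
    \mathbb{E}[\hat{\bm{z}}_t \mid \bm{y}] = \frac{\sigma_t^2 \sqrt{\bar{\alpha}_t}\,\hat{\bm{z}}_0(\bm{y}) + (1-\bar{\alpha}_t)\,\mathbb{E}[\bm{z}'_t \mid \bm{y}]}{\sigma_t^2 + (1-\bar{\alpha}_t)}.
\end{align*}
The whole theorem then collapses to a single identity, $\sqrt{\bar{\alpha}_t}\,\hat{\bm{z}}_0(\bm{y}) = \mathbb{E}[\bm{z}'_t]$: once this holds, both terms in the numerator equal $\mathbb{E}[\bm{z}'_t]$ scaled by $\sigma_t^2$ and $(1-\bar{\alpha}_t)$ respectively, and the convex combination telescopes to $\mathbb{E}[\bm{z}'_t]$.

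To establish that identity I would invoke Tweedie's formula, which states $\hat{\bm{z}}_0(\bm{z}_t) = \mathbb{E}[\bm{z}_0 \mid \bm{z}_t]$. The measurement-consistency hypothesis asserts $\hat{\bm{z}}_0(\bm{z}_t) = \hat{\bm{z}}_0(\bm{y})$, so applying the tower property over $\bm{z}_t$ gives $\hat{\bm{z}}_0(\bm{y}) = \mathbb{E}\big[\mathbb{E}[\bm{z}_0 \mid \bm{z}_t]\big] = \mathbb{E}[\bm{z}_0]$. Finally, treating $\bm{z}'_t$ as the unconditional sample at time $t$, its mean matches the forward-process marginal $\mathbb{E}[\bm{z}'_t] = \sqrt{\bar{\alpha}_t}\,\mathbb{E}[\bm{z}_0]$ because the noise injected in the DDIM/encoding step is zero-mean (this is the same $\sqrt{\bar{\alpha}_t}\hat{\bm{z}}_0(\bm{y})$ that already appears as the mean of stochastic encoding in Proposition~\ref{prop:stochastic_enc}). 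Combining the two equalities yields $\mathbb{E}[\bm{z}'_t] = \sqrt{\bar{\alpha}_t}\,\hat{\bm{z}}_0(\bm{y})$, and substituting back completes the argument.

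The main obstacle I anticipate is the bookkeeping of conditioning in that middle identity. I must justify replacing $\mathbb{E}[\bm{z}'_t \mid \bm{y}]$ by the unconditional mean $\mathbb{E}[\bm{z}'_t]$ written in the statement, which relies on viewing $\bm{z}'_t$ as an unconditional draw (independent of $\bm{y}$) distributed according to the forward marginal at time $t$; and I must convert Tweedie's \emph{pointwise} identity $\hat{\bm{z}}_0(\bm{z}_t)=\mathbb{E}[\bm{z}_0\mid\bm{z}_t]$ into the \emph{averaged} statement $\hat{\bm{z}}_0(\bm{y})=\mathbb{E}[\bm{z}_0]$, which is exactly where the idealized exact-consistency hypothesis $\hat{\bm{z}}_0(\bm{z}_t)=\hat{\bm{z}}_0(\bm{y})$ does the work. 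The remaining algebra—the cancellation of the convex combination—is routine.
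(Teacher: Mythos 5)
Your proof is correct and follows essentially the same route as the paper's: both reduce the claim, via the tower property and the posterior mean from Proposition~\ref{lem:resample}, to the convex combination $\gamma\sqrt{\bar{\alpha}_t}\,\hat{\bm{z}}_0(\bm{y}) + (1-\gamma)\,\mathbb{E}[\bm{z}'_t]$ and then to the identity $\sqrt{\bar{\alpha}_t}\,\hat{\bm{z}}_0(\bm{y}) = \mathbb{E}[\bm{z}'_t]$. The only difference is cosmetic: you derive that identity from Tweedie's posterior-mean form together with the tower property and the forward-marginal relation $\mathbb{E}[\bm{z}_t]=\sqrt{\bar{\alpha}_t}\,\mathbb{E}[\bm{z}_0]$, whereas the paper writes Tweedie's formula in score form and invokes the vanishing of the expected score, $\mathbb{E}[\nabla\log p(\bm{z}'_t)]=\bm{0}$ --- the same fact in different notation.
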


% \begin{proof}(Sketch)
% We have that $\mathbb{E}[\hat{\bm{z}}_t | \bm{y}] = \mathbb{E}_{\bm{z}'_t}[\mathbb{E}_{\bm{\hat{z}}_0}[\mathbb{E}_{\hat{\bm{z}}_t}[\hat{\bm{z}}_t | \bm{z}'_t, \hat{\bm{z}}_0, \bm{y}]]]$. By using Lemma~\ref{lem:resample} for $\mathbb{E}_{\hat{\bm{z}}_t}[\hat{\bm{z}}_t | \bm{z}'_t, \hat{\bm{z}}_0, \bm{y}]$, we have $\mathbb{E}_{\bm{z}'_t}[\mathbb{E}_{\hat{\bm{z}}_0}  [\sqrt{\bar{\alpha}_t} \bm{\hat{z}}_0 | \bm{z}'_t, \bm{y}]] = \mathbb{E}[\bm{z}'_t] - (1 - \bar{\alpha}_t)\mathbb{E}_{\bm{z}'_t}[\nabla \log p(\bm{z}'_{t})]$. By the chain rule, we have that $\mathbb{E}_{\bm{z}'_t}[\nabla \log p(\bm{z}'_{t})] = 0$, which gives the desired result.
% \end{proof}

These two results, Lemma~\ref{thm:smallvariance} and Theorem~\ref{thm:unbias}, prove the benefits of stochastic resampling. At a high-level, these proofs rely on the fact the posterior distributions of both stochastic encoding and resampling are Gaussian and compare their respective means and variances.
In the following result, we characterize the variance induced by stochastic resampling, and show that as $t \to 0$, the variance decreases, giving us a reconstructed image that is of better quality.

\begin{theorem}
\label{thm:z0variance}
 Let $\bm{z}_0$ denote a sample from the data distribution and $\bm{z}_t$ be a sample from the noisy perturbed distribution at time $t$. Then,
 \begin{align*}
     \text{Cov}(\bm{z}_0 | \bm{z}_t) = \frac{(1 - \bar{\alpha}_t)^2}{\bar{\alpha}_t}\nabla^2_{\bm{z}_t}\log p_{\bm{z}_t}(\bm{z}_t) + \frac{1 - \bar{\alpha}_t}{\bar{\alpha}_t}\bm{I}.
 \end{align*}
\end{theorem}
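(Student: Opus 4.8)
The plan is to prove this via the second-order analogue of Tweedie's formula, exploiting the fact that the forward process is a Gaussian convolution. Writing $a = \sqrt{\bar{\alpha}_t}$ and $\sigma^2 = 1-\bar{\alpha}_t$, the diffusion kernel is $p(\bm{z}_t \mid \bm{z}_0) = \mathcal{N}(a\bm{z}_0, \sigma^2\bm{I})$, so the marginal is the convolution $p(\bm{z}_t) = \int p(\bm{z}_t \mid \bm{z}_0)\,p(\bm{z}_0)\,d\bm{z}_0$. The key observation is that for this Gaussian kernel $\nabla_{\bm{z}_t}p(\bm{z}_t \mid \bm{z}_0) = -\sigma^{-2}(\bm{z}_t - a\bm{z}_0)\,p(\bm{z}_t\mid\bm{z}_0)$, so that differentiating under the integral sign converts spatial derivatives of $p(\bm{z}_t)$ into posterior moments of $\bm{z}_0$ given $\bm{z}_t$.

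First I would differentiate once to recover the first-order identity $\nabla_{\bm{z}_t}\log p(\bm{z}_t) = \sigma^{-2}\big(a\,\mathbb{E}[\bm{z}_0\mid\bm{z}_t]-\bm{z}_t\big)$, which is exactly Tweedie's formula and serves as a consistency check against the posterior-mean expression used earlier in the paper. Next I would differentiate a second time, using $\nabla^2\log p = \nabla^2 p/p - (\nabla\log p)(\nabla\log p)^\top$ together with the kernel identity $\nabla^2_{\bm{z}_t}p(\bm{z}_t\mid\bm{z}_0) = p(\bm{z}_t\mid\bm{z}_0)\big(\sigma^{-4}(\bm{z}_t-a\bm{z}_0)(\bm{z}_t-a\bm{z}_0)^\top - \sigma^{-2}\bm{I}\big)$. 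The term $\nabla^2 p/p$ then becomes $\sigma^{-4}\,\mathbb{E}[(\bm{z}_t-a\bm{z}_0)(\bm{z}_t-a\bm{z}_0)^\top\mid\bm{z}_t] - \sigma^{-2}\bm{I}$, while the squared-score term equals $\sigma^{-4}(\bm{z}_t-a\,\mathbb{E}[\bm{z}_0\mid\bm{z}_t])(\bm{z}_t-a\,\mathbb{E}[\bm{z}_0\mid\bm{z}_t])^\top$.

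Subtracting the two expressions, the raw second moment minus the outer product of the means collapses into a genuine conditional covariance: the difference is precisely $\sigma^{-4}\,\text{Cov}(a\bm{z}_0 - \bm{z}_t \mid \bm{z}_t) = \sigma^{-4}a^2\,\text{Cov}(\bm{z}_0\mid\bm{z}_t)$, since $\bm{z}_t$ is constant under the conditional law and only scales out. This gives $\nabla^2_{\bm{z}_t}\log p(\bm{z}_t) = \sigma^{-4}a^2\,\text{Cov}(\bm{z}_0\mid\bm{z}_t) - \sigma^{-2}\bm{I}$, and solving for the covariance then resubstituting $a^2=\bar{\alpha}_t$ and $\sigma^2=1-\bar{\alpha}_t$ yields the claimed identity.

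The main obstacle is bookkeeping rather than conceptual. I must justify the interchange of gradient and integral (via dominated convergence, under mild integrability/decay assumptions on $p(\bm{z}_0)$ so that the marginal is smooth and the moment integrals converge), and I must track the matrix outer-product algebra carefully so that the moment terms recombine into a central covariance rather than a raw second moment. The single nontrivial recognition step is seeing that $\nabla^2 p/p - (\nabla\log p)(\nabla\log p)^\top$ produces exactly the central (covariance) form once the factor $a^2$ is extracted.
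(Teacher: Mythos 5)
Your proof is correct: the identity $\nabla^2_{\bm{z}_t}\log p(\bm{z}_t) = \sigma^{-4}a^2\,\mathrm{Cov}(\bm{z}_0\mid\bm{z}_t) - \sigma^{-2}\bm{I}$ that you derive, with $a^2=\bar{\alpha}_t$ and $\sigma^2=1-\bar{\alpha}_t$, rearranges exactly to the claimed covariance formula, and each intermediate step (the kernel gradient identity, the quotient $\nabla^2 p/p$ as a posterior second moment, and the collapse of raw moment minus outer product of means into $a^2\,\mathrm{Cov}(\bm{z}_0\mid\bm{z}_t)$) checks out. The underlying idea is the same as the paper's --- both are second-order Tweedie arguments --- but the execution differs. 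The paper rescales to $\bar{\bm{z}}_t = \frac{\sqrt{\bar{\alpha}_t}}{1-\bar{\alpha}_t}\bm{z}_t$ so that the Gaussian kernel takes exponential-family canonical form $p(\bar{\bm{z}}_t\mid\bm{z}_0)=p_0(\bar{\bm{z}}_t)\exp\{\bar{\bm{z}}_t^\top\bm{z}_0-\tfrac{\bar{\alpha}_t}{2(1-\bar{\alpha}_t)}\|\bm{z}_0\|^2\}$, invokes the classical Tweedie identities $\mathbb{E}[\bm{z}_0\mid\bar{\bm{z}}_t]=\nabla\lambda$ and $\mathrm{Cov}(\bm{z}_0\mid\bar{\bm{z}}_t)=\nabla^2\lambda$ for $\lambda=\log p(\bar{\bm{z}}_t)-\log p_0(\bar{\bm{z}}_t)$, and then unwinds the change of variables by the chain rule. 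You instead differentiate the convolution directly twice and recombine, which amounts to proving the second-order Tweedie formula from scratch in the Gaussian case. Your route is more self-contained and avoids the rescaling bookkeeping (which is where the paper's proof is most error-prone); the paper's route generalizes immediately to any exponential-family kernel and makes the connection to the standard statement of Tweedie's formula explicit. Your one remaining obligation --- justifying differentiation under the integral sign --- is the same regularity assumption the paper implicitly makes (it adds boundedness of the score in the restated theorem), so nothing substantive is missing.
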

By Theorem~\ref{thm:z0variance}, noice that since as $\alpha_t$ is an increasing sequence that converges to 1 as $t$ decreases,
the variance between the ground truth $\bm{z}_0$ and the estimated $\hat{\bm{z}}_0$
decreases to 0 as $t \to 0$, assuming that
$\nabla^2_{\bm{z}_t}\log p_{\bm{z}_t}(\bm{z}_t) < \infty$.
% As a result, ReSample is able to pick up better quality estimates of $\hat{\bm{z}}_0$ as t decreases. 
%As a result, we incorporate both the data(measurement) consistency and prior consistency in our proposed ReSample algorithm.
%Lemma~\ref{thm:smallvariance} proves that we can both effectively reduce and control the variance of our measurement-consistent sample. 
Following our theory, we empirically show that stochastic resampling can reconstruct signals that are less noisy than stochastic encoding, as shown in the next section.\let\thefootnote\relax\footnotetext{*We have updated the baseline results for PSLD. More details are provided in the Appendix (Section~\ref{sec:psld_baseline}).}

\section{Experiments}
\label{sec:experiments}

\setlength{\tabcolsep}{5pt}
\begin{table}[t!]
\begin{center}
\resizebox{\textwidth}{!}{%
\begin{tabular}{l|lll|lll}
% \hline\noalign{\smallskip}
\hline
 \multirow{2}{*}{Method} & \multicolumn{3}{c|}{Super Resolution $4\times$} & \multicolumn{3}{c}{Inpainting (Random $70\%$)} \\

& LPIPS$\downarrow$ & PSNR$\uparrow$ & SSIM$\uparrow$  & LPIPS$\downarrow$ & PSNR$\uparrow$ & SSIM$\uparrow$  \\

\hline
DPS~\cite{dps} & $0.173\pm 0.04$ & $28.41 \pm 2.20$ & $0.782 \pm 0.06$ & $\underline{0.102} \pm 0.02$ & $\underline{32.48} \pm 2.30$ & $\underline{0.899} \pm 0.03$ \\
MCG~\cite{mcg}  & $0.193 \pm 0.03$ & $25.92 \pm 2.35$ & $0.740 \pm 0.05$ &$0.134 \pm 0.03$ & $29.53 \pm 2.70$ & $0.847 \pm 0.05$\\
ADMM-PnP~\cite{plugmri} & $0.304 \pm 0.04$ & $21.08 \pm 3.13$ & $0.631 \pm 0.11$   & $0.627 \pm 0.07$ & $15.40 \pm 2.09$ & $0.342 \pm 0.09$ \\
% \hhline{~-----}
% \hline
  DDRM~\cite{ddrm} & $0.151 \pm 0.03$   & $\underline{29.49} \pm 1.93$  & $0.817 \pm 0.05$ & $0.166 \pm 0.03$ & $27.69 \pm 1.54$& $0.798 \pm 0.04$\\

  DMPS~\cite{meng2022diffusion} & $\underline{0.147} \pm 0.03$ & $28.48 \pm 1.92$ & $\underline{0.811} \pm 0.05$ & $0.175 \pm 0.03$ & $28.84 \pm 1.65$  & $0.826 \pm 0.03$ \\
\hline
 Latent-DPS &  $0.272 \pm 0.05$& $26.83 \pm 2.00$  & $0.690 \pm 0.07$  & $0.226 \pm 0.04$ & $26.23 \pm 1.84$ & $0.703 \pm 0.07$\\
  PSLD-LDM$^*$~\cite{rout2023solving}  & $0.209 \pm 0.10$  & $27.61 \pm 2.95$  & $0.704 \pm 0.17$   &$0.260 \pm 0.08$ & $27.07 \pm 2.45$ & $0.689 \pm 0.11 $ \\
\hline

\hline
  ReSample (Ours) & $\bm{0.144} \pm 0.029$ & $\bm{30.45} \pm 2.09$  & $\bm{0.832} \pm 0.05$ & $\bm{0.082} \pm 0.02$ & $\bm{32.77} \pm 2.23$ & $\bm{0.903} \pm 0.03$\\
\hline
\end{tabular}}
\end{center}
% \vspace{-8pt}
\caption{\textbf{Quantitative results of super resolution and inpainting on the CelebA-HQ dataset.} Input images have an additive Gaussian noise with $\sigma_{\bm{y}} = 0.01$. Best results are in bold and second best results are underlined.}
\label{table:celeb1}
\vspace{-12pt}
\end{table}

\setlength{\tabcolsep}{5pt}
\begin{table}[t!]
\begin{center}
\resizebox{\textwidth}{!}{%
\begin{tabular}{l|lll|lll}
% \hline\noalign{\smallskip}
\hline
 \multirow{2}{*}{Method} & \multicolumn{3}{c|}{Nonlinear Deblurring} & \multicolumn{3}{c}{Gaussian Deblurring}  \\

& LPIPS$\downarrow$ & PSNR$\uparrow$ & SSIM$\uparrow$  & LPIPS$\downarrow$ & PSNR$\uparrow$ & SSIM$\uparrow$ \\

\hline
 DPS~\cite{dps} & $0.230 \pm 0.065$ & $\underline{26.81} \pm 2.84$ & $\underline{0.720} \pm 0.077$ & $0.175 \pm 0.03$ &  $28.36 \pm 2.12$ & $0.772 \pm 0.07$  \\
% \hhline{~-----}
% \hline

  MCG~\cite{mcg}  &-& -& - & $0.517 \pm 0.06$&$15.85 \pm 1.08$& $0.536 \pm 0.08$ \\

  ADMM-PnP~\cite{plugmri} &$0.499 \pm 0.073$ & $16.17 \pm 4.01$& $0.359 \pm 0.140$ & $0.289 \pm 0.04$ & $20.98 \pm 4.51$  & $0.602 \pm 0.15$  \\

  DDRM~\cite{ddrm}  & - & -  & -  & $0.193 \pm 0.04$ & $26.88 \pm 1.96 $ & $0.747 \pm 0.07 $ \\

  DMPS~\cite{meng2022diffusion}  & - & -  & -  & $0.206 \pm 0.04$ & $26.45 \pm 1.83$ & $0.726 \pm 0.07$ \\

\hline
  Latent-DPS &  $\underline{0.225} \pm 0.04$& $26.18 \pm 1.73$  & $0.703 \pm 0.07$  & $0.205 \pm 0.04$ & $27.42 \pm 1.84$ & $0.729 \pm 0.07$ \\
  PSLD-LDM$^*$~\cite{rout2023solving}  & - & -  & -  & $0.323 \pm 0.09$ & $24.21 \pm 2.79$ & $0.548 \pm 0.12$ \\
\hline

\hline
  ReSample (Ours)  & $\bm{0.153} \pm 0.03$ & $\bm{30.18} \pm 2.21$  & $\bm{0.828} \pm 0.05$ & $\bm{0.148} \pm 0.04$ & $\bm{30.69} \pm 2.14$ & $\bm{0.832} \pm 0.05$ \\
\hline
\end{tabular}}
\end{center}
% \vspace{-8pt}
\caption{\textbf{Quantitative results of Gaussian and nonlinear deblurring on the CelebA-HQ dataset.} Input images have an additive Gaussian noise with $\sigma_{\bm{y}} = 0.01$. Best results are in bold and second best results are underlined. For nonlinear deblurring, some baselines are omitted, as they can only solve \emph{linear} inverse problems.}
\label{table:celeb2}
\vspace{-8pt}
\end{table}

We conduct experiments to solve both linear and nonlinear inverse problems on natural and medical images. We compare our algorithm to several state-of-the-art methods that directly use the pixel space: DPS~\cite{dps}, Manifold Constrained Gradients (MCG)~\cite{mcg}, Denoising Diffusion Destoration Models (DDRM)~\cite{ddrm}, Diffusion Model Posterior Sampling (DMPS)~\cite{meng2022diffusion}, 
and plug-and-play using ADMM (ADMM-PnP)~\cite{plugmri}. 
We also compare our algorithm to Latent-DPS and Posterior Sampling with Latent Diffusion (PSLD)~\cite{rout2023solving},
a concurrent work we recently notice also tackling latent diffusion models.
% \footnotemark\footnotetext{At the time of writing this paper, we were not aware of the work proposed by~\cite{fabian2023adapt}.}.
Various quantitative metrics are used for evaluation including Learned Perceptual Image Patch Similarity (LPIPS) distance, peak signal-to-noise-ratio (PSNR), and structural similarity index (SSIM).
Lastly, we conduct ablation study to compare the performance of stochastic encoding and our proposed stochastic resampling technique as mentioned in Section~\ref{sec:resample_technique}, and also demonstrate the memory efficiency gained by leveraging LDMs. \\
% We test our algorithm on both linear and nonlinear inverse problems on natural and medical images.
% For natural images we conducted experiments on FFHQ \cite{ffhq} and CelebA \cite{celeb}datasets. For medical images we tested sparse-view CT reconstruction on the AAPM \cite{ldct} dataset. We observe that ReSample achieves superior performance across a wide range of tasks. 
% We compare our algorithm to several state-of-the-art methods that directly use the pixel space –
% DPS [6], manifold constrained gradients (MCG) [9], and plug-and-play using ADMM (ADMMPnP) [43]. We also use Latent-DPS as a baseline to show the superiority of our algorithm in dealing
% with latent diffusion models. For quantitative comparison, we use several commonly used metrics:
% Learned Perceptual Image Patch Similarity (LPIPS) distance, peak signal-to-noise-ratio (PSNR),
% and structural similarity index (SSIM). an ablation study, we compare the performance of stochastic
% encoding and our proposed stochastic resampling as mentioned in Section 3. Lastly, we demonstrate
% the memory efficiency of our proposed algorithm gained by leveraging latent diffusion models.
% % 

\subsection{Experimental Results}
\paragraph{Experiments on Natural Images.} For the experiments on natural images, we use datasets FFHQ~\cite{ffhq}, CelebA-HQ~\cite{celeb}, and LSUN-Bedroom~\cite{lsun} with the image resolution of $256\times256\times3$. We take pre-trained latent diffusion models LDM-VQ4 trained on FFHQ and CelebA-HQ provided by Rombach et al.~\cite{stablediffusion} with autoencoders that yield images of size $64\times64\times3$, and DDPMs~\cite{ddpm} also trained on FFHQ and CelebA-HQ training sets. Then, we sample $100$ images from both the FFHQ and CelebA-HQ validation sets for testing evaluation. For computing quantitative results, all images are normalized to the range $[0, 1]$. All experiments had Gaussian measurement noise with standard deviation $\sigma_{\bm{y}} = 0.01$. Due to limited space, we put the results on FFHQ and details of the hyperparameters to the Appendix.

% that are also used by~\cite{dps} and~\cite{dhariwal2021diffusion}. 

%------------------------------------------

For linear inverse problems, we consider the following tasks: (1) Gaussian deblurring, (2) inpainting (with a random mask), and (3) super resolution. For Gaussian deblurring, we use a kernel with size $61 \times 61$ with standard deviation $3.0$. 
For super resolution, we use bicubic downsampling and a random mask with varying levels of missing pixels for inpainting. For nonlinear inverse problems, we consider nonlinear deblurring as proposed by Chung et al.~\cite{dps}.
The quantitative results are displayed in Tables~\ref{table:celeb1} and~\ref{table:celeb2}, with qualitative results in Figure~\ref{fig:linear_results}. In Tables~\ref{table:celeb1} and~\ref{table:celeb2},
we can see that ReSample significantly outperforms all of the baselines across all three metrics on the CelebA-HQ dataset. We also observe that ReSample performs better than or comparable to all baselines on the FFHQ dataset as shown in the Appendix.
Interestingly, we observe that PSLD~\cite{rout2023solving} seems to struggle in the presence of measurement noise, often resulting in reconstructions that underperform compared to pixel-based methods. Remarkably, our method excels in handling nonlinear inverse problems, further demonstrating the flexibility of our algorithm. We further demonstrate the superiority of ReSample for handling nonlinear inverse problems in Figure~\ref{fig:nonlinear_ablation}, where we show that we can consistently outperform DPS.

\begin{figure}[t!]
\centering
\includegraphics[width=1.0\textwidth]{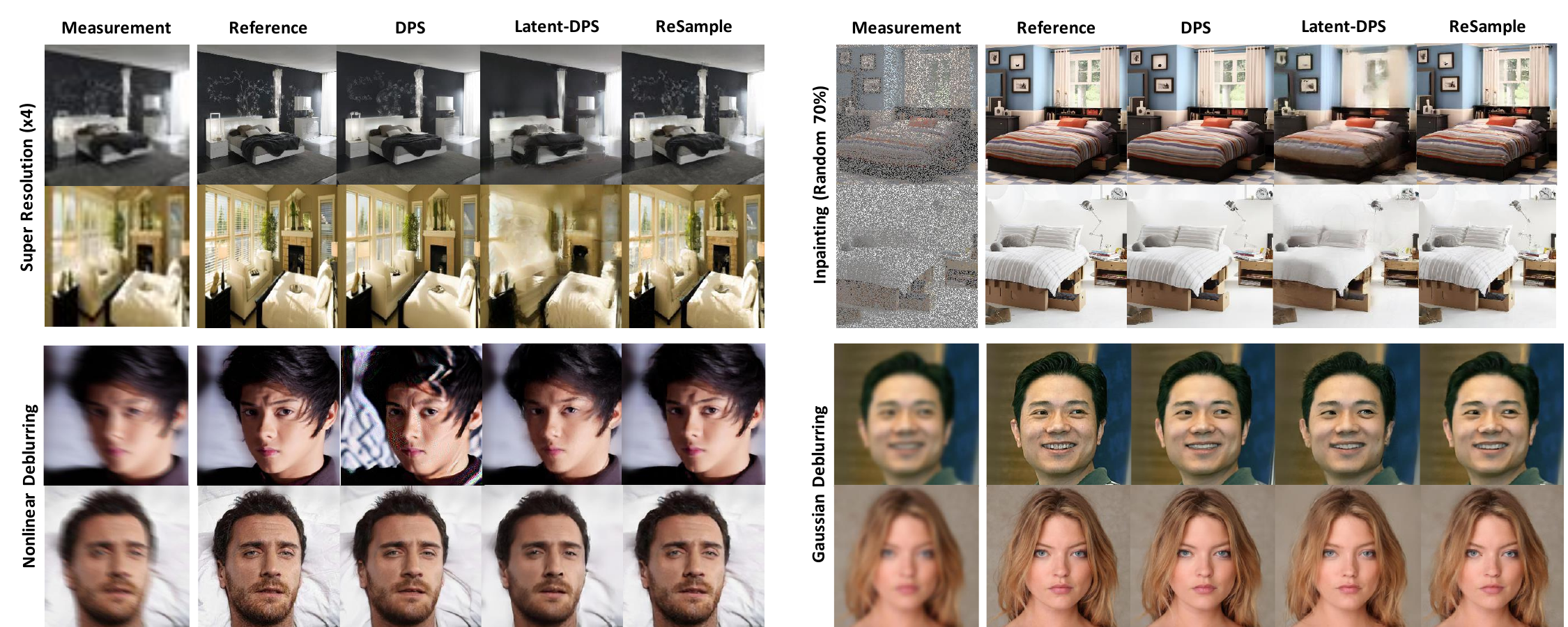}
\vspace{-12pt}
\caption{\textbf{Qualitative results of multiple tasks on the LSUN-Bedroom and CelebA-HQ datasets.} All inverse problems have Gaussian measurement noise with variance $\sigma_{\bm{y}} = 0.01$.}
\vspace{-8pt}
\label{fig:linear_results}
\end{figure}

\begin{figure}[t!]
\centering
\includegraphics[width=0.775\textwidth]{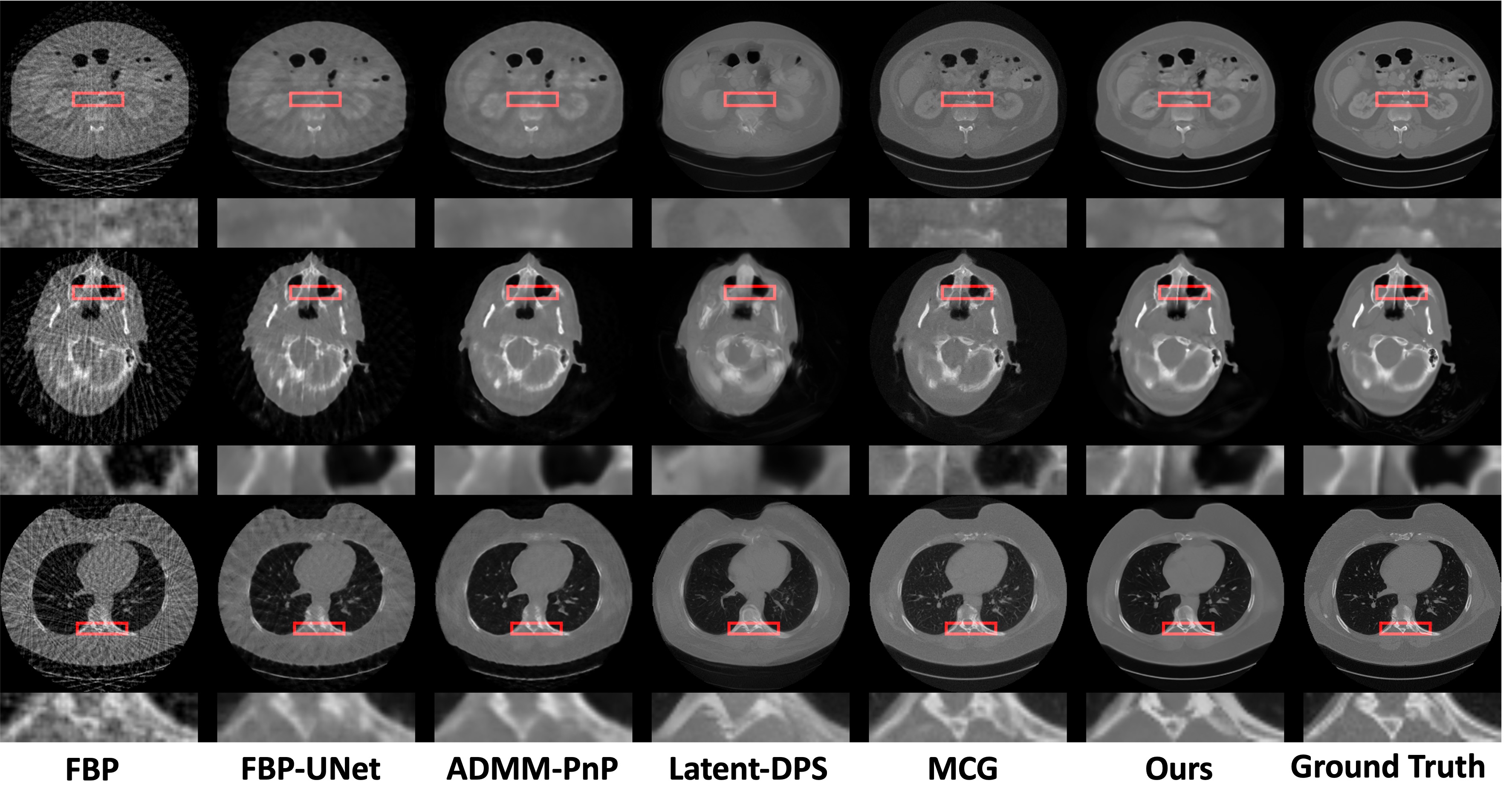}
\vspace{-8pt}
\caption{\textbf{Qualitative results of CT reconstruction on the LDCT dataset.} 
We annotate the critical image structures in a red box, and zoom in below the image.}
\label{fig:resample_Medical}
\vspace{-10pt}
\end{figure}

\setlength{\tabcolsep}{5pt}
\begin{table}[t!]
\begin{center}
\resizebox{1.0\textwidth}{!}{%
\begin{tabular}{l|ll|ll|ll}
% \hline\noalign{\smallskip}
\hline
 \multirow{2}{*}{Method} & \multicolumn{2}{c|}{Abdominal} & \multicolumn{2}{c|}{Head} & \multicolumn{2}{c}{Chest}\\
%\hhline{~------}

& PSNR$\uparrow$ & SSIM$\uparrow$ & PSNR$\uparrow$ & SSIM$\uparrow$ & PSNR$\uparrow$ & SSIM$\uparrow$\\

\hline
 Latent-DPS & 26.80 $\pm 1.09$ &  0.870 $\pm 0.026$ & 28.64 $\pm 5.38$& 0.893 $\pm 0.058$ & 25.67$\pm 1.14$  & 0.822 $\pm 0.033$\\
% \hhline{~-----}
% \hhline{~-----}
% \hline
MCG \cite{mcg} & 29.41 $\pm 3.14$& 0.857 $\pm 0.041$ & 28.28 $\pm 3.08$& 0.795 $\pm 0.116$ & 27.92 $\pm 2.48$ & 0.842 $\pm 0.036$\\

DPS~\cite{dps}  & 27.33 $\pm 2.68$& $0.715 \pm 0.031$ & 24.51 $\pm 2.77$ & 0.665 $\pm 0.058$ & 24.73 $\pm 1.84$ & 0.682 $\pm 0.113$\\
\hline
 PnP-UNet ~\cite{adaptive-inverse}& \underline{32.84} $\pm 1.29$&  \underline{0.942} $\pm 0.008$ & \underline{33.45} $\pm 3.25$ & \underline{0.945} $\pm 0.023$ & 29.67 $\pm 1.14$ & \underline{0.891} $\pm 0.011$\\

  FBP & 26.29 $\pm 1.24$& 0.727 $\pm 0.036$ & 26.71 $\pm 5.02$ & 0.725 $\pm 0.106$& 24.12 $\pm 1.14$& 0.655 $\pm 0.033$\\

 FBP-UNet \cite{unet} & 32.77 $\pm 1.21$ & 0.937 $\pm 0.013$& 31.95 $\pm 3.32$& 0.917 $\pm 0.048$ & \underline{29.78} $\pm 1.12$& 0.885 $\pm 0.016$\\
\hline
  ReSample (Ours) & \textbf{35.91} $\pm 1.22$ &  \textbf{0.965} $\pm 0.007$& \textbf{37.82} $\pm 5.31$ & \textbf{0.978} $\pm 0.014$ & \textbf{31.72} $\pm 0.912$& \textbf{0.922} $\pm 0.011$\\
\hline

\end{tabular}}
\end{center}
% \vspace{-8pt}
\caption{\textbf{Quantitative results of CT reconstruction on the LDCT dataset}. Best results are in bold and second best results are underlined.}
\label{table:ct_errorbar}
\vspace{-8pt}
\end{table}

\paragraph{Experiments on Medical Images.} 
For experiments on medical images, we fine-tune LDMs on 2000 2D computed tomography (CT) images with image resolution of $256\times256$, randomly sampled from the AAPM LDCT dataset~\cite{ldct} of 40 patients, and test on the 300 2D CT images from the remaining 10 patients. We take the FFHQ LDM-VQ4 model provided by Rombach et al.~\cite{stablediffusion} as pre-trained latent diffusion model in this experimental setting followed by 100K fine-tuning iterations. Then, we simulate CT measurements (sinograms) with a parallel-beam geometry using 25 projection angles equally distributed across 180 degrees using the~\texttt{torch-radon} package~\cite{torch_radon}. Following the natural images, the sinograms were perturbed with additive Gaussian measurement noise with $\sigma_{\bm{y}} = 0.01$. Along with the baselines used in the natural image experiments, we also compare to the following methods:
(1) Filtered Backprojection (FBP), which is a standard conventional CT reconstruction technique, and (2) FBP-UNet, a supervised learning method that maps the FBP reconstructed images to ground truth images using a UNet model.
For FBP-Unet and PnP-UNet, we trained a model on 3480 2D CT from the training set. For MCG and DPS, we used the pre-trained checkpoints provided by Chung et al.~\cite{mcg}. We present visual reconstructions with highlighted critical anatomic structures in Figure~\ref{fig:resample_Medical} and quantitative results in Table~\ref{table:ct_errorbar}. Here, we observe that our algorithm outperforms all of the baselines in terms of PSNR and SSIM. Visually, we also observe that our algorithm is able to reconstruct smoother images that have more accurate and sharper details. %Although MCG can also reconstruct similar sharp details, it often incurs a lot of noise, which may not be acceptable for more realistic medical imaging settings. 

\subsection{Ablation Studies}

\paragraph{Effectiveness of the Resampling Technique.}

Here, we validate our theoretical results by conducting ablation studies on stochastic resampling. Specifically, we perform experiments on the LSUN-Bedroom and CelebA-HQ datasets with tasks of Gaussian deblurring and super-resolution. As shown in Figure~\ref{fig:resampling_encoding}, we observe that stochastic resampling reconstructs smoother images with higher PSNRs compared to stochastic encoding, corroborating our theory.

\vspace{-6pt}
\paragraph{Effectiveness of Hard Data Consistency.} 
In Figure~\ref{fig:nonlinear_ablation}, we perform an ablation study on the ReSample frequency on CT reconstruction. This observation is in line with what we expect intuitively, as more ReSample time steps (i.e., more data consistency) lead to more accurate reconstructions.
More specifically, we observe in Figure~\ref{fig:nonlinear_ablation} (right) that if we perform more iterations of ReSample, we obtain reconstructions with higher PSNR.

\begin{wraptable}{r}{10cm}
\vspace{-14pt}
\caption{Memory Usage of Different Methods for Gaussian Deblurring on the FFHQ Dataset}
\vspace{-6pt}
\label{table:memory256}
\resizebox{0.6\textwidth}{!}{%
\begin{tabular}{l | l|l | l | l}
\hline
Model & Algorithm & Model Only & Memory Increment & Total \\
\hline
DDPM & DPS & \textbf{1953MB} & +3416MB (175\%) & 5369MB \\
& MCG &  & +3421MB (175\%) & 5374MB \\
& DMPS &  & +5215MB (267 \%) & 7168MB \\
& DDRM & & +18833MB (964 \%) & 20786MB \\
\hline
LDM & PSLD & 3969MB & +5516MB (140\%) & 9485MB \\ 
& ReSample &  & \textbf{+1040MB (26.2\%)} & \textbf{5009MB}  \\
\hline
\end{tabular}}
\label{table:memory}
\end{wraptable}

\begin{figure}[t!]
\centering
\includegraphics[width=0.8\textwidth]{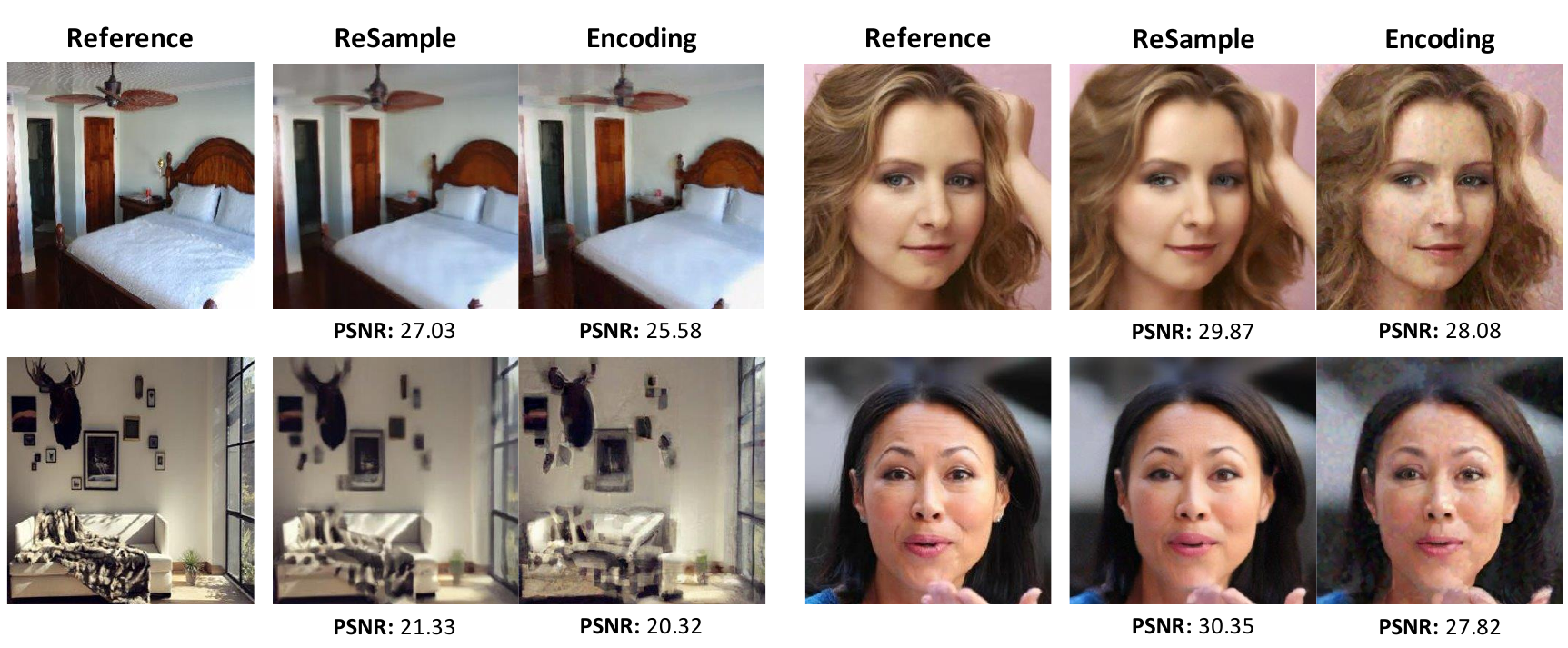}
\caption{\textbf{Effectiveness of our resampling technique compared to stochastic encoding.} Results were demonstrated on the LSUN-Bedroom and CelebA-HQ datasets with measurement noise of $\sigma_{\bm{y}} = 0.05$ to highlight the effectiveness of stochastic resample.}
\vspace{-10pt}
\label{fig:resampling_encoding}
\end{figure}

\begin{figure}[t!]
    \centering
    \includegraphics[width=\textwidth]{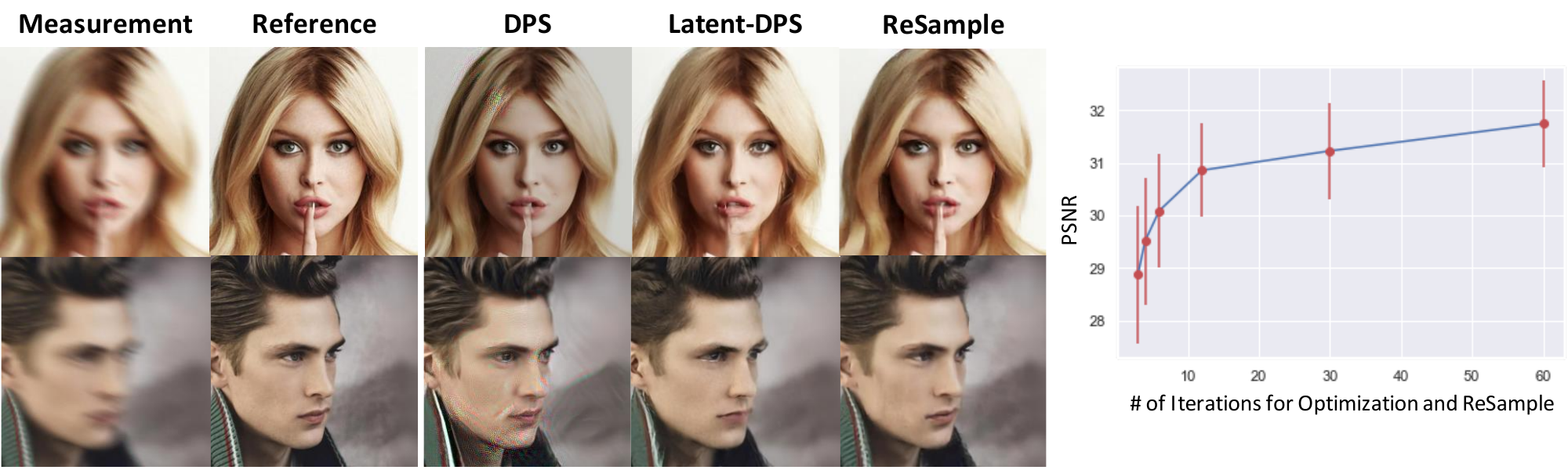}
    \caption{\textbf{Additional results on the performance of ReSample.} Left: Results on nonlinear deblurring highlighting the performance of ReSample. Right: Ablation study on the ReSample frequency on the performance.}
    \label{fig:nonlinear_ablation}
\end{figure}

%\vspace{-6pt}
\paragraph{Memory Efficiency.}
To demonstrate the memory efficiency of our algorithm, we use the command \\\texttt{nvidia-smi} to monitor the memory consumption during solving an inverse problem, in comparison to our baselines. We present the memory usage for Gaussian deblurring on the FFHQ dataset in Table~\ref{table:memory}. Although the entire LDM models occupy more memory due to the autoencoders, our algorithm itself exhibits memory efficiency, resulting in lower overall memory usage. This underscores its potential in domains like medical imaging, where memory complexity plays a crucial role in feasibility.

\section{Conclusion}

In this paper, we propose \textit{ReSample}, an algorithm that can effectively leverage LDMs to solve \emph{general} inverse problems. We demonstrated that (1) our algorithm can reconstruct high-quality images compared to many baselines and (2) we can fine-tune existing models for medical imaging tasks and achieve an additional reduction in memory usage. One fundamental limitation of our method lies in the computational overhead of hard data consistency, which we leave as a significant challenge for future work to address and improve upon.

\section{Acknowledgement}
BS and LS acknowledges support from U-M MIDAS PODS Grant and U-M MICDE Catalyst Grant, and computing resource support from NSF ACCESS Program and Google Cloud Research Credits Program. This work used NCSA Delta GPU through allocation CIS230133 and ELE230011 from the Advanced Cyberinfrastructure Coordination Ecosystem: Services \& Support (ACCESS) program, which is supported by National Science Foundation grants \#2138259, \#2138286, \#2138307, \#2137603, and \#2138296. SMK and QQ acknowledge support from U-M START \& PODS grants, NSF CAREER CCF-2143904, NSF CCF-2212066, NSF CCF-2212326, ONR N00014-22-1-2529, AWS AI Award, and a gift grant from KLA.

\clearpage
% \bibliographystyle{alpha}
% \bibliography{refs.bib}

{\small 
% \bibliographystyle{unsrt}
% \bibliography{plain}
\bibliographystyle{ieeetr}
\bibliography{refs}
}

%\bibliography{refs}

\clearpage
\appendix
\onecolumn
\par\noindent\rule{\textwidth}{1pt}
\begin{center}
{\Large \bf Appendix}
\end{center}
\vspace{-0.1in}
\par\noindent\rule{\textwidth}{1pt}

In this section, we present additional experimental results to supplement those presented in the main paper. In Section~\ref{sec:additional_results}, we present additional qualitative and quantitative results to highlight the performance of ReSample. In Section~\ref{sec:hard_data_discussion}, we briefly discuss some of the implementation details regarding hard data consistency. In Section~\ref{sec:hyperparameters}, we outline all of the hyperparameters used to produce our results. In Section~\ref{sec:failure_dps}, we discuss some more reasons to why Latent-DPS fails to consistently accurately recover the underlying image. Lastly, in Section~\ref{sec:deferred_proofs}, we present our deferred proofs.

\section{Additional Results}
\label{sec:additional_results}
\subsection{Results on FFHQ}
Here, we provide additional quantitative results on the FFHQ dataset. Similar to the results in the main text on the CelebA-HQ dataset, ReSample outperforms the baselines across many different tasks.

\setlength{\tabcolsep}{3pt}
\begin{table}[h!]
\begin{center}
\resizebox{\textwidth}{!}{%
\begin{tabular}{l|lll|lll}
% \hline\noalign{\smallskip}
\hline
 \multirow{2}{*}{Method} & \multicolumn{3}{c|}{Super Resolution $4\times$} & \multicolumn{3}{c}{Inpainting (Random $70\%$)} \\

& LPIPS$\downarrow$ & PSNR$\uparrow$ & SSIM$\uparrow$  & LPIPS$\downarrow$ & PSNR$\uparrow$ & SSIM$\uparrow$  \\

\hline
DPS & $\underline{0.175} \pm 0.04$ & $\underline{28.47} \pm 2.09$ & $\underline{0.793} \pm 0.05$  & $\underline{0.106} \pm 0.03$ & $\bm{32.32} \pm 2.21$ & $\bm{0.897} \pm 0.04$  \\
MCG & $0.223 \pm 0.04$ & $23.74 \pm 3.18$  & $0.673 \pm 0.07$  & $0.178 \pm 0.03$ & $24.89 \pm 2.57$  & $0.731 \pm 0.06$  \\
ADMM-PnP & $0.303 \pm 0.03$ & $21.30 \pm 3.98$ & $0.760 \pm 0.04$   & $0.308 \pm 0.07$ & $15.87 \pm 2.09$ & $0.608 \pm 0.09$ \\
% \hhline{~-----}
% \hline
  DDRM & $0.257 \pm 0.03$   & $27.51 \pm 1.79$  & $0.753 \pm 0.05$ & $0.287 \pm 0.03$ & $24.97 \pm 1.51$& $0.680 \pm 0.05$\\

  DMPS & $0.181 \pm 0.03$ & $27.21 \pm 1.92$  & $0.766 \pm 0.06$ & $0.150 \pm 0.03$& $28.17 \pm 1.58$ & $0.814 \pm 0.04$\\
\hline
 Latent-DPS  & $0.344 \pm 0.05$  & $24.65\pm 1.77$  & $0.609 \pm 0.07$   & $0.270 \pm 0.05$ & $27.08 \pm 1.97$ & $0.727\pm 0.06$\\
  PSLD-LDM  & $0.267 \pm 0.10$ & $27.22 \pm 2.99$  & $0.705 \pm 0.13$   &$0.270 \pm 0.06$ & $25.61 \pm 2.02$ & $0.630 \pm 0.09$  \\
\hline

\hline
  ReSample (Ours) & $\bm{0.164} \pm 0.03$ & $\bm{28.90} \pm 1.86$ & $\bm{0.804} \pm 0.05$ & $\bm{0.099} \pm 0.02$ & $\underline{31.34} \pm 2.11$ & $\underline{0.890} \pm 0.03$ \\
\hline
\end{tabular}}
\end{center}
\caption{Comparison of quantitative results on inverse problems on the FFHQ dataset. Input images have an additive Gaussian noise with $\sigma_{\bm{y}} = 0.01$. Best results are in bold and second best results are underlined.}
\label{table:linear_quant}
\vspace{-8pt}
\end{table}

\setlength{\tabcolsep}{3pt}
\begin{table}[h!]
\begin{center}
\resizebox{\textwidth}{!}{%
\begin{tabular}{l|lll|lll}
% \hline\noalign{\smallskip}
\hline
 \multirow{2}{*}{Method} & \multicolumn{3}{c|}{Nonlinear Deblurring} & \multicolumn{3}{c}{Gaussian Deblurring}\\

& LPIPS$\downarrow$ & PSNR$\uparrow$ & SSIM$\uparrow$  & LPIPS$\downarrow$ & PSNR$\uparrow$ & SSIM$\uparrow$ \\

\hline
 DPS & $\underline{0.197} \pm 0.06$ & $\underline{27.13} \pm 3.11$ & $\underline{0.762}\pm 0.08$ & $\bm{0.169} \pm 0.04$ & $\underline{27.70} \pm 2.04$ &  $\underline{0.774} \pm 0.05$ \\
% \hhline{~-----}
% \hline

  MCG  & - & - &  - & $0.371 \pm 0.04$ & $25.33 \pm 1.74$ & $0.668 \pm 0.06$  \\

  ADMM-PnP & $0.424 \pm 0.03$ & $21.80 \pm 1.42$ & $0.497 \pm 0.05$ & $0.399 \pm 0.03$ & $21.23 \pm 3.01$  & $0.675 \pm 0.06$ \\

  DDRM & - & - & - & $0.299 \pm 0.039$ &  $26.51 \pm 1.67$ & $0.702 \pm 0.06$ \\

  DMPS& - & - & -  & $0.227 \pm 0.036$  &  $26.04 \pm 1.75$ & $0.699 \pm 0.066$  \\

\hline
  Latent-DPS& $0.319 \pm 0.05$ & $24.52\pm 1.75$ & $0.637 \pm 0.07$& $0.258 \pm 0.05$ & $25.98 \pm 1.66$ & $0.704 \pm 0.06$ \\
  PSLD-LDM  & - & - & - & $0.422 \pm 0.11$ & $20.08 \pm 2.97$  & $0.400 \pm 0.16$ \\
\hline

\hline
  ReSample (Ours)  & $\bm{0.171} \pm 0.04$ & $\bm{30.03} \pm 1.78$  & $\bm{0.838} \pm 0.04$&$\underline{0.201} \pm 0.04$ & $\bm{28.73} \pm 1.87$  & $\bm{0.794} \pm 0.05$   \\
\hline
\end{tabular}}
\end{center}
\caption{Comparison of quantitative results on inverse problems on the FFHQ dataset. Input images have an additive Gaussian noise with $\sigma_{\bm{y}} = 0.01$. Best results are in bold and second best results are underlined.}
\label{table:linear_quant}
\vspace{-8pt}
\end{table}

We observe that ReSample outperforms all baselines in nonlinear deblurring and super resolution, while demonstrating comparable performance to DPS \cite{dps} in Gaussian deblurring and inpainting. An interesting observation is that ReSample exhibits the largest performance gap from DPS in nonlinear deblurring and the smallest performance gap in random inpainting, mirroring the pattern we observed in the results of the CelebA-HQ dataset. This may suggest that ReSample performs better than baselines when the forward operator is more complex.

\subsection{Additional Results on Box Inpainting}
In this section, we present both qualitative and quantitative results on a more challenging image inpainting setting. More specifically, we consider the ``box'' inpainting setting, where the goal is to recover a ground truth image in which large regions of pixels are missing. We present our results in Figure~\ref{fig:box_inpainting} and Table~\ref{table:box_inpainting}, where we compare the performance of our algorithm to PSLD~\cite{rout2023solving} (latent-space algorithm) and DPS~\cite{dps} (pixel-space algorithm). Even in this challenging setting, we observe that ReSample can outperform the baselines, highlighting the effectiveness of our method.

\begin{table}[h!]
\begin{center}
\resizebox{0.4\textwidth}{!}{%
\begin{tabular}{l|lll}
% \hline\noalign{\smallskip}
\hline
 \multirow{2}{*}{Method} & \multicolumn{3}{c}{Box Inpainting}  \\

& LPIPS$\downarrow$ & PSNR$\uparrow$ & SSIM$\uparrow$  \\

\hline
DPS & $0.127$ & $22.85$ & $\underline{0.861}$  \\
% \hhline{~-----}
% \hline
DDRM & $\underline{0.120}$ & $\underline{24.33}$ & $0.860$\\

DMPS &$0.233$ & $22.01$ & $0.803$  \\
\hline
Latent-DPS  & $0.199$ & $23.14$ & $0.784$  \\
PSLD-LDM  & $0.201$ & $23.99$ & $0.787$ \\
\hline

\hline
  ReSample (Ours) & $\mathbf{0.093}$ & $\mathbf{24.67}$ & $\mathbf{0.892}$\\
\hline
\end{tabular}}
\end{center}
\caption{Comparison of quantitative results for box inpainting on CelebA-HQ dataset. Input images have an additive Gaussian noise with $\sigma_{\bm{y}} = 0.01$. Best results are in bold and second best results are underlined.}
\label{table:box_inpainting}
\vspace{-8pt}
\end{table}

\subsection{Additional Preliminary Results on High-Resolution Images}
In this section, we present techniques for extending our algorithm to address inverse problems with high-resolution images.
One straightforward method involves using arbitrary-resolution random noise as the initial input for the LDM. The convolutional architecture shared by the UNet in both the LDM and the autoencoder enables the generation of images at arbitrary resolutions, as demonstrated by \cite{stablediffusion}.
To illustrate the validity of this approach, we conducted a random-inpainting experiment with images of dimensions $(512\times 512\times 3)$ and report the results in Figure~\ref{fig:high_res}. In Figure~\ref{fig:high_res}, we observe that our method can achieve excellent performance even in this high-resolution setting. 
Other possible methods include obtaining an accurate pair of encoders and decoders that can effectively map higher-resolution images to the low-dimensional space, and then running our algorithm using the latent diffusion model. We believe that these preliminary results represent an important step towards using LDMs as generative priors for solving inverse problems for high-resolution images.

\begin{figure}[h!]
    \centering
    \includegraphics[width=0.9\textwidth]{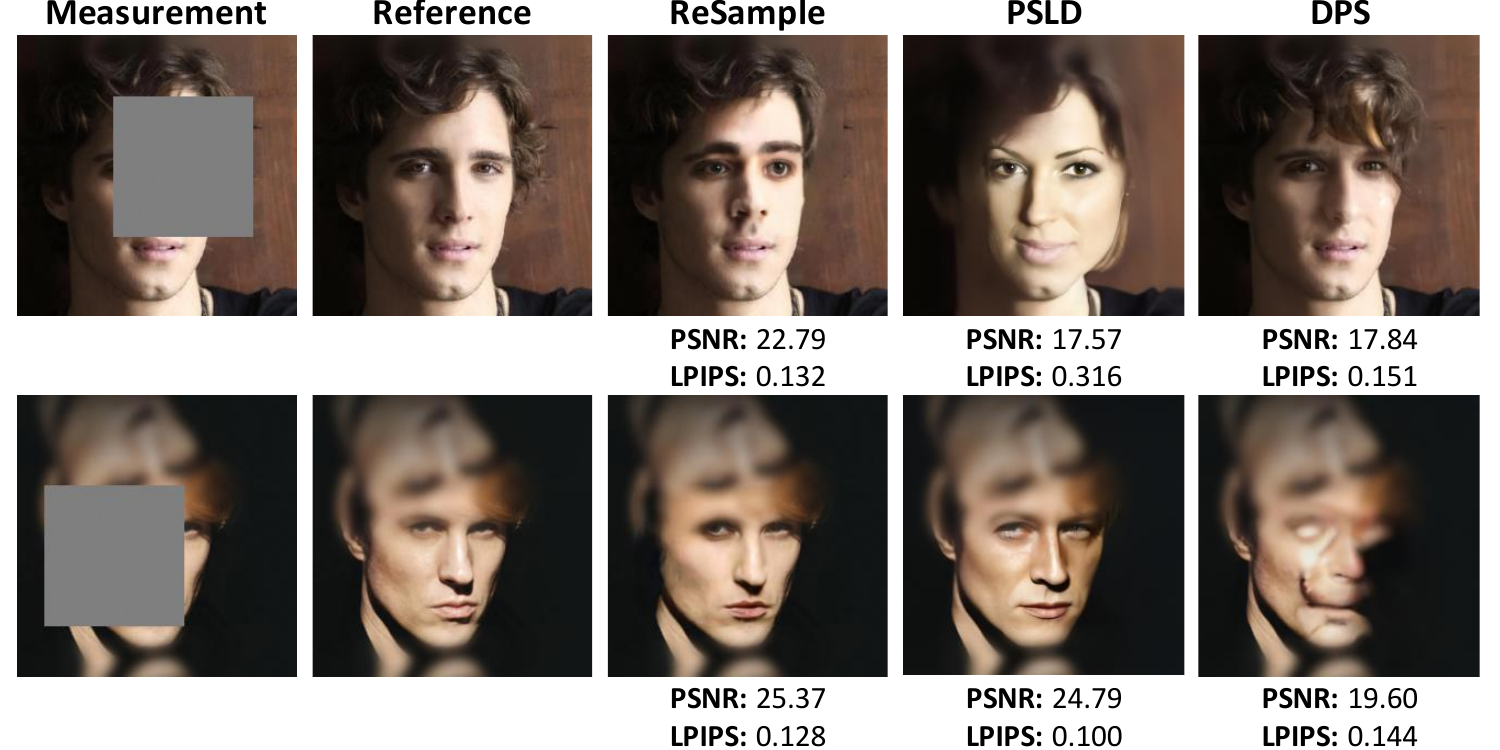}
    \caption{Qualitative results on box inpainting with measurement noise $\sigma_{\bm{y}} = 0.01$. These results highlight the effectiveness of ReSample on more difficult inverse problem tasks.}
    \label{fig:box_inpainting}
\end{figure}

\begin{figure}[h!]
    \centering
    \includegraphics[width=0.75\textwidth]{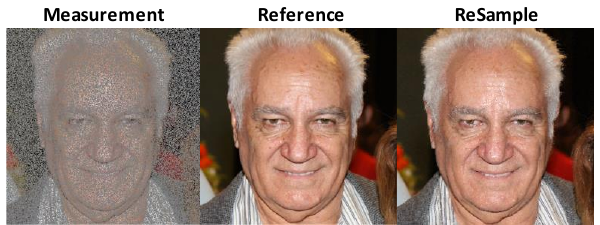}
    \caption{Additional results on 70\% random inpainting with high-resolution images ($512 \times 512 \times 3$) with measurement noise $\sigma_{\bm{y}} = 0.01$.}
    \label{fig:high_res}
\end{figure}

\subsection{Discussion on PSLD Baseline}
\label{sec:psld_baseline}

To better reflect the performance of PSLD~\cite{rout2023solving}, we re-ran several experiments with PSLD while collaborating with the original authors to fine-tune hyperparameters. Furthermore, as the work by Rout et al.~\cite{rout2023solving} initially employed the stable diffusion (SD) model rather than the LDM model, we have relabeled the baselines as ``PSLD-LDM''. We report the hyperparameters used to generate the new results, as well as the ones used to generate the previous results in Table~\ref{table:psld_hyperparameters}. The previous hyperparameters were chosen based on the implementation details provided by Rout et al. in Section B.1~\cite{rout2023solving}, where they stated that they used $\gamma=0.1$ and that the hyperparameters were available in the codebase. However, it has been brought to our attention that this was not the optimal hyperparameter. Therefore, we additionally tuned the parameter so that each baseline could obtain the best results.
We have also conducted additional experiments directly comparing to PSLD-LDM for inpainting tasks on the FFHQ 1K validation set and present the results in Table~\ref{table:psld_box_inpainting}. Throughout these results, we still observe that our algorithm largely outperforms the baselines, including PSLD.

\begin{table}[h!]
    \centering
    \begin{tabular}{c|c|c|c}
    \hline
      Hyperparameter~\cite{rout2023solving} & SR (FFHQ) & SR (CelebA-HQ) & Gaussian Deblurring (CelebA-HQ)  \\
    \hline
    Previous ($\gamma$) & $0.1$ & $0.1$ & $ 0.1$ \\
    \hline 
    New ($\gamma$) & $1.0$ & $1.1$ & $ 0.15$ \\
    \hline
    \end{tabular}

    \caption{Different hyperparameters used to generate PSLD results. SR refers to super resolution $(4\times)$, ``previous'' and ``new'' refer to the hyperparameter $\gamma$ used to generate the old and new results, respectively.}
    \label{table:psld_hyperparameters}
\end{table} 

\setlength{\tabcolsep}{3pt}
\begin{table}[h!]
\begin{center}
\resizebox{0.7\textwidth}{!}{%
\begin{tabular}{l|lll|lll}
% \hline\noalign{\smallskip}
\hline
 \multirow{2}{*}{Method} & \multicolumn{3}{c|}{Inpainting (Random)} & \multicolumn{3}{c}{Inpainting (Box)}\\

& LPIPS$\downarrow$ & PSNR$\uparrow$ & SSIM$\uparrow$  & LPIPS$\downarrow$ & PSNR$\uparrow$ & SSIM$\uparrow$ \\

\hline
 DPS & $\underline{0.212}$ & $29.49$ & $0.844$ & $0.214$ & $23.39$ & $0.798$\\
% \hhline{~-----}
% \hline

  PSLD-LDM  & $0.221$ & $\underline{30.31}$ & $\underline{0.851}$ & $\underline{0.158}$ & $\mathbf{24.22}$& $\underline{0.819}$ \\

\hline
  ReSample (Ours)  & $\mathbf{0.135}$ & $\mathbf{31.64}$ & $\mathbf{0.906}$ & $\mathbf{0.128}$ & $\underline{23.81}$ & $\mathbf{0.868}$  \\
\hline
\end{tabular}}
\end{center}
\caption{Comparison of quantitative results for different inpainting tasks on the FFHQ 1k validation set. Best results are in bold and second best results are underlined.}
\label{table:psld_box_inpainting}
\vspace{-8pt}
\end{table}

\clearpage
%###########################################################################################
\subsection{Additional Qualitative Results}

\begin{figure}[h!]
\centering
\includegraphics[width=0.9\textwidth]{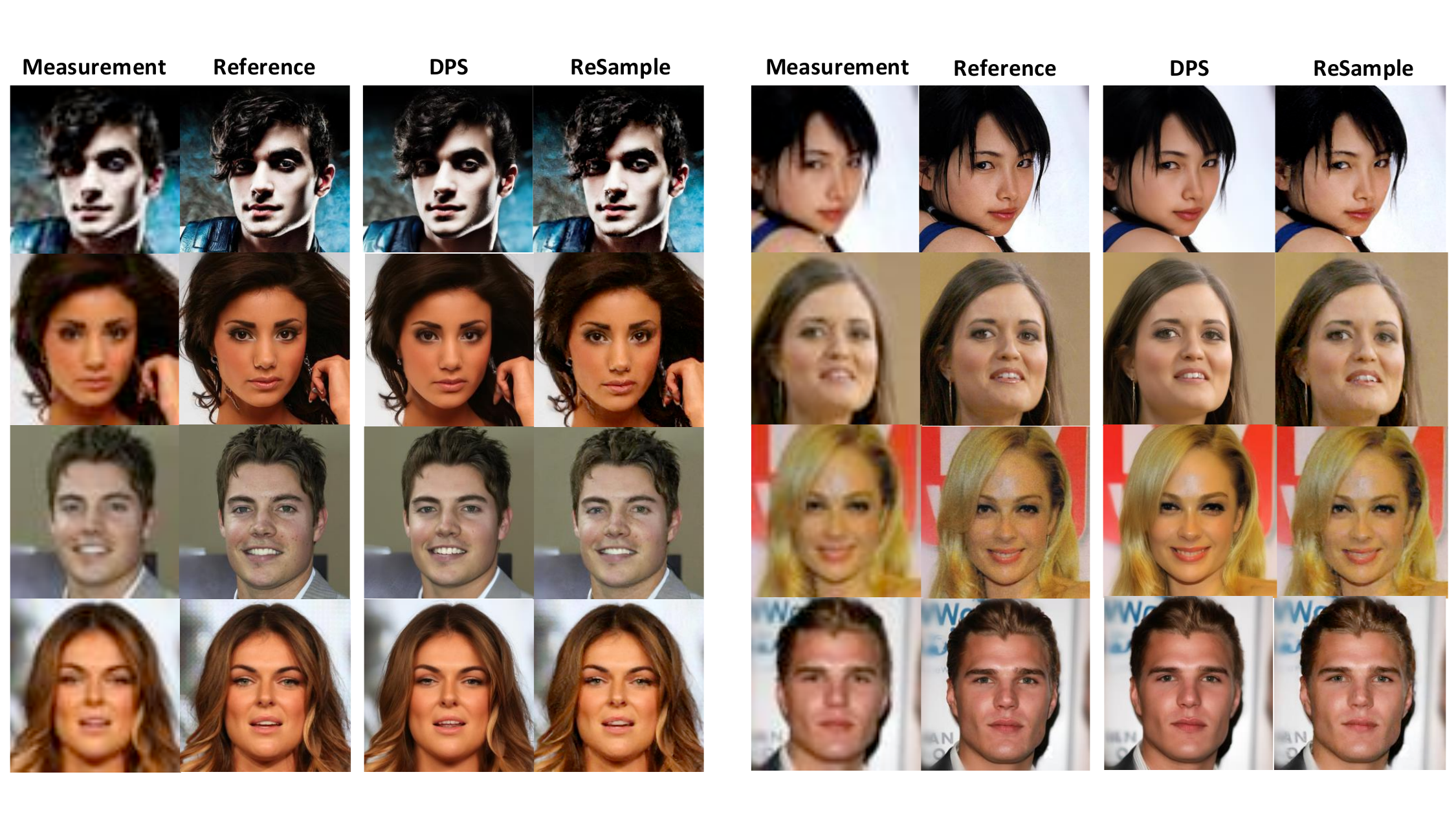}
\caption{Additional results on super resolution ($4\times$) on the CelebA-HQ dataset with Gaussian measurement noise of variance $\sigma_{\bm{y}} = 0.01$.}
\end{figure}

\begin{figure}[h!]
\centering
\includegraphics[width=0.9\textwidth]{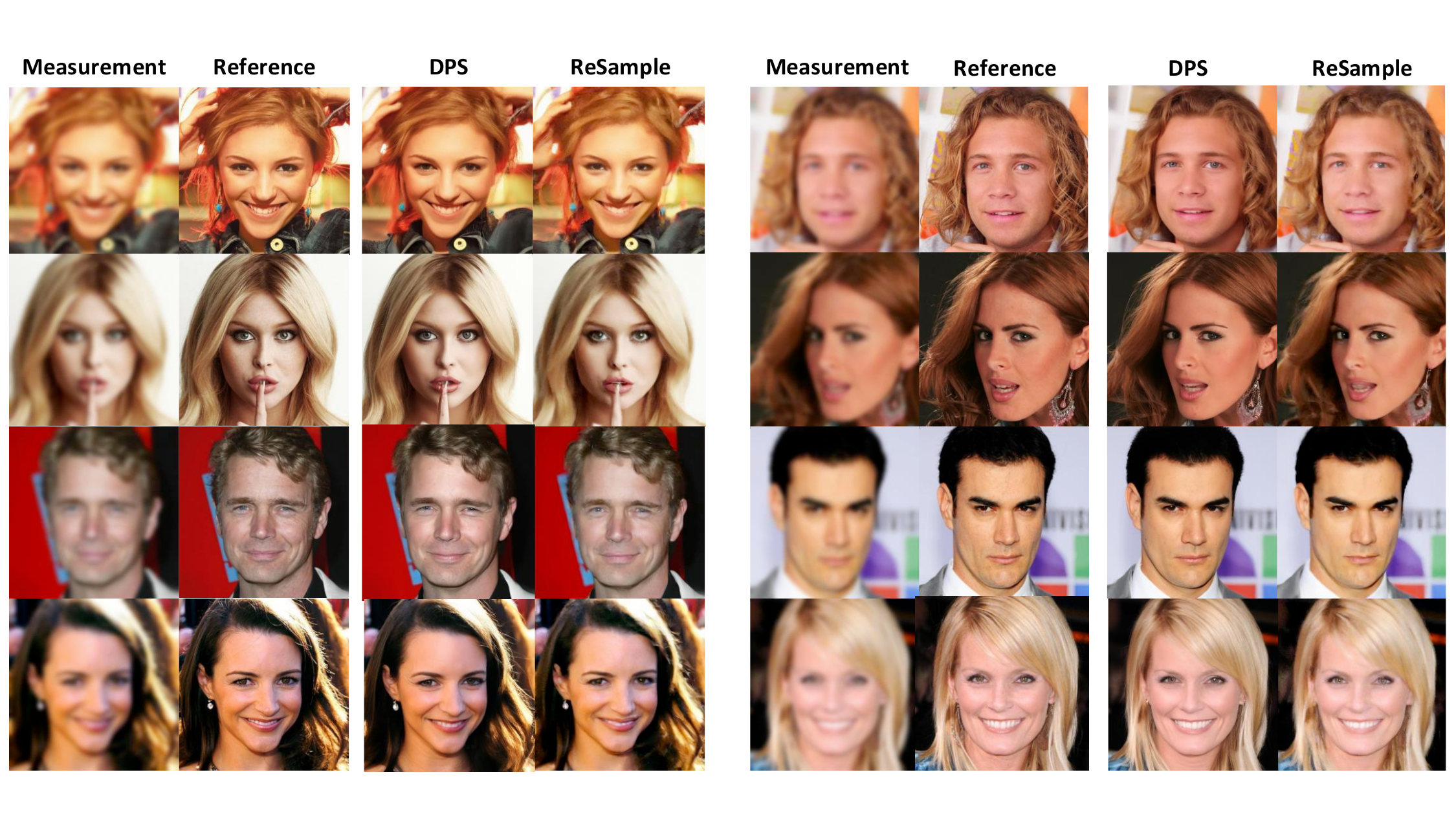}
\caption{Additional results on Gaussian deblurring on the CelebA-HQ dataset with Gaussian measurement noise of variance $\sigma_{\bm{y}} = 0.01$.}
\end{figure}

\begin{figure}[h!]
\centering
\includegraphics[width=1.0\textwidth]{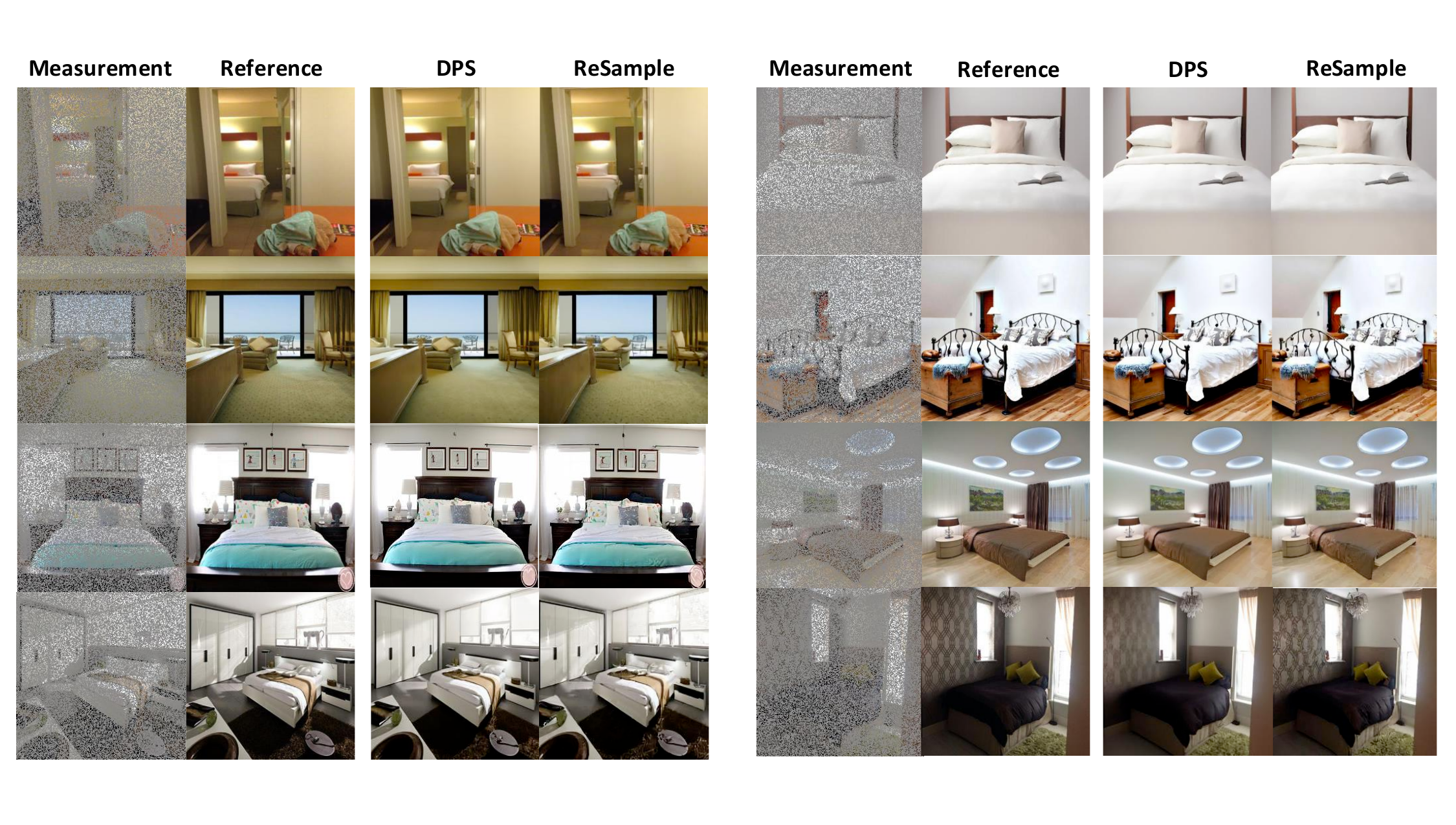}
\caption{Additional results on inpainting with a random mask ($70\%$) on the LSUN-Bedroom dataset with Gaussian measurement noise of variance $\sigma_{\bm{y}} = 0.01$.}
\end{figure}

\begin{figure}[h!]
\centering
\includegraphics[width=1.0\textwidth]{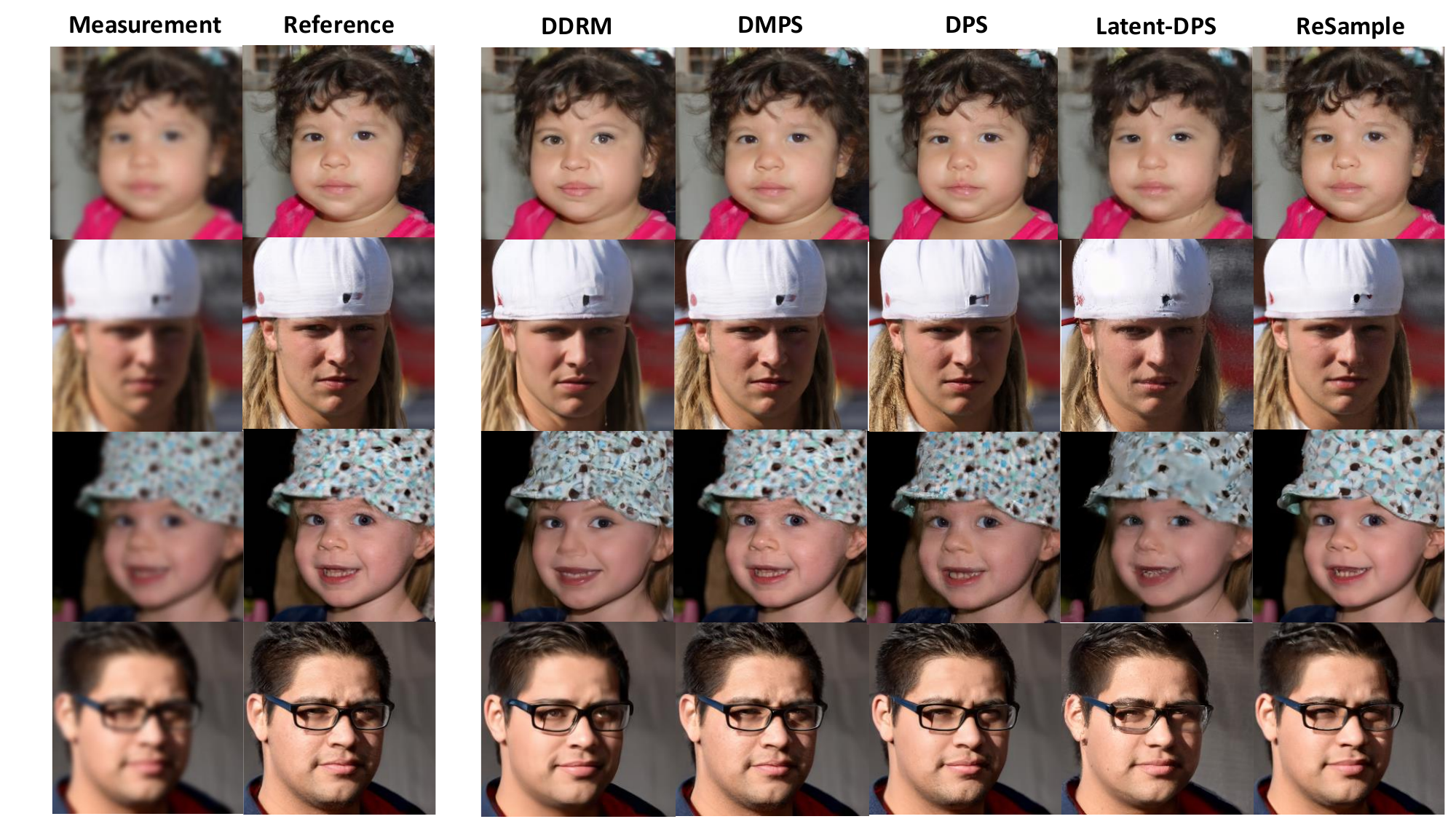}
\caption{Comparison of algorithms on Gaussian deblurring on the FFHQ dataset with Gaussian measurement noise of variance $\sigma_{\bm{y}} = 0.01$.}
\end{figure}

\begin{figure}[h!]
\centering
\includegraphics[width=\textwidth]{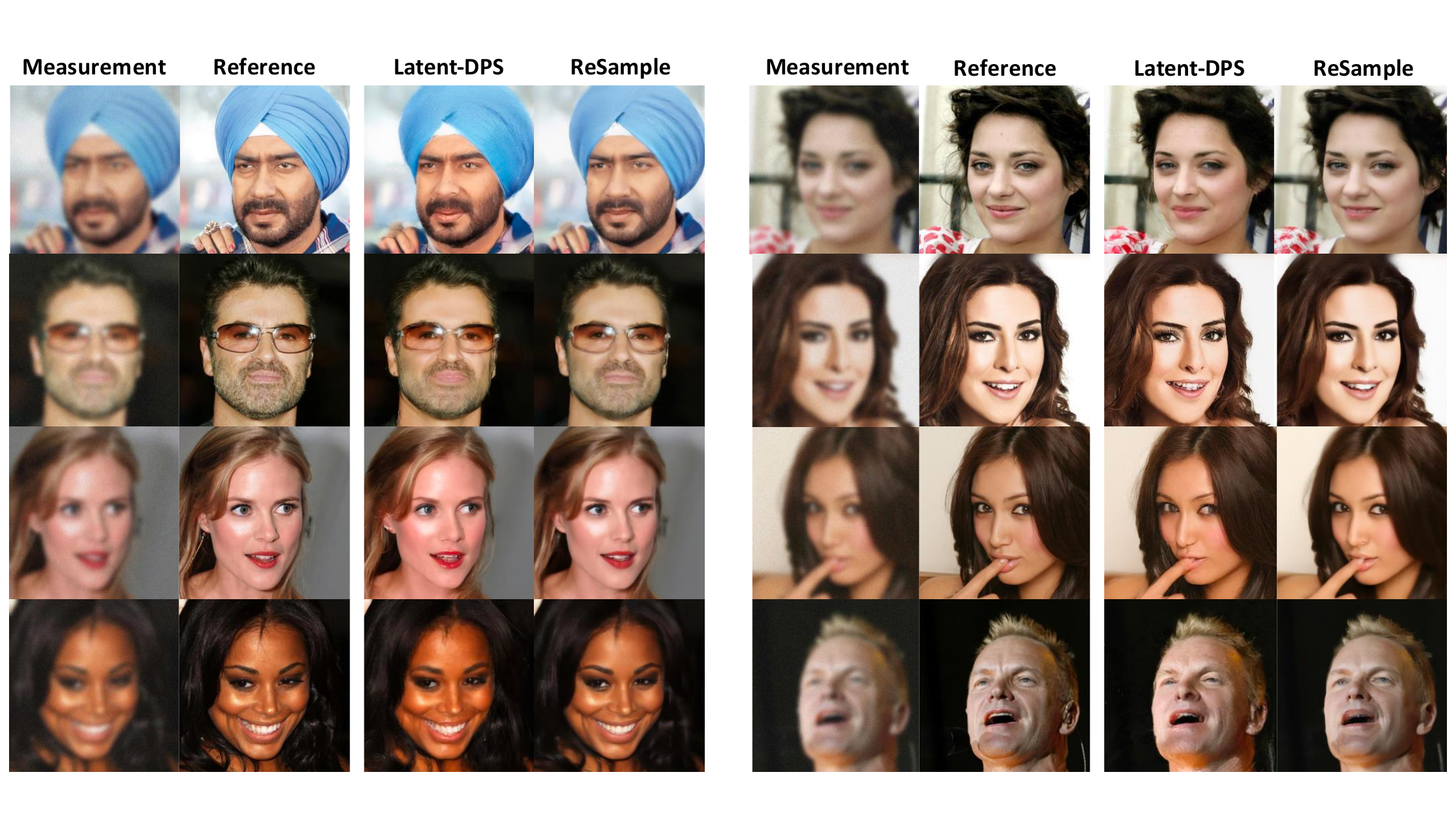}
\caption{Additional results on Gaussian deblurring with additive Gaussian noise $\sigma_{\bm{y}} = 0.05$.}
%\label{fig:latent_dps_lr}
\end{figure}

\begin{figure}[h!]
\centering
\includegraphics[width=\textwidth]{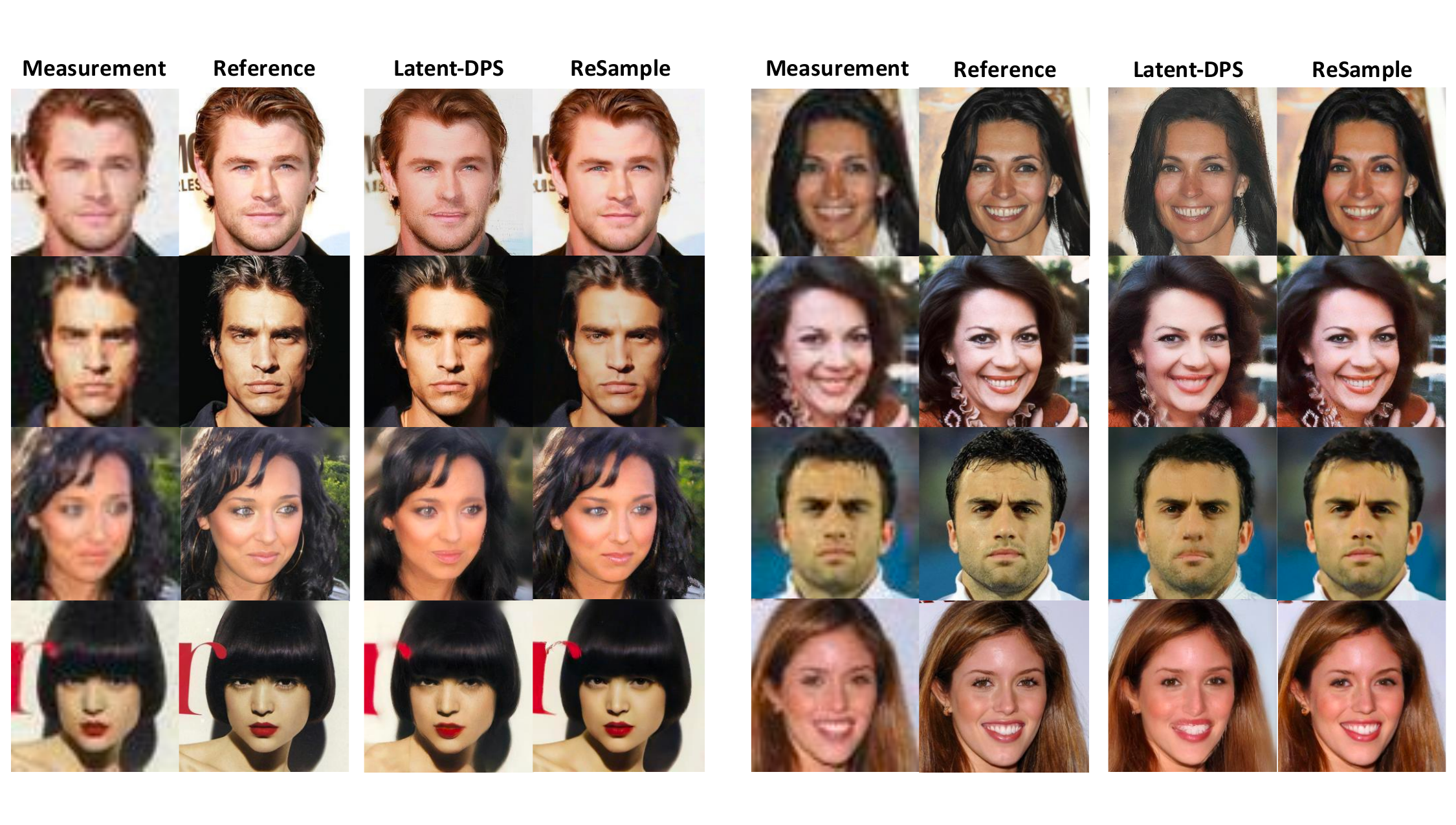}
\caption{Additional results on super resolution $4\times$ with additive Gaussian noise $\sigma_{\bm{y}} = 0.05$.}
%\label{fig:latent_dps_lr}
\end{figure}

\begin{figure}[h!]
\centering
\includegraphics[width=0.8\textwidth]{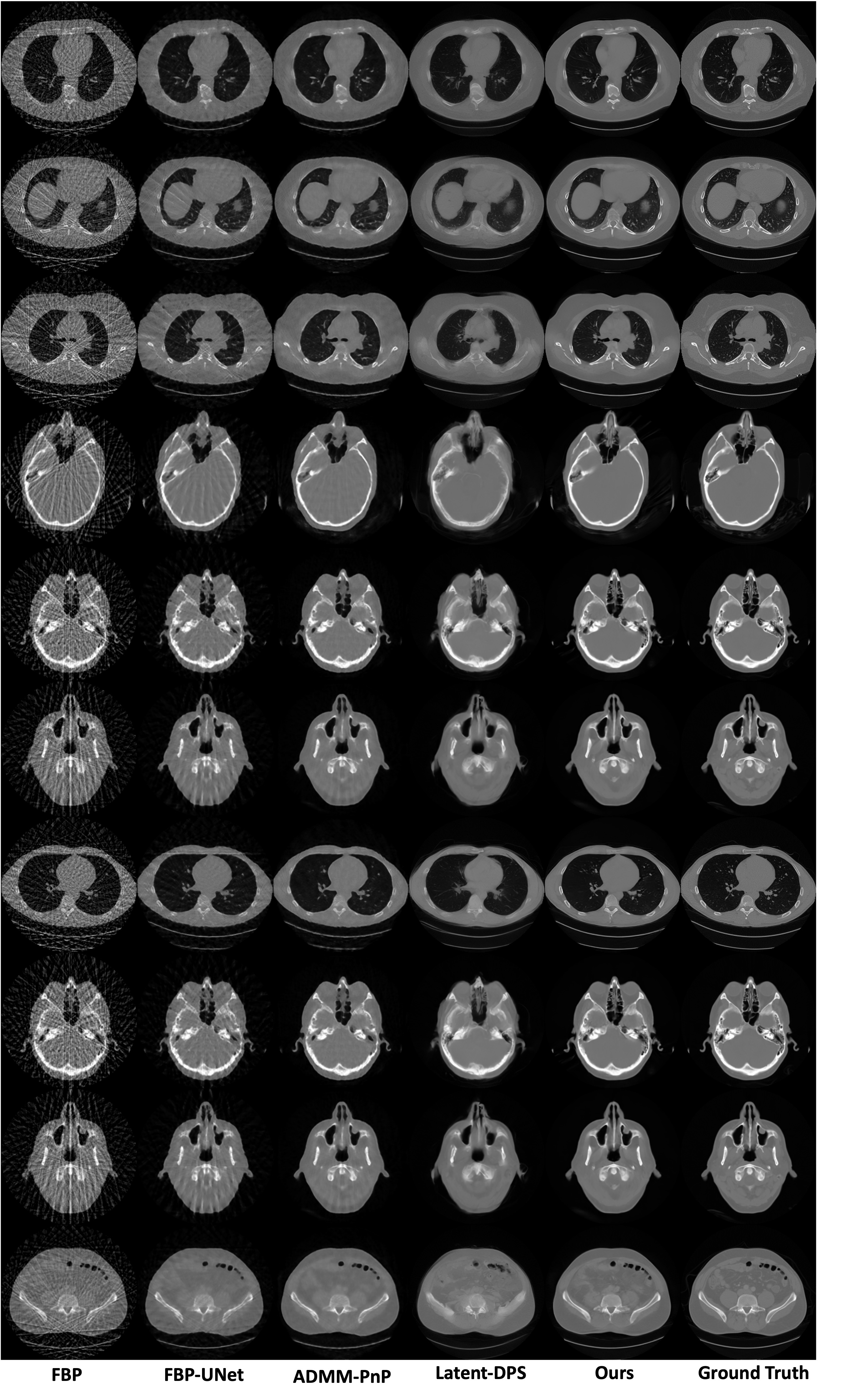}
\caption{Additional results on CT reconstruction with additive Gaussian noise $\sigma_{\bm{y}} = 0.01$.}
%\label{fig:latent_dps_lr}
\end{figure}

\begin{figure}[h!]
\centering
\includegraphics[width=0.8\textwidth]{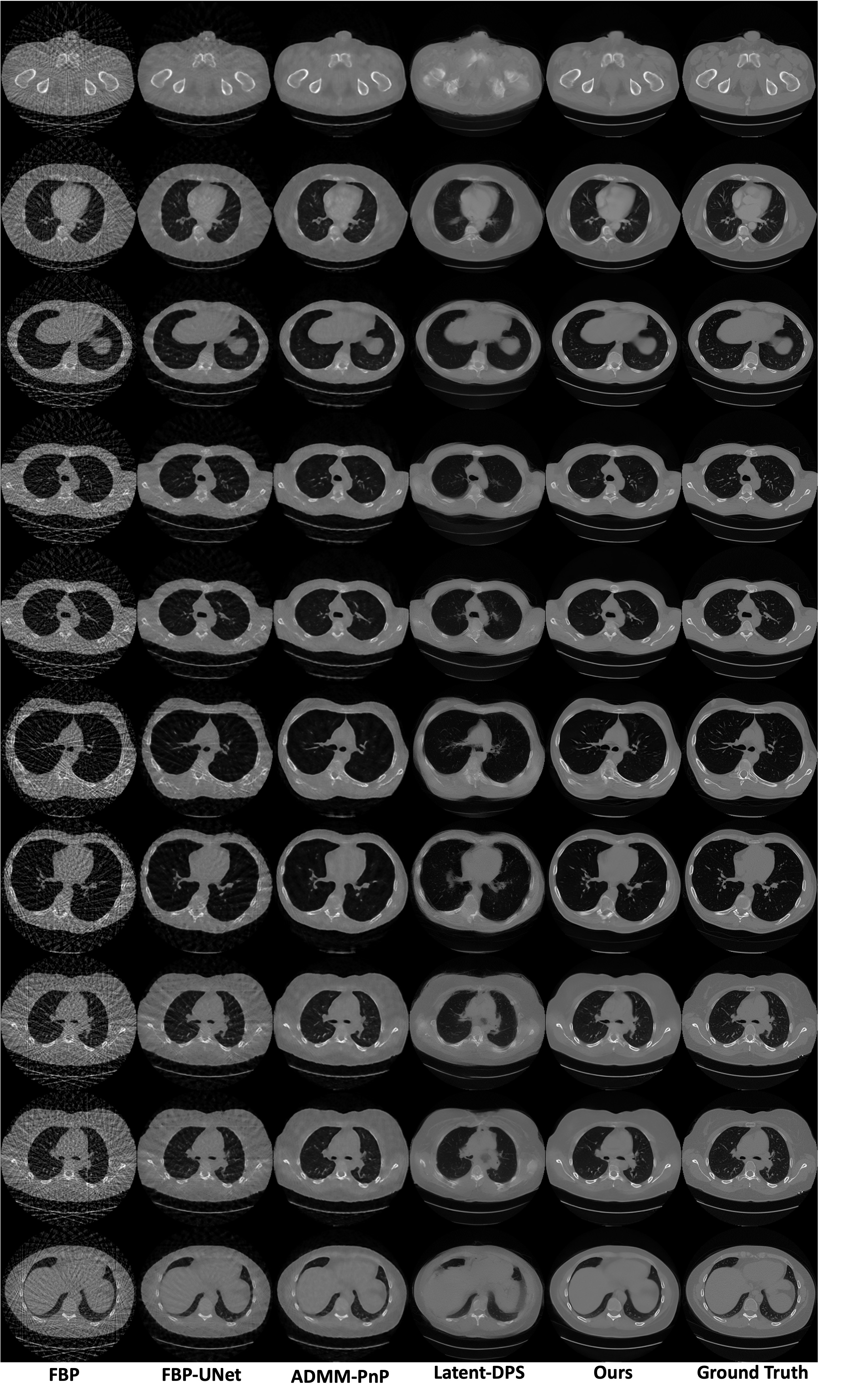}
\caption{Additional results on CT reconstruction with additive Gaussian noise $\sigma_{\bm{y}} = 0.01$.}
%\label{fig:latent_dps_lr}
\end{figure}

\clearpage
%###########################################################################################

\subsection{Ablation Studies}
Here, we present some ablation studies regarding computational efficiency and stochastic resampling, amongst others.

%\begin{figure}[h!]
%     \centering
%     \begin{subfigure}[]{0.37\textwidth}
%         \centering
%         \includegraphics[width=\textwidth]{}
%         \caption{CT reconstruction performance v.s. number of time steps to perform optimization/resample tested on chest CT images.}
%         \label{fig:timesteps}
%     \end{subfigure}
     % \hfill
%     \begin{subfigure}[]{0.37\textwidth}
%         \centering
%         \includegraphics[width=\textwidth]{}
%         \caption{Performance with different $\gamma$ values on validation samples from Celeba dataset.}
%         \label{fig:hardcons}
%     \end{subfigure}    
%\end{figure}

\begin{figure}[h!]
    \centering
    \includegraphics[width=0.8\textwidth]{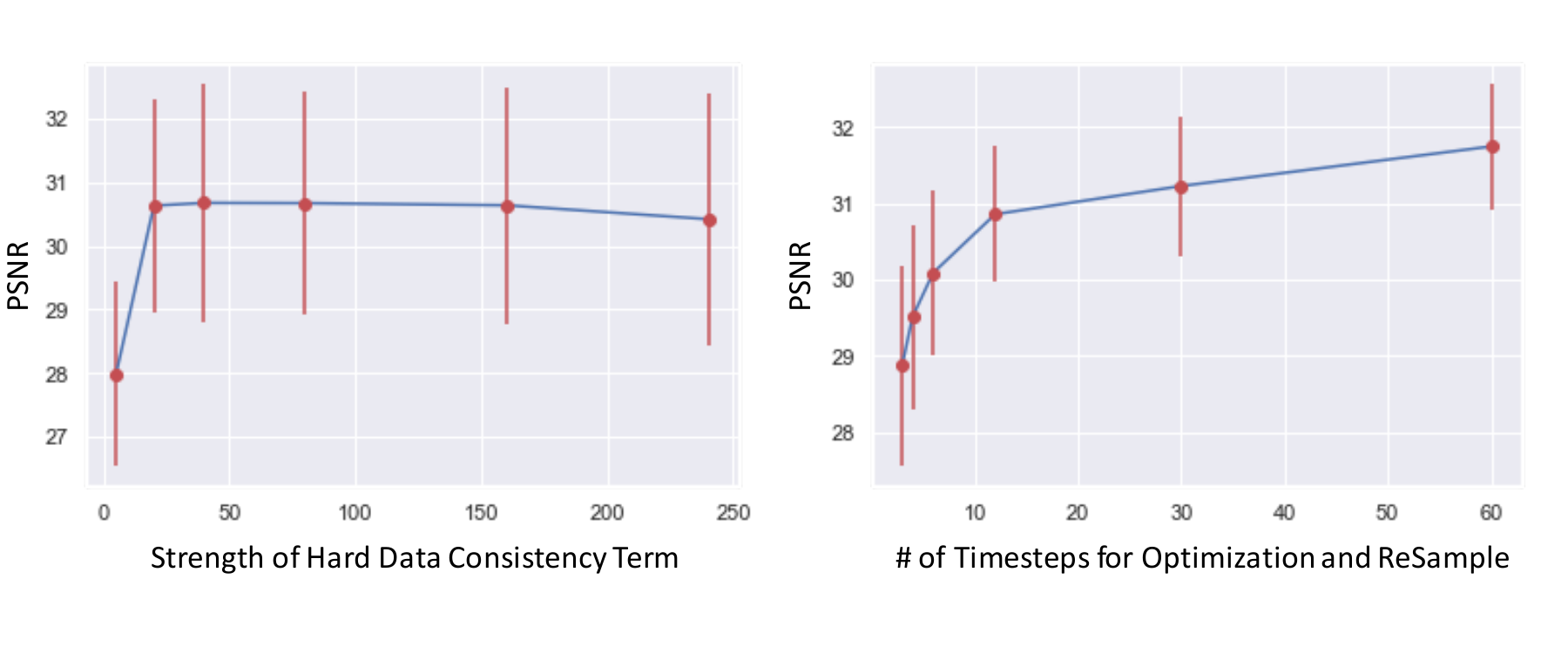}
    \caption{An ablation study on a few of the hyperparameters associated with ReSample. Left: Performance of different values of $\gamma$ (hyperparameter in stochastic resampling) on the CelebA-HQ dataset. Right: CT reconstruction performance as a function of the number of timesteps to perform optimization.}
    \label{fig:resample_ablation_study}
\end{figure}

\paragraph{Effectiveness of Stochastic Resampling.}
Recall that in stochastic resampling, we have one hyperparameter $\sigma_t^2$. In Section~\ref{sec:hyperparameters}, we discuss that our choice for this hyperparameter is
$$\sigma_t^2 = \gamma \left(\frac{1-\bar{\alpha}_{t-1}}{\bar{\alpha}_{t}}\right) \left(1 - \frac{\bar{\alpha}_t}{\bar{\alpha}_{t-1}} \right),$$
where $\bar{\alpha}$ is an adaptive parameter associated to the diffusion model process. Thus, the only parameter that we need to choose here is $\gamma$. To this end, we perform a study on how $\gamma$ affects the image reconstruction quality. The $\gamma$ term can be interpreted as a parameter that balances the prior consistency with the measurement consistency. In Figure~\ref{fig:resample_ablation_study} (left) observe that performance increases a lot when $\gamma$ increases from a small value, but plateaus afterwards. We also observe that the larger $\gamma$ gives more fine details on the images, but may introduce additional noise. In Figure~\ref{fig:vary_gamma}, we provide visual representations corresponding to the choices of $\gamma$ in order.

%\paragraph{Effectiveness of the Hard Data Consistency.}

\begin{figure}[h!]
\centering
\includegraphics[width=0.85\textwidth]{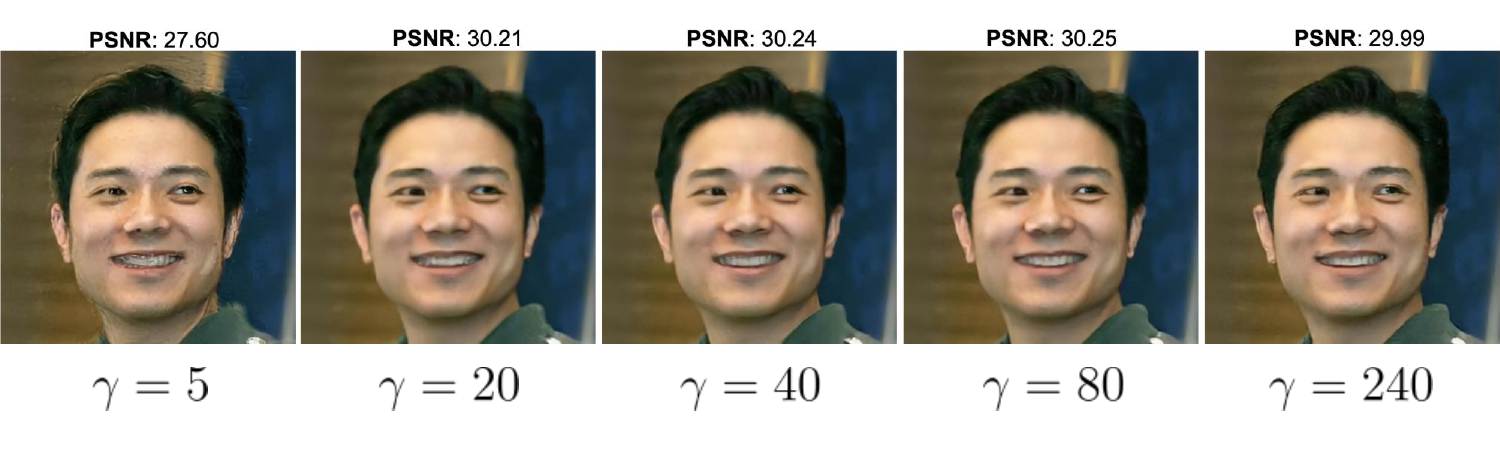}
\caption{Visual representation of varying $\gamma$ on the CelebA-HQ dataset.}
\label{fig:vary_gamma}
\end{figure}

\clearpage

\begin{wraptable}{r}{6cm}
% \vspace{-14pt}
\caption{Average inference of different algorithms for Gaussian Deblurring }
\label{table:inference256}
\resizebox{0.35\textwidth}{!}{%
\begin{tabular}{l|l}
\hline
Algorithm & Inference Time \\
\hline
DPS & 1:12\\
MCG &1:20\\
DMPS & 1:25\\
DDRM & 10 seconds \\
\hline
PSLD & 2:11 \\ 
ReSample (Ours) & 1:54  \\
\hline
\end{tabular}}
\end{wraptable}
\paragraph{Effect of the Skip Step Size.}
In this section, we conduct an ablation study to investigate the impact of the skip step size in our algorithm. The skip step size denotes the frequency at which we 'skip' before applying the next hard data consistency. For instance, a skip step size of $10$ implies that we apply hard data consistency every $10$ iterations of the reverse sampling process. Intuitively, one might expect that fewer skip steps would lead to better reconstructions, indicating more frequent hard data consistency steps. To validate this intuition, we conducted an experiment on the effect of the skip step size on CT reconstruction, and the results are displayed in Figure~\ref{fig:effect_of_skip_step}, with corresponding inference times in Table~\ref{table:skip_step_time}. We observe that for skip step sizes ranging from $1$ to $10$, the results are very similar. However, considering the significantly reduced time required for a skip step size of $10$ as shown in Table~\ref{table:skip_step_time}, we chose a skip step size of $10$ in our experimental setting in order to balance the trade-off between reconstruction quality and inference time. Lastly, we would like to note that in Figure~\ref{fig:effect_of_skip_step}, a skip step size of $4$ exhibits a (very) slight improvement over a skip step of $1$. Due to the reverse sampling process initiating with a \emph{random} noise vector, there is a minor variability in the results, with the average outcomes for skip step sizes of $1$ and $4$ being very similar.

\begin{table}[h!]
    \centering
    \begin{tabular}{|c | c|c|c|c|}
    \hline
     Algorithm & Skip Step Size & Inference Time & PSNR &SSIM \\
    \hline
    ReSample (Ours) &$1$ & 31:20 &31.74 &0.922 \\
    &$4$ &8:24 &31.74 &0.923 \\
    &$10$ & 3:52& 31.72 & 0.922 \\
    &$20$ & 2:16 & 31.23 & 0.918 \\
    &$50$ & 1:20 & 30.86 & 0.915 \\
    &$100$ & 1:08& 30.09 & 0.906 \\
    &$200$ & 0:54 & 28.74  & 0.869 \\
    \hline
    \end{tabular}
    
    \caption{Inference times and corresponding performance for performing hard data consistency on with varying skip step sizes for chest CT reconstruction.}
    \label{table:skip_step_time}
\end{table}

\begin{figure}[h!]
    \centering
    \includegraphics[width=\textwidth]{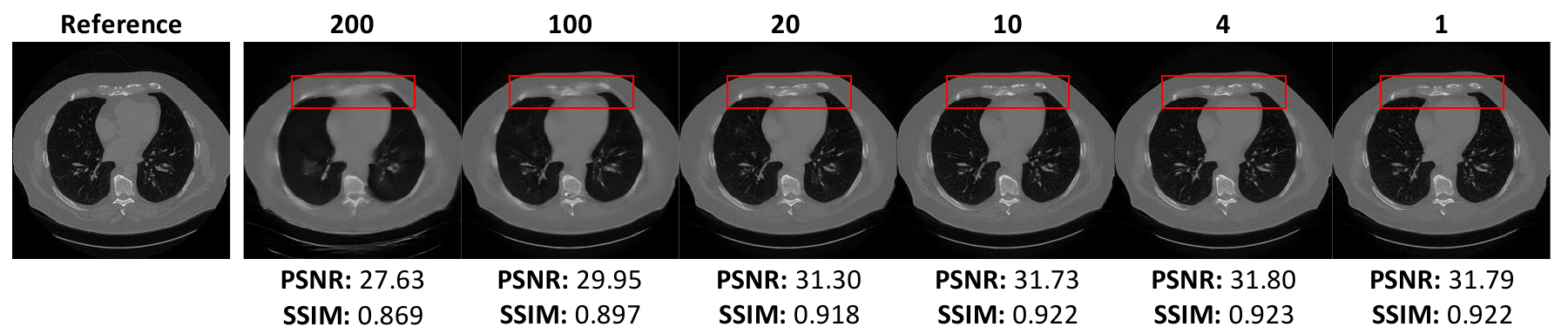}
    \caption{Observing the effect of the skip step size of hard data consistency optimization on CT reconstruction. The numbers above the image correspond to the number of hard data consistency steps per iteration of the reverse sampling process (e.g., $1$ refers to hard data consistency on every step), and the red boxes outline the differences in reconstructions. }
    \label{fig:effect_of_skip_step}
\end{figure}

\paragraph{Computational Efficiency.}

% \begin{wraptable}{r}{4.2cm}
% \vspace{-14pt}
% \caption{Memory Usage of different algorithms with different pretrained models for Gaussian Deblurring on ImageNet $512 \times 512$ images}
% \label{table:finetune}
% \resizebox{0.5\textwidth}{!}{%
% \begin{tabular}{l | l|l | l | l}
% \hline
% Pretrained Model & Algorithm & Model Only & Memory Increment & Total Memory \\
% \hline
% DDPM & DPS & \textbf{4394MB} & +6637MB (151\%) & 11031MB \\
% & MCG &  & +6637MB (151\%) & 11031MB \\
% & DMPS &  & +8731MB (199 \%) & 13125MB \\
% & DDRM & & +4530MB (103 \%) & 8924MB \\
% \hline

% LDM & PSLD & 5669MB & +5516MB (97.3\%) & 11612MB \\ 
% & ReSample (Ours) &  & \textbf{+1322MB (30.1\%)} & \textbf{7002MB}  \\
% \hline
% \end{tabular}}
% \vspace{-4pt}
% \end{wraptable}

% Since ReSample does not need to compute the gradient from the diffusion model, it does not incur memory burden on the reverse sampling of pretrained latent diffusion model. We observe that ReSample has the lowest additional memory among all baselines that solve inverse problems with pretrained diffusion models. This may faciliate future work in higher dimensions when computing the gradient of diffusion models can be very expensive. \textbf{\color{red} TODO:} This section is uninformative and needs to be edited and clarified.

Here, we present the memory usage and inference times of ReSample compared to our baselines in Tables~\ref{table:inference256} and~\ref{table:memory256}. We observe that the memory gap between ReSample and other algorithms widens when the pre-trained diffusion model increases in size, as the hard data consistency step of ReSample requires minimal memory. Conversely, since we only apply hard data consistency in a subset of all sampling steps (which differs from existing methods that perform consistency updates at every time step), we note a slight increase in inference time compared to algorithms such as DPS, DMPS, and MCG. ReSample outperforms PSLD in terms of inference time.

\begin{wraptable}{r}{9cm}
% \vspace{-14pt}
\caption{Memory Usage of different algorithms with different pretrained models for Gaussian Deblurring on FFHQ256 and ImageNet512 datasets}
\label{table:memory256}
\resizebox{0.55\textwidth}{!}{%
\begin{tabular}{l | l|l | l | l}
\hline
Pretrained Model & Algorithm & Model Only & Memory Increment & Total Memory \\
\hline
DDPM(FFHQ) & DPS & \textbf{1953MB} & +3416MB (175\%) & 5369MB \\
& MCG &  & +3421MB (175\%) & 5374MB \\
& DMPS &  & +5215MB (267 \%) & 7168MB \\
& DDRM & & +18833MB (964 \%) & 20786MB \\
\hline
LDM(FFHQ) & PSLD & 3969MB & +5516MB (140\%) & 9485MB \\ 
& ReSample (Ours) &  & \textbf{+1040MB (26.2\%)} & \textbf{5009MB}  \\
\hline
DDPM(ImageNet) & DPS & \textbf{4394MB} & +6637MB (151\%) & 11031MB \\
& MCG &  & +6637MB (151\%) & 11031MB \\
& DMPS &  & +8731MB (199 \%) & 13125MB \\
& DDRM & & +4530MB (103 \%) & 8924MB \\
\hline

LDM(ImageNet) & PSLD & 5669MB & +5943MB (105\%) & 11612MB \\ 
& ReSample (Ours) &  & \textbf{+1322MB (30.1\%)} & \textbf{7002MB}  \\
\hline
\end{tabular}}
\vspace{-4pt}
\end{wraptable}

\paragraph{Discussion on Latent-DPS.}

In the main text, we briefly discussed how adding the Latent-DPS gradient term into our ReSample algorithm \emph{can} improve the overall reconstruction quality. Generally,
we observe that adding the Latent-DPS gradient will cause a very marginal boost in the PSNR, but only when the learning rate scale is chosen ``correctly''. Interestingly, we observe that choosing $\bar{\alpha}_t$ is \emph{critical} in the performance of Latent-DPS. To this end, we perform a brief ablation study on the learning rate for Latent-DPS, where we choose the learning rate to be $k\bar{\alpha}_t$ for some $k>0$. We vary $k$ and test the performance on 50 images of chest CT images and display the results in Table~\ref{tbl:latentdpsgrad}. In Table~\ref{tbl:latentdpsgrad}, we observe that $k=0.5$ returns the best results, and should be chosen if one were to adopt the method of adding Latent-DPS into ReSample.

\begin{table}[h!]
\begin{center}
\begin{tabular}{l|l|l|l|l|l|l}
% \hline\noalign{\smallskip}
\hline
  $k$ & $0$ & $0.5$ & $1.0$ & $1.5$ & $2.0$ & $2.5$\\
\hline
  PSNR & $31.64$ & $\mbf{31.73}$ & $31.58$ & $31.51$ & $31.27$ & $31.04$\\
  \hline
  Measurement Loss & $\mathbf{9.82e-5}$ & $1.21e-4$  & $5.03e-4$  & $1.35e-3$  & $1.96e-3$ & $2.63e-3$\\
  \hline
\end{tabular}
\end{center}
\caption{The effect of the Latent-DPS gradient scale (learning rate) as a function of $k>0$. We study the PSNR and measurement loss (objective function) changes for varying $k$ for chest CT reconstruction on 50 test samples. The best results are in bold. Note that $k=0$ here refers to no Latent-DPS and only using ReSample.}
\vspace{-8pt}
\label{tbl:latentdpsgrad}
\end{table}

\paragraph{Training Efficiency of LDMs.}
To underscore the significance of our work, we conducted an ablation study focusing on the training efficiency of LDMs. This study demonstrates that LDMs require significantly fewer computational resources for training, a trait particularly beneficial for downstream applications like medical imaging. To support this assertion, we present the training time and performance of LDMs, comparing them to DDPMs~\cite{ddpm} in the context of medical image synthesis.
For LDMs, we utilized the pretrained autoencoder and LDM architecture provided by \cite{stablediffusion}, training them on 9000 2D CT slices from three organs across 40 patients sourced from the LDCT training set \cite{ldct}. In the case of DDPMs, we employed the codebase provided by \cite{openaiddpm} to train pixel-based diffusion models on the same set of 2D CT training slices. The results are presented in comparison with 300 2D slices from 10 patients in the validation set, detailed in Table~\ref{table:train_efficiency}.
Observing the results in Table~\ref{table:train_efficiency}, we note that by utilizing LDMs, we can significantly reduce training time and memory usage while achieving better generation quality, as measured by the FID score.
\begin{table}[h!]
    \centering
    \begin{tabular}{|c | c|c|c|c}
    \hline
     Model type & Training Time & Training Memory & FID  \\
    \hline
    LDM (Ours) & \textbf{133 mins} & \textbf{7772MB} & \textbf{100.48}  \\
    DDPM & 229 mins & 10417MB & 119.50 \\
    \hline
    \end{tabular}
    
    \caption{Comparison of training efficiency and performance of  LDMs and DDPMs on CT images, computed on a V100 GPU.}
    \label{table:train_efficiency}
\end{table}

%\subsection{Discussion on PSLD-LDM}

\section{Discussion on Hard Data Consistency}
\label{sec:hard_data_discussion}

Recall that the hard data consistency step involves solving the following optimization problem:
\begin{align}
    \hat{\bm{z}}_0(\bm{y}) \in \underset{\bm{z}}{\argmin} \, \frac{1}{2} \| \bm{y} - \mathcal{A}(\mathcal{D}(\bm{z})) \|^2_2,
\end{align}
where we initialize using $\hat{\bm{z}}_0(\bm{z}_t)$. Instead of solving for the latent variable $\bm{z}$ directly, notice that one possible technique would be to instead solve for vanilla least squares:
\begin{align}
    \hat{\bm{x}}_0(\bm{y}) \in \underset{\bm{x}}{\argmin} \, \frac{1}{2} \| \bm{y} - \mathcal{A}(\bm{x}) \|^2_2,
\end{align}
where we instead initialize using $\mathcal{D}(\hat{\bm{z}}_0(\bm{z}_t))$, where $\mathcal{D}(\cdot)$ denotes the decoder. Similarly, our goal here is to find the $\hat{\bm{x}}_0(\bm{y})$ that is close to $\mathcal{D}(\hat{\bm{z}}_0(\bm{z}_t))$that satisfies the measurement consistency: $\| \bm{y} - \mathcal{A}(\bm{x}) \|^2_2 < \sigma^2$, where $\sigma^2$ is an estimated noise level.
Throughout the rest of the Appendix, we refer to the former optimization process as \emph{latent optimization} and the latter as \emph{pixel optimization}. 

In our experiments, we actually found that these two different formulations yield different results, in the sense that performing latent optimization gives reconstructions that are ``noisy'', yet much sharper with fine details, whereas pixel optimization gives results that are ``smoother'' yet blurry with high-level semantic information. Here, the intuition is that pixel optimization does not directly change the latent variable where as latent optimization directly optimizes over the latent variable. Moreover, the encoder $\mathcal{E}(\cdot)$ can add additional errors to the estimated $\hat{\bm{z}}_0(\bm{y})$, perhaps throwing the sample off the data manifold, yielding images that are blurry as a result.

There is also a significant difference in time complexity between these two methods. Since latent optimization needs to backpropagate through the whole network of the decoder $\mathcal{D}(\cdot)$ on every gradient step, it takes much longer to obtain a local minimum (or converge). 
Empirically, to balance the trade-off between reconstruction speed and image quality, we see that using \emph{both} of these formulations for hard data consistency can not only yield the best results, but also speed up the optimization process. Since pixel optimization is easy to get a global optimum, we use it first during the reverse sampling process and then use latent optimization when we are closer to $t=0$ to refine the images with the finer details. 

% We provide a depiction of this process in Figure~\ref{fig:three_stage}.

% \begin{figure}[h!]
% \centering
% \includegraphics[width=\textwidth]{}
% \caption{Depiction of the three stage process. During the second stage, we perform pixel optimization and latent optimization on the third stage for specific time steps $t$.}
% \label{fig:three_stage}
% \end{figure}

Lastly, we would like to remark that for pixel optimization, there is a closed-form solution that could be leveraged under specific settings~\cite{ddnm}. If the forward operator $\mathcal{A}$ is linear and can take the matrix form $\mbf{A}$ and the measurements are noiseless (i.e., $\bm{y} = \mathcal{A}(\hat{\bm{x}}_0(\bm{y}))$), then we can pose the optimization problem as 
\begin{align}
    \hat{\bm{x}}_0(\bm{y}) \in \underset{\bm{x}}{\argmin} \, \frac{1}{2} \| \mathcal{D}(\hat{\bm{z}}_0(\bm{z}_t)) - \bm{x} \|^2_2, \quad \text{s.t.} \quad \mbf{A}\bm{x} = \bm{y}.
\end{align}
Then, the solution to this optimization problem is given by 
\begin{align}
\hat{\bm{x}}_0(\bm{y}) = \mathcal{D}(\hat{\bm{z}}_0(\bm{z}_t)) - (\mbf{A}^{+}\mbf{A}\mathcal{D}(\hat{\bm{z}}_0(\bm{z}_t)) - \mbf{A}^{+}\bm{y}),
\end{align}
where by employing the encoder, we obtain
\begin{align}
    \hat{\bm{z}}_0(\bm{y}) = \mathcal{E}(\hat{\bm{x}}_0(\bm{y})) = \mathcal{E}(\mathcal{D}(\hat{\bm{z}}_0(\bm{z}_t)) - (\mbf{A}^{+}\mbf{A}\mathcal{D}(\hat{\bm{z}}_0(\bm{z}_t)) - \mbf{A}^{+}\bm{y})).
\end{align}
This optimization technique does not require iterative solvers and offers great computational efficiency. In the following section, we provide ways in which we can compute a closed-form solution in the case in which the measurements may be noisy.
%\bowen{On the other hand, suppose that we are minimizing the l-2 norm between the prior $\mathcal{D}(\hat{\bm{z}}_0(\bm{z}_t))$ and $\hat{\bm{x}}_0(\bm{y})$, which satisfies $ \mathcal{A}(\hat{\bm{x}}_0(\bm{y})) = \bm{y}$ (a noiseless measurement) and if the forward operator $ \mathcal{A}$ is linear, there is actually a close-form solution to the optimization problem \cite{ddnm}. 
%Let the optimization problem for linear inverse problem in the noiseless case be 
%\begin{align}
%    \hat{\bm{x}}_0(\bm{y}) \in \underset{\bm{x}}{\argmin} \, \frac{1}{2} \| \mathcal{D}(\hat{\bm{z}}_0(\bm{z}_t)) - \bm{x} \|^2_2, s.t.  \mbf{A}\bm{x} = \bm{y}
%\end{align} 
%The solution to this optimization problem is given by 
%\begin{align}
%\hat{\bm{x}}_0(\bm{y}) = \hat{\bm{z}}_0(\bm{z}_t) - (\mbf{A}^{+}\mbf{A}\hat{\bm{z}}_0(\bm{z}_t) - \mbf{A}^{+}\bm{y})
%\end{align}
%Then by applying the encoder on $\hat{\bm{x}}_0(\bm{y})$, we are able to get 
%\begin{align}
%    \hat{\bm{z}}_0(\bm{y}) = \mathcal{E}(\hat{\bm{x}}_0(\bm{y})) = \mathcal{E}(\hat{\bm{z}}_0(\bm{z}_t) - (\mbf{A}^{+}\mbf{A}\hat{\bm{z}}_0(\bm{z}_t) - \mbf{A}^{+}\bm{y}))
%\end{align}, which gives the optimization result in one pass with great computational efficiency. However in practice, the measurement are usually noisy, and the pseudo-inverse $\mbf{A}^{+}$ can be hard to compute, we resort to the next subsection to discuss about how to use this formula for CT reconstruction}

\subsection{Accelerated Pixel Optimization by Conjugate Gradient}

Notice that since pixel optimization directly operates in the pixel space, we can use solvers such as conjugate gradients least squares for linear inverse problems. 
Let $\mbf{A}$ be the matrix form of the linear operator $\mathcal{A}(\cdot)$.
Then, as discussed in the previous subsection, the solution to the optimization problem in the noiseless setting is given by
\begin{align}
    \hat{\bm{x}} = \bm{x}_0 - (\mbf{A}^{+}\mbf{A}\bm{x}_0 - \mbf{A}^{+}\bm{y}),
\end{align}
where $\mbf{A}^{+} = \mbf{A}^{\top}(\mbf{AA}^{\top})^{-1}$ and $(\mbf{AA}^{\top})^{-1}$ can be implemented by conjugate gradients. In the presence of measurement noise, we can relax this solution to 
\begin{align}
    \hat{\bm{x}} = \bm{x}_0 - \kappa(\mbf{A}^{+}\mbf{A}\bm{x}_0 - \mbf{A}^{+}\bm{y}),
\end{align}
$\kappa \in (0,1)$ is a hyperparameter that can reduce the impact between the noisy component of the measurements and $\bm{x}_0$ (the initial image before optimization, for which we use $\mathcal{D}(\hat{\bm{z}}_0(\bm{z}_t))$). 

We use this technique for CT reconstruction, where the forward operator $\mbf{A}$ is the radon transform and $\mbf{A}^{\top}$ is the non-filtered back projection. However, we can use this conjugate gradient technique for any linear inverse problem where the matrix $\mbf{A}$ is available.

\section{Implementation Details}
\label{sec:hyperparameters}

In this section, we discuss the choices of the hyperparameters used in all of our experiments for our algorithm. All experiments are implemented in \texttt{PyTorch} on NVIDIA GPUs (A100 and A40).
%We will release the code when the review process is completed.
% \subsubsection{Latent-DPS}
% Recall that we used Latent-DPS as a baseline for our experiments. The update steps for Latent-DPS using the DDIM sampling strategy is given by
% \begin{align}
%     \mbf{z}'_{t-1} &= \sqrt{\bar{\alpha}_{t-1}}\hat{\mbf{z}}_0(\mbf{z}_t) + \sqrt{1 - \bar{\alpha}_{t-1} - \eta \delta_t^2} \mbf{s}_{\theta}(\mbf{z}_t, t) + \eta\delta_t \pmb{\epsilon} \\
%     \mbf{z}_{t-1} &= \mbf{z}'_{t-1} - \zeta \nabla_{\mathbf{z}_t} \|\mathbf{y} - \mathcal{A}  (\mathcal{D}(\hat{\mathbf{z}}_0(\mathbf{z}_t))) \|_2^2,
% \end{align}
% where $\zeta$ is a tunable step size. In the original DPS paper by Chung et al.~\cite{dps}, they perform an ablation study on the ``optimal'' choice of $\zeta$, which they found to be a constant step size (generally $\zeta=1.0$ or $\zeta = 0.3$). For Latent-DPS, we observe that these choices often reconstructs images that are not faithful to the given measurements. Rather, we see that the best empirical choice of $\zeta$ is a decaying step size of $\zeta = \bar{\alpha}_t$, where $\bar{\alpha}_t = \prod_{i=1}^t \alpha_i$. It is not surprising that a decaying step size is useful for this particular case, as it a classical optimization technique give less weight to the gradients as we approach a local minimum. However, it is interesting that this specific choice of $\zeta$ returns the best results. We show some example reconstructions of Latent-DPS with different choices of $\zeta$ in Figure~\ref{fig:latent_dps_lr}.

\begin{table}[h!]
\begin{center}
\begin{tabular}{l|l}
% \hline\noalign{\smallskip}
\hline
 \multirow{1}{*}{Notation} & \multicolumn{1}{c}{Definition} \\
\hline
  $\tau$ & Hyperparameter for early stopping in hard data consistency \\
\hline
  $\sigma_t$ & Variance scheduling for the resampling technique\\
\hline
  $T$ & Number of DDIM or DDPM steps \\
\hline
\end{tabular}
\end{center}
\caption{Summary of the hyperparameters with their respective notations for \texttt{ReSample}.}
\label{table:resample_hyperparameters}
\vspace{-8pt}
\end{table} 

\subsection{Linear and Nonlinear Inverse Problems on Natural Images}
For organizational purposes, we tabulate all of the hyperparameters associated to ReSample in Table~\ref{table:resample_hyperparameters}. The parameter for the number of times to perform hard data consistency is not included in the table, as we did not have any explicit notation for it.
For experiments across all natural images datasets (LSUN-Bedroom, FFHQ, CelebA-HQ), we used the same hyperparameters as they seemed to all empirically give the best results.

For $T$, we used $T=500$ DDIM steps. For hard data consistency, we first split $T$ into three even sub-intervals. The first stage refers to the sub-interval closest to $t=T$ and the third stage refers to the interval closest to $t=0$. During the second stage, we performed pixel optimization, whereas in the third stage we performed latent optimization for hard data consistency as described in Section~\ref{sec:hard_data_discussion}. 
We performed this optimization on every $10$ iterations of $t$.

We set $\tau=10^{-4}$, which seemed to give us the best results for noisy inverse problems, with a maximum number of iterations of $2000$ for pixel optimization and $500$ for latent optimization (whichever convergence criteria came first).
For the variance hyperparameter $\sigma_t$ in the stochastic resample step, we chose an adaptive schedule of
$$\sigma_t^2 = \gamma \left(\frac{1-\bar{\alpha}_{t-1}}{\bar{\alpha}_{t}}\right) \left(1 - \frac{\bar{\alpha}_t}{\bar{\alpha}_{t-1}} \right),$$
as discussed in Section~\ref{sec:additional_results}. Generally, we see that $\gamma=40$ returns the best results for experiments on natural images. 

%For nonlinear inverse problems, we set $\tau$ be the estimated noise level, which is 1e-4 to prevent overfitting and keep other hyperparameters the same. 

\subsection{Linear Inverse Problems on Medical Images}
Since the LDMs for medical images is not as readily available as compared to natural images, we largely had to fine-tune existing models. We discuss these in more detail in this section.

\textbf{Backbone Models.} For the backbone latent diffusion model, we use the pre-trained model from latent diffusion~\cite{stablediffusion}. We select the VQ-4 autoencoder and the FFHQ-LDM with CelebA-LDM as our backbone LDM.% We discussed how we used Rombach et al.'s code in the main text, so I think we can omit this
For inferencing, upon taking pre-trained checkpoints provided by~\cite{stablediffusion}, we fine-tuned the models on 2000 CT images with 100K iterations and a learning rate of $10^{-5}$.
%The total timesteps are 1000 in our case.  

\paragraph{Inferencing.}
For $T$, we used a total of $T=1000$ DDIM steps. For hard data consistency, we split $T$ into three sub-intervals:  $t>750$, $300 <t \leq 750$, and $t \leq 300$. During the second stage, we performed pixel optimization by using conjugate gradients as discussed previously, with 50 iterations with $\kappa = 0.9$. In the third stage, we performed latent optimization with $\tau$ as the estimated noise level $\tau = 10^{-4}$. 
We set skip step size to be $10$ and $\gamma = 40$, with $\sigma_t$ as the same as the experiments for the natural images.

\subsection{Implementations of Baselines}
\paragraph{Latent-DPS.}
% Placeholder
For the Latent-DPS baseline,  we use $T=1000$ DDIM steps for CT reconstruction and $T=500$ DDIM steps for natural image experiments. Let $\zeta_t$ denote the learning rate. For medical images, we use $\zeta_t = 2.5\bar{\alpha}_t$ and $\zeta_t = 0.5\bar{\alpha}_t$ for natural images. Empirically, we observe that our proposed $\zeta_t$ step size schedule gives the best performance and is robust to scale change, as previously discussed.

%We set $\zeta_t = 2.5\bar{\alpha}_t/\|\mbf{y}-\mathcal{A}(\mathcal{D}(\mbf{z}))\|_2$ for CT reconstruction and $\zeta_t = \bar{\alpha}_t/\|\mbf{y}-\mathcal{A}(\mathcal{D}(\mbf{z}))\|_2$ for nonlinear and linear inverse problems on natural images, which were tuned based on images in the validation set. Empirically, we observe that our proposed $\zeta_t$ step size schedule gives the best performance and is robust to scale change.
% There is some weird notation up there so I will write a bit more clearly.

\paragraph{DPS and MCG.}
For DPS, we use the original DPS codebase provided by~\cite{dps} and pre-trained models trained on CelebA and FFHQ training sets for natural images. For medical images, we use the pretrained checkpoint from \cite{mcg} on the dataset provided by \cite{ldct}. For MCG~\cite{mcg}, we modified the MCG codebase by deleting the projection term tuning the gradient term for running DPS experiments on CT reconstruction. Otherwise, we directly used the codes provided by~\cite{mcg} for both natural and medical image experiments. 

\paragraph{DDRM.} For DDRM, we follow the original code provided by \cite{ddrm} withDDPM models trained on FFHQ and CelebA training sets adopted from the repository provided by \cite{beatgan}. We use the default parameters as displayed by \cite{ddrm}. 

\paragraph{DMPS.}

We follow the original code from the repository of \cite{meng2022diffusion} with the DDPM models trained on FFHQ and CelebA training sets adopted from the repository of \cite{beatgan}. We use the default parameters as displaye  d by \cite{meng2022diffusion}. 

\paragraph{PSLD-LDM.}

We follow the original code from the repository provided by \cite{rout2023solving} with the pretrained LDMs on CelebA and FFHQ datasets provided by \cite{stablediffusion}. We use the default hyperparameters as implied in \cite{rout2023solving}. For some tasks, the hyperparameters were tuned by grid search, and the results according to the optimal hyperparameters are reported.

\paragraph{ADMM-PnP and Other (Supervised) Baselines.}

For ADMM-PnP we use the 10 iterations with $\tau$ tuned for different inverse problems. We use $\tau = 5$ for CT reconstruction, $\tau = 0.1$ for linear inverse problems, and $\tau = 0.075$ for nonlinear deblurring. We use the pre-trained model from original DnCNN repository provided by~\cite{dncnn}. We observe that ADMM-PnP tends to over-smooth the images with more iterations, which causes performance degradation.
For FBP-UNet, we trained a UNet that maps FBP images to ground truth images. The UNet network architecture is the same as the one explained by~\cite{unet}.

\clearpage

\section{More Discussion on the Failure Cases of Latent-DPS}
\label{sec:failure_dps}

In this section, we provide a further explanation to which why Latent-DPS often fails to give accurate reconstructions. 

\subsection{Failure of Measurement Consistency}
Previously, we claimed that by using Latent-DPS, it is likely that we converge to a local minimum of the function $\|\mbf{y} - \mathcal{A}(\mathcal{D}(\hat{\bm{z}}_0))\|^2_2$ and hence cannot achieve accurate measurement consistency. Here, we validate this claim by comparing the measurement consistency loss between Latent-DPS and ReSample. We observe that ReSample is able to achieve better measurement consistency than Latent-DPS. This observation validates our motivation of using hard data consistency to improve reconstruction quality. 

\begin{table}[h!]
    \centering
    \begin{tabular}{|c | c|c|c|c}
    \hline
     Anatomical Site & ReSample (Ours) & Latent-DPS  \\
    \hline
    Chest & \textbf{5.36e-5} & 2.99e-3   \\
    Abdominal & \textbf{6.28e-5} & 3.92e-3 \\
    Head & \textbf{1.73e-5} & 2.03e-3 \\
    \hline
    \end{tabular}
    
    \caption{Comparison of average measurement consistency loss for CT reconstruction between ReSample and Latent-DPS}
    \label{table:skip_step_time}
\end{table}

\subsection{Failure of Linear Manifold Assumption}

We hypothesize that one reason that Latent-DPS fails to give accurate reconstructions could due to the nonlinearity of the decoder $\mathcal{D}(\cdot)$. More specifically, the derivation of DPS provided by~\cite{dps}
relies on a \emph{linear manifold assumption}. For our case, since our forward model can be viewed as the form $\mathcal{A}(\mathcal{D}(\cdot))$, where the decoder $\mathcal{D}(\cdot)$ is a highly nonlinear neural network, this linear manifold assumption fails to hold. For example, if two images $\bm{z}^{(1)}$ and $\bm{z}^{(2)}$ lie on the clean data distribution manifold $\mathcal{M}$ at $t = 0$, a linear combination of them $a\bm{z}^{(1)}+ b\bm{z}^{(2)}$ for some constants $a$ and $b$, may not belong to $\mathcal{M}$ since $\mathcal{D}(a\bm{z}^{(1)}+ b\bm{z}^{(2)})$ may not give us an image that is realistic. Thus, in practice, we observe that DPS reconstructions tend to be more blurry, which implies that the reverse sampling path falls out of this data manifold. We demonstrate this in Figure~\ref{fig:violatedDPS}, where we show that the average of two latent vectors gives a blurry and unrealistic image. 

We would like to point out that this reasoning may also explain why our algorithm outperforms DPS on nonlinear inverse tasks.

\begin{figure}[h!]
\centering
\includegraphics[width=0.6\textwidth]{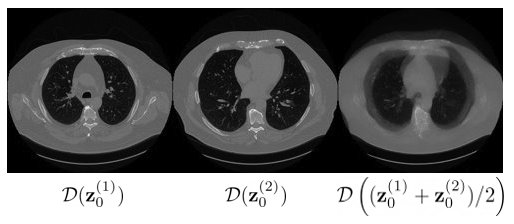}
\caption{Depiction of the violation of the linear manifold assumption.}
\label{fig:violatedDPS}
\end{figure}

\subsection{Inaccurate Estimates of the Posterior Mean}

In this section, we discuss how inaccurate estimates of $\hat{\bm{z}}_0(\bm{z}_t)$ (i.e., the posterior mean via Tweedie's formula) can be one of the reasons why Latent-DPS returns image reconstructions that are noisy. This was mainly because for values of $t$ closer to $t=T$, the estimate of $\hat{\bm{z}}_0(\bm{z}_t)$ may be inaccurate, leading us to images that are noisy at $t=0$.
Generally, we find that the estimation of $\hat{\bm{z}}_0(\bm{z}_t)$ is inaccurate in the early timesteps (e.g. $t > 0.5T$) and vary a lot when $t$ decreases. This implies that the gradient update may not point to a consistent direction when $t$ is large. We further demonstrate this observation in Figure~\ref{fig:x0inaccurate}.
\begin{figure}[h!]
\centering
\includegraphics[width=0.3\textwidth]{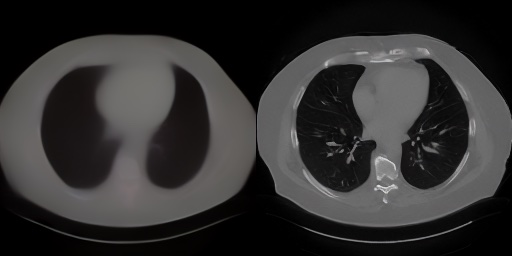}
\caption{Comparison of the prediction of the ground truth signal $\hat{\bm{z}}_0(\bm{z}_t)$ for different values of $t$. Left: $\hat{\bm{z}}_0(\bm{z}_t)$ when $t = 0.5T$. Right: $\hat{\bm{z}}_0(\bm{z}_t)$ when $t = 0$. This serves to show the estimation error of the posterior mean for large values of $t$ (i.e. when $t$ is closer to pure noise).
% \smk{Is the caption here mislabeled? Because it looks like the estimates of the ground truth signal is \textbf{better} at higher values of t (e.g. t=T) which is not correct, at least intuitively.}
}
\label{fig:x0inaccurate}
\end{figure}

\section{Related Works}
Deep neural networks have been extensively employed as priors for solving inverse problems \cite{fbpunet, csgm, automap, cnn_inverse}. Numerous works focus on learning the mapping between measurements and clean images, which we term as \textbf{supervised} methods \cite{fbpunet, automap, sin4cprn, swinir}. Supervised methods necessitate training on pairs of measurements and clean images, requiring model retraining for each new task. On the other hand, another line of research aims at learning the prior distribution of ground truth images, solving inverse problems at inference time without retraining by using the pre-trained prior distributions \cite{csgm, robust_csgm, iagan, dip}. We categorize this as \textbf{unsupervised} methods.

Until the advent of diffusion models, unsupervised methods struggled to achieve satisfactory reconstruction quality compared to supervised methods \cite{robust_csgm}. These methods rely on optimization within a constrained space \cite{csgm}, while constructing a space that accurately encodes the ground truth data distribution to solve the optimization problem is challenging. However, with the accurate approximation of the prior distribution now provided by diffusion models, unsupervised methods can outperform supervised methods for solving inverse problems more efficiently without retraining the model \cite{song2021scorebased, robust_csgm}.

For unsupervised approaches using diffusion models as priors, the plug-and-play approach has been widely applied for solving linear inverse problems \cite{song2021scorebased, ddrm, ddnm}. These methods inject the measurement on the noisy manifold or incorporate an estimate of the ground truth image into the reverse sampling procedure. 
We refer to these methods as \textbf{hard} data consistency approaches, as they directly inject the measurements into the reverse sampling process.
However, to the best of our knowledge, only a few of these methods extend to nonlinear inverse problems or even latent diffusion models \cite{stablediffusion}. Other approaches focus on approximating the conditional score \cite{beatgan, dps, mcg} under some mild assumptions using gradient methods. While these methods can achieve excellent performance and be extended to nonlinear inverse problems, measurement consistency may be compromised, as demonstrated in our paper. Since these methods apply data consistency only through a gradient in the reverse sampling process, we refer to these methods as \textbf{soft} data consistency approaches.

Recently, there has been a growing interest in modeling the data distribution by diffusion models in a latent space or a constraint domain \cite{mirror, stablediffusion, reflect, constraint}. There has also been a growing interest in solving inverse problems using latent diffusion models. In particular, \cite{rout2023solving} proposed PSLD, an unsupervised soft approach that adds a gradient term at each reverse sampling step to solve linear inverse problems with LDMs. However, there are works that observe that this approach may suffer from instability due to estimating the gradient term~\cite {prompttuning}. Our work proposes an unsupervised hard approach that involves ``resampling'' and enforcing hard data consistency for solving general inverse problems (both linear and nonlinear) using latent diffusion models. By using a hard data consistency approach, we can obtain much better reconstructions, highlighting the effectiveness of our algorithm.

% \section{Deferred Proofs}

% To make this section self-contained, we will first restate all of our results.

\newtheorem{manualtheoreminner}{\text{\textbf{Theorem}}}
\newenvironment{manualtheorem}[1]{%
  \renewcommand\themanualtheoreminner{#1}%
  \manualtheoreminner
}{\endmanualtheoreminner}

\newtheorem{manuallemmainner}{\text{\textbf{Lemma}}}
\newenvironment{manuallemma}[1]{%
  \renewcommand\themanuallemmainner{#1}%
  \manuallemmainner
}{\endmanuallemmainner}

\newtheorem{manualpropositioninner}{\text{\textbf{Proposition}}}
\newenvironment{manualproposition}[1]{%
  \renewcommand\themanualpropositioninner{#1}%
  \manualpropositionminner
}{\endmanualpropositioninner}

\clearpage

\section{Deferred Proofs for ReSample}
\label{sec:deferred_proofs}

Here, we provide the proofs regarding the theory behind our resampling technique. To make this section self-contained, we first restate all of our results and discuss notation that will be used throughout this section. 

\paragraph{Notation.} We denote scalars with under-case letters (e.g. $\alpha$) and vectors with bold under-case letters (e.g. $\bm{x}$). Recall that in the main body of the paper, $\bm{z} \in \mbb{R}^k$ denotes a sample in the latent space, $\bm{z}'_t$ denotes an unconditional sample at time step $t$, $\hat{\bm{z}}_0(\bm{z}_t)$ denotes a prediction of the ground truth signal $\bm{z}_0$ at time step $t$, $\hat{\bm{z}}_0(\bm{y})$ denotes the measurement-consistent sample of $\hat{\bm{z}}_0(\bm{z}_t)$ using hard data consistency, and $\hat{\bm{z}}_t$ denotes the re-mapped sample from $\hat{\bm{z}}_0(\bm{y})$ onto the data manifold at time step $t$. We use $\hat{\bm{z}}_t$ as the next sample to resume the reverse diffusion process. \\

\begin{manualpropositioninner}[Stochastic Encoding]
%\label{prop:stochastic_enc}
%For sampling $\hat{\bm{z}}_{t}$ given $\hat{\bm{z}}_0(\bm{y})$ and $\bm{y}$, it is conditionally independent of $\bm{y}$. So we can use a simple Gaussian distribution to sample $\hat{\bm{z}}_{t}$
Since the sample $\hat{\bm{z}}_t$ given $\hat{\bm{z}}_0(\bm{y})$ and measurement $\bm{y}$ is conditionally independent of $\bm{y}$, we have that
    \begin{align}
        p(\hat{\bm{z}}_{t} |  \hat{\bm{z}}_0(\bm{y}), \bm{y}) = p(\hat{\bm{z}}_{t} |  \hat{\bm{z}}_0(\bm{y})) = \mathcal{N}(\sqrt{\bar{\alpha}_t} \hat{\bm{z}}_0(\bm{y}), (1 -\bar{\alpha}_{t})\bm{I}).
    \end{align}
\end{manualpropositioninner}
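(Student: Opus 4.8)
The plan is to recognize the stochastic encoding as nothing more than the standard forward diffusion (VP-SDE) marginal applied with the measurement-consistent estimate $\hat{\bm{z}}_0(\bm{y})$ playing the role of a clean latent $\bm{z}_0$, combined with a short Markov-chain argument that justifies dropping $\bm{y}$ from the conditioning. The Gaussian computation itself is immediate once the construction is written down explicitly; the only content is in identifying the right probabilistic structure.

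First I would make the construction of $\hat{\bm{z}}_t$ explicit. By definition, stochastic encoding produces the remapped sample by pushing $\hat{\bm{z}}_0(\bm{y})$ through the forward noising kernel at time $t$, i.e.
$$\hat{\bm{z}}_t = \sqrt{\bar{\alpha}_t}\,\hat{\bm{z}}_0(\bm{y}) + \sqrt{1-\bar{\alpha}_t}\,\bm{\epsilon}, \qquad \bm{\epsilon} \sim \mathcal{N}(\bm{0}, \bm{I}),$$
where $\bm{\epsilon}$ is drawn independently of both $\hat{\bm{z}}_0(\bm{y})$ and $\bm{y}$. This identity is inherited from the VP-SDE forward process of Equation~(\ref{eq:forward_sde}), whose marginal at time $t$ is the Gaussian kernel $p(\bm{z}_t \mid \bm{z}_0) = \mathcal{N}(\sqrt{\bar{\alpha}_t}\,\bm{z}_0, (1-\bar{\alpha}_t)\bm{I})$; the only modification is substituting $\hat{\bm{z}}_0(\bm{y})$ for $\bm{z}_0$.

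Second, for the conditional independence I would observe that the measurement $\bm{y}$ enters the encoding pipeline \emph{only} through the optimization that produces $\hat{\bm{z}}_0(\bm{y})$, so the variables form a Markov chain $\bm{y} \to \hat{\bm{z}}_0(\bm{y}) \to \hat{\bm{z}}_t$. Concretely, once $\hat{\bm{z}}_0(\bm{y})$ is fixed, the displayed update depends on $\bm{y}$ through no other channel, and the injected noise $\bm{\epsilon}$ is independent of $\bm{y}$; hence $\hat{\bm{z}}_t \perp \bm{y} \mid \hat{\bm{z}}_0(\bm{y})$. This is exactly the claimed identity $p(\hat{\bm{z}}_t \mid \hat{\bm{z}}_0(\bm{y}), \bm{y}) = p(\hat{\bm{z}}_t \mid \hat{\bm{z}}_0(\bm{y}))$. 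Reading the mean and covariance off the affine-plus-Gaussian expression then yields mean $\sqrt{\bar{\alpha}_t}\,\hat{\bm{z}}_0(\bm{y})$ and covariance $(1-\bar{\alpha}_t)\bm{I}$, completing the proof.

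The one point demanding care — and the step I would flag as the main obstacle — is justifying the Markov structure rigorously, namely arguing that the remap is a \emph{fixed}, measurement-independent noising kernel applied to $\hat{\bm{z}}_0(\bm{y})$, rather than a procedure that secretly re-injects $\bm{y}$. I would settle this by appealing to the definition of the stochastic encoding step in the algorithm, whose only input is $\hat{\bm{z}}_0(\bm{y})$ together with fresh Gaussian noise, so that all dependence on $\bm{y}$ is mediated by $\hat{\bm{z}}_0(\bm{y})$.
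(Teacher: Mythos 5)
Your proposal is correct and matches the paper's argument in substance: both rest on the DDPM forward kernel $p(\hat{\bm{z}}_t \mid \hat{\bm{z}}_0(\bm{y})) = \mathcal{N}(\sqrt{\bar{\alpha}_t}\,\hat{\bm{z}}_0(\bm{y}), (1-\bar{\alpha}_t)\bm{I})$ plus a conditional-independence step; the paper phrases that step as $p(\bm{y} \mid \hat{\bm{z}}_0(\bm{y}), \hat{\bm{z}}_t) = p(\bm{y} \mid \hat{\bm{z}}_0(\bm{y}))$ and runs it through Bayes' rule, whereas you state the equivalent Markov chain $\bm{y} \to \hat{\bm{z}}_0(\bm{y}) \to \hat{\bm{z}}_t$ directly. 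This is a cosmetic difference, not a different route.
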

\begin{proof}

By Tweedie's formula, we have that  $\hat{\bm{z}}_0(\bm{y})$ is the estimated mean of the ground truth signal $\bm{z}_0$. By the forward process of the DDPM formulation~\cite{ddpm}, we also have that $$p(\hat{\bm{z}}_t | \hat{\bm{z}}_0(\bm{y})) = \mathcal{N}(\sqrt{\bar{\alpha}_t} \hat{\bm{z}}_0(\bm{y}), (1 - \bar{\alpha}_t)\bm{I}).$$
Then, since $\bm{y}$ is a measurement of $\bm{z}_0$ at $t = 0$, we have $p(\bm{y} | \hat{\bm{z}}_0(\bm{y}), \hat{\bm{z}}_t) = p(\bm{y} | \hat{\bm{z}}_0(\bm{y}))$. Finally, we get
\begin{align}
    p(\hat{\bm{z}}_{t} |  \hat{\bm{z}}_0(\bm{y}), \bm{y}) &= \frac{p(\bm{y}|\hat{\bm{z}}_{t}, \hat{\bm{z}}_0(\bm{y})) p(\hat{\bm{z}}_{t}|\hat{\bm{z}}_0(\bm{y}))p(\hat{\bm{z}}_0(\bm{y}))}{p(\bm{y},\hat{\bm{z}}_0(\bm{y}))} \\
    &= \frac{p(\bm{y},\hat{\bm{z}}_0(\bm{y}))p(\hat{\bm{z}}_{t}|\hat{\bm{z}}_0(\bm{y}))}{p(\bm{y},\hat{\bm{z}}_0(\bm{y}))} \\ 
    &= p(\hat{\bm{z}}_{t}|\hat{\bm{z}}_0(\bm{y})).
\end{align}

\end{proof}

\begin{manualpropositioninner}
    [Stochastic Resampling]
    Suppose that $p(\bm{z}'_t| \hat{\bm{z}}_t, \hat{\bm{z}}_0(\bm{y}), \bm{y})$ is normally distributed such that  $p(\bm{z}'_t| \hat{\bm{z}}_t, \hat{\bm{z}}_0(\bm{y}), \bm{y}) = \mathcal{N}(\mathbf{ \mu }_t, \sigma_t^2)$. If we let $p(\hat{\bm{z}_t} | \hat{\bm{z}}_0(\bm{y}), \bm{y})$ be a prior for $\bm{\mu}_t$, then the posterior distribution $p(\hat{\bm{z}}_t | \bm{z}'_t, \hat{\bm{z}}_0(\bm{y}), \bm{y})$ is given by 
    \begin{align}
    p(\hat{\bm{z}}_t | \bm{z}'_t, \hat{\bm{z}}_0(\bm{y}), \bm{y}) = \mathcal{N}\left( \frac{ \sigma_{t}^2 \sqrt{\bar{\alpha}_t}\hat{\bm{z}}_0(\bm{y}) + (1-\bar{\alpha}_t)\bm{z}'_{t} }{ \sigma_{t}^2 + (1 - \bar{\alpha}_t) }, \frac{\sigma_{t}^2 (1-\bar{\alpha}_t)}{\sigma_{t}^2 + (1 - \bar{\alpha}_{t})}\bm{I}  \right).
\end{align} 
\end{manualpropositioninner}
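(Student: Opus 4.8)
The plan is to treat this as a standard conjugate Gaussian computation, where $\hat{\bm{z}}_t$ (identified with the mean parameter $\bm{\mu}_t$) is the unknown quantity, $\bm{z}'_t$ is the observation, and the stochastic encoding distribution of Proposition~\ref{prop:stochastic_enc} supplies the prior. Concretely, I would first invoke Bayes' rule in the form
$$p(\hat{\bm{z}}_t \mid \bm{z}'_t, \hat{\bm{z}}_0(\bm{y}), \bm{y}) \propto p(\bm{z}'_t \mid \hat{\bm{z}}_t, \hat{\bm{z}}_0(\bm{y}), \bm{y})\, p(\hat{\bm{z}}_t \mid \hat{\bm{z}}_0(\bm{y}), \bm{y}),$$
where the proportionality is in $\hat{\bm{z}}_t$ (the normalizing constant depends only on $\bm{z}'_t$, $\hat{\bm{z}}_0(\bm{y})$, $\bm{y}$). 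The two factors on the right are both Gaussian: the likelihood is the hypothesized $\mathcal{N}(\bm{\mu}_t,\sigma_t^2\bm{I})$ with $\bm{\mu}_t=\hat{\bm{z}}_t$, and the prior is exactly the stochastic encoding law $\mathcal{N}(\sqrt{\bar{\alpha}_t}\hat{\bm{z}}_0(\bm{y}),(1-\bar{\alpha}_t)\bm{I})$ established in Proposition~\ref{prop:stochastic_enc}.

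Second, I would write out the product of the two Gaussian densities and collect the terms that are quadratic and linear in $\hat{\bm{z}}_t$ inside the exponent. The observation that makes this a clean conjugate update is that the likelihood, viewed as a function of $\hat{\bm{z}}_t$, is Gaussian centered at $\bm{z}'_t$ with covariance $\sigma_t^2\bm{I}$, by the symmetry of the Gaussian kernel in $\|\bm{z}'_t-\hat{\bm{z}}_t\|^2$. Completing the square in $\hat{\bm{z}}_t$ then shows the posterior precision is the sum of the two precisions, $(1-\bar{\alpha}_t)^{-1}\bm{I}+\sigma_t^{-2}\bm{I}$, so the posterior covariance is $\frac{\sigma_t^2(1-\bar{\alpha}_t)}{\sigma_t^2+(1-\bar{\alpha}_t)}\bm{I}$, and the posterior mean is the precision-weighted average of the prior mean $\sqrt{\bar{\alpha}_t}\hat{\bm{z}}_0(\bm{y})$ and the observation $\bm{z}'_t$.

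Third, I would simplify that precision-weighted mean algebraically to match the claimed numerator $\sigma_t^2\sqrt{\bar{\alpha}_t}\hat{\bm{z}}_0(\bm{y})+(1-\bar{\alpha}_t)\bm{z}'_t$ over the denominator $\sigma_t^2+(1-\bar{\alpha}_t)$. This step is routine once the square is completed; the isotropic (scalar times $\bm{I}$) structure of both covariances means no matrix inversion beyond reciprocals is needed, and the conditioning on $\hat{\bm{z}}_0(\bm{y})$ and $\bm{y}$ is inert throughout since it enters only through the prior mean.

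I expect the main obstacle to be conceptual rather than computational: namely, justifying the modeling step of treating $p(\bm{z}'_t\mid\hat{\bm{z}}_t,\dots)$ as a likelihood whose mean $\bm{\mu}_t$ is identified with $\hat{\bm{z}}_t$, and the stochastic encoding law as the conjugate prior on that mean. Once this Bayesian framing is fixed, the remainder is precisely the classical Gaussian–Gaussian posterior formula, so the bulk of the work is verifying that the hypotheses of the proposition encode exactly this conjugate setup.
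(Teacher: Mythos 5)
Your proposal is correct and follows essentially the same route as the paper: Bayes' rule with the stochastic encoding law of Proposition~\ref{prop:stochastic_enc} as the conjugate Gaussian prior on $\bm{\mu}_t=\hat{\bm{z}}_t$, the hypothesized $\mathcal{N}(\bm{\mu}_t,\sigma_t^2\bm{I})$ as the likelihood evaluated at $\bm{z}'_t$, and completion of the square to read off the precision-weighted mean and the harmonic-sum covariance. The paper's proof is exactly this computation, so no further comparison is needed.
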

\begin{proof}
We have that
\begin{align}
p(\hat{\bm{z}}_t | \bm{z}'_t, \hat{\bm{z}}_0, \bm{y}) = \frac{p(\bm{z}'_t|\hat{\bm{z}}_t,\hat{\bm{z}}_0, \bm{y})p(\hat{\bm{z}_t} | \hat{\bm{z}}_0, \bm{y})p(\hat{\bm{z}}_0, \bm{y})}{p(\bm{z}'_t, \hat{\bm{z}}_0, \bm{y})},
\end{align}
where both $p(\bm{z}'_t, \hat{\bm{z}}_0, \bm{y})$ and $(\hat{\bm{z}}_0, \bm{y})$ are normalizing constants. Then by Lemma~\ref{prop:stochastic_enc}, we have $p(\hat{\bm{z}_t} | \hat{\bm{z}}_0, \bm{y}) = \mathcal{N}(\sqrt{\bar{\alpha}_t}\hat{\bm{z}}_0, (1 - \bar{\alpha}_t)\bm{I})$. Now, we can compute the posterior distribution: 
\begin{align}
p(\hat{\bm{z}}_t = \bm{k} | \bm{z}'_t, \hat{\bm{z}}_0, \bm{y}) &\propto p(\bm{z}'_t| \hat{\bm{z}}_t = \bm{k}, \hat{\bm{z}}_0, \bm{y}) p(\hat{\bm{z}}_t
= \bm{k}| \hat{\bm{z}}_0, \bm{y})\\
&\propto \exp{ \left\{- \left(\frac{(\bm{k} - \bm{z}'_t)^2}{\sigma_t^2} \right) + \left(\frac{(\bm{k} - \sqrt{\bar{\alpha}_t}\hat{\bm{z}}_0)^2}{1 - \bar{\alpha}_t} \right) \right\} } \\
%\exp{(-((\frac{(\bm{k} - \bm{z}'_t)^2}{\sigma_t^2}) + (\frac{(\bm{k} - \sqrt{\bar{\alpha}_t}\hat{\bm{z}}_0)^2}{1 - \bar{\alpha}_t})))} 
&\propto \exp{ \left\{ \frac{-\left(\bm{k} - \frac{(1 - \bar{\alpha}_t)\bm{z}'_t + \sqrt{\bar{\alpha}_t}\hat{\bm{z}}_0 \sigma_t^2}{\sigma_t^2 + (1 - \bar{\alpha}_t)}\right)^2}{\frac{1}{\sigma^2_t} + \frac{1}{1 - \bar{\alpha}_t}} \right\} }.
\end{align}
This as a Gaussian distribution, which can be easily shown using moment-generating functions~\cite{gaussian_fact}.
\end{proof}

\begin{manualtheoreminner}
If ${\hat{\bm{z}}}_0(\bm{y})$ is measurement-consistent such that $\bm{y} = \mathcal{A}(\mathcal{D}({\hat{\bm{z}}}_0(\bm{y})))$, i.e. $\hat{\bm{z}}_0= \hat{\bm{z}}_0(\bm{z}_t) = \hat{\bm{z}}_0(\bm{y})$, then stochastic resample is unbiased such that $\mathbb{E}[\hat{\bm{z}}_t |  \bm{y}] = \mathbb{E}[\bm{z}'_t]$.
\end{manualtheoreminner}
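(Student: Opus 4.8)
The plan is to start from the closed-form posterior established in the Stochastic Resampling proposition, push expectations through it via the law of total expectation, and then use the measurement-consistency hypothesis to collapse the resulting convex combination. First I would read off from the Stochastic Resampling result that, conditional on $\bm{z}'_t$, $\hat{\bm{z}}_0(\bm{y})$ and $\bm{y}$, the resampled sample $\hat{\bm{z}}_t$ is Gaussian with mean
\begin{align*}
\mathbb{E}[\hat{\bm{z}}_t \mid \bm{z}'_t, \hat{\bm{z}}_0(\bm{y}), \bm{y}] = \frac{\sigma_t^2\sqrt{\bar{\alpha}_t}\,\hat{\bm{z}}_0(\bm{y}) + (1-\bar{\alpha}_t)\,\bm{z}'_t}{\sigma_t^2 + (1-\bar{\alpha}_t)}.
\end{align*}
Applying the tower property over the randomness in $\bm{z}'_t$ and $\hat{\bm{z}}_0(\bm{y})$, and noting that $\bm{z}'_t$ is produced by the unconditional DDIM step and is therefore independent of $\bm{y}$ (so $\mathbb{E}[\bm{z}'_t \mid \bm{y}] = \mathbb{E}[\bm{z}'_t]$), expresses $\mathbb{E}[\hat{\bm{z}}_t \mid \bm{y}]$ as a fixed convex combination of $\sqrt{\bar{\alpha}_t}\,\mathbb{E}[\hat{\bm{z}}_0(\bm{y})]$ and $\mathbb{E}[\bm{z}'_t]$, with weights $\sigma_t^2/(\sigma_t^2 + (1-\bar{\alpha}_t))$ and $(1-\bar{\alpha}_t)/(\sigma_t^2 + (1-\bar{\alpha}_t))$ summing to one.

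The crux is then to show that the two quantities being combined are in fact equal, i.e. $\sqrt{\bar{\alpha}_t}\,\mathbb{E}[\hat{\bm{z}}_0(\bm{y})] = \mathbb{E}[\bm{z}'_t]$. Here I would invoke the measurement-consistency hypothesis $\hat{\bm{z}}_0 = \hat{\bm{z}}_0(\bm{z}_t) = \hat{\bm{z}}_0(\bm{y})$ together with Tweedie's formula, which identifies the common value with the posterior mean $\hat{\bm{z}}_0(\bm{z}_t) = \mathbb{E}[\bm{z}_0 \mid \bm{z}_t]$; a further application of the tower property then gives $\mathbb{E}[\hat{\bm{z}}_0(\bm{y})] = \mathbb{E}\big[\mathbb{E}[\bm{z}_0 \mid \bm{z}_t]\big] = \mathbb{E}[\bm{z}_0]$. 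On the other hand, the forward perturbation $\bm{z}_t = \sqrt{\bar{\alpha}_t}\,\bm{z}_0 + \sqrt{1-\bar{\alpha}_t}\,\bm{\epsilon}$ with zero-mean $\bm{\epsilon}$ gives $\mathbb{E}[\bm{z}_t] = \sqrt{\bar{\alpha}_t}\,\mathbb{E}[\bm{z}_0]$, and since the unconditional sample $\bm{z}'_t$ is marginally faithful to this perturbed law we obtain $\mathbb{E}[\bm{z}'_t] = \sqrt{\bar{\alpha}_t}\,\mathbb{E}[\bm{z}_0]$. Chaining these equalities delivers the desired identity $\sqrt{\bar{\alpha}_t}\,\mathbb{E}[\hat{\bm{z}}_0(\bm{y})] = \mathbb{E}[\bm{z}'_t]$.

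Substituting this identity into the convex combination from the first step makes both constituents coincide with $\mathbb{E}[\bm{z}'_t]$, so the weights become immaterial and the combination collapses:
\begin{align*}
\mathbb{E}[\hat{\bm{z}}_t \mid \bm{y}] = \frac{\sigma_t^2 + (1-\bar{\alpha}_t)}{\sigma_t^2 + (1-\bar{\alpha}_t)}\,\mathbb{E}[\bm{z}'_t] = \mathbb{E}[\bm{z}'_t],
\end{align*}
which is exactly the claimed unbiasedness.

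I expect the main obstacle to be the justification of $\mathbb{E}[\bm{z}'_t] = \sqrt{\bar{\alpha}_t}\,\mathbb{E}[\bm{z}_0]$ for the unconditionally sampled $\bm{z}'_t$: the DDIM update carries the predicted-noise term $\sqrt{1-\bar{\alpha}_t - \eta\delta^2}\,\hat{\bm{\epsilon}}_{t+1}$ in addition to the injected Gaussian noise $\eta\delta\,\bm{\epsilon}_1$, and one must argue that this term is mean-zero — equivalently, that the learned score is unbiased and the reverse sample is marginally consistent with the forward-perturbed distribution — for the identity to hold exactly. I would isolate this as the single modeling assumption and otherwise rely only on Tweedie's formula and the tower property, both of which are routine.
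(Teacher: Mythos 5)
Your proposal is correct and follows essentially the same route as the paper: the tower property reduces $\mathbb{E}[\hat{\bm{z}}_t \mid \bm{y}]$ to the convex combination $\gamma\sqrt{\bar{\alpha}_t}\,\mathbb{E}[\hat{\bm{z}}_0] + (1-\gamma)\,\mathbb{E}[\bm{z}'_t]$, and Tweedie's formula then shows the two constituents coincide so that the weights cancel. The only (immaterial) difference is how the key identity $\sqrt{\bar{\alpha}_t}\,\mathbb{E}[\hat{\bm{z}}_0] = \mathbb{E}[\bm{z}'_t]$ is justified: the paper substitutes Tweedie's formula and verifies $\mathbb{E}[\nabla \log p(\bm{z}'_t)] = \int \nabla p = 0$ by direct integration, whereas you route through $\mathbb{E}\bigl[\mathbb{E}[\bm{z}_0 \mid \bm{z}_t]\bigr] = \mathbb{E}[\bm{z}_0]$ and the forward-process mean $\mathbb{E}[\bm{z}_t] = \sqrt{\bar{\alpha}_t}\,\mathbb{E}[\bm{z}_0]$ --- equivalent facts, and the marginal-faithfulness caveat you isolate at the end is implicitly assumed in the paper's argument as well.
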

%\begin{proof}(Sketch)
% $\mathbb{E}[\hat{\bm{z_t}} | \bm{y}] = \mathbb{E}_{\bm{z}'_t}[\mathbb{E}_{\hat{\bm{z}}_0}[\mathbb{E}_{\hat{\bm{z}}_t}[\hat{\bm{z_t}} | \bm{z}'_t, \hat{\bm{z}}_0, \bm{y}]]]$, for $\mathbb{E}_{\hat{\bm{z}}_t}[\hat{\bm{z_t}} | \bm{z}'_t, \hat{\bm{z}}_0, \bm{y}]$, we can plug in Lemma.\ref{lem: resample}, and then we have $\mathbb{E}_{\bm{z}'_t}[\mathbb{E}_{\hat{\bm{z}}_0}  [\sqrt{\alpha_t} \hat{\bm{z}}_0 | \bm{z}'_t, \bm{y}]] = \mathbb{E}[\bm{z}'_t] - (1 - \alpha_t)\mathbb{E}_{\bm{z}'_t}[\nabla \log p(\bm{z}'_{t})]$. By chain rule, we have $\mathbb{E}_{\bm{z}'_t}[\nabla \log p(\bm{z}'_{t})] = 0$
%\end{proof} 
\begin{proof}
We have that $\mathbb{E}[\hat{\bm{z}}_t | \bm{y}] = \mathbb{E}_{\bm{z}'_t}[\mathbb{E}_{\hat{\bm{z}}_0(\bm{y})}[\mathbb{E}_{\hat{\bm{z}}_t}[\hat{\bm{z}}_t | \bm{z}'_t, \hat{\bm{z}}_0(\bm{y}), \bm{y}]]]$. 

Let $\gamma = \frac{\sigma_t^2}{\sigma_t^2 + 1 - \bar{\alpha}_t}$. By using Proposition~\ref{lem:resample}, we have $\mathbb{E}[\hat{\bm{z}}_t | \bm{y}] = \mathbb{E}_{\bm{z}'_t}[\mathbb{E}_{\hat{\bm{z}}_0}[(\gamma \sqrt{\bar{\alpha}_t}\hat{\bm{z}}_0 + (1 - \gamma)\bm{z}'_t) | \hat{\bm{z}}_0, \bm{z}'_t, \bm{y}]]$.

Since $\hat{\bm{z}}_0$ is measurement-consistent such that $\bm{y} = \mathcal{A}(\mathcal{D}({\hat{\bm{z}}}_0(\bm{y})))$, let $k = 0$, we have
\begin{align}
\hat{\bm{z}}_0(\bm{y}) = \hat{\bm{z}}_0 = \frac{1}{\sqrt{\bar{\alpha}_{t-k}}} (\bm{z}'_{t-k} - (1 - \bar{\alpha}_{t-k})\nabla \log p(\bm{z}'_{t-k}))    
\end{align}
Then,we have that
\begin{align}
\mathbb{E}[\hat{\bm{z}}_t | \bm{y}] &=
    \mathbb{E}_{\bm{z}'_t}[\mathbb{E}_{\hat{\bm{z}}_0}[(\gamma \sqrt{\bar{\alpha}_t}\hat{\bm{z}}_0 + (1 - \gamma)\bm{z}'_t) | \hat{\bm{z}}_0, \bm{z}'_t, \bm{y}]] \\
    &= \gamma \sqrt{\frac{\bar{\alpha}_t}{\bar{\alpha}_{t-k}}}\mathbb{E}_{\bm{z}'_t}[\mathbb{E}[\bm{z}'_{t-k} - (1 - \bar{\alpha}_{t-k}) \nabla \log p(\bm{z}'_{t-k}) | \bm{z}'_t]] + (1 - \gamma)\mathbb{E}[\bm{z}'_t], 
\end{align}
as both $\bm{z}'_t$ and $\bm{z}'_{t-k}$ are unconditional samples and independent of $\bm{y}$. Now, we have 
\begin{align}
\mathbb{E}_{\bm{z}'_t}[\mathbb{E}[\bm{z}'_{t-k} - (1 - \bar{\alpha}_{t-k}) \log p(\bm{z}'_{t-k}) | \bm{z}'_t]] = \mathbb{E}_{\bm{z}'_{t-k}}[\bm{z}'_{t-k} - (1 - \bar{\alpha}_{t-k}) \nabla \log p(\bm{z}'_{t-k})]    
\end{align}
Since $\bm{z}'_{t-k}$ is the unconditional reverse sample of $\bm{z}'_t$, we have $\mathbb{E}[\bm{z}'_{t}] = \sqrt{\frac{\bar{\alpha}_t}{\bar{\alpha}_{t-k}}} \mathbb{E}[\bm{z}'_{t-k}]$, and then \begin{align}
    \mathbb{E}_{\bm{z}'_{t-k}}[\nabla \log p(\bm{z}'_{t-k})] &= \int \nabla \log p(\bm{z}'_{t-k}) p(\bm{z}'_{t-k}) d\bm{z}'_{t-k} \\
    &= \int \frac{p' (\bm{z}'_{t-k})}{p(\bm{z}'_{t-k})}p(\bm{z}'_{t-k})d\bm{z}'_{t-k} \\
    &= \frac{\partial (1)}{\partial \bm{z}'_{t-k}} = 0 
\end{align}
Finally, we have $\mathbb{E}[\hat{\bm{z}}_t | \bm{y}] = \gamma \mathbb{E}[\bm{z}'_t] + (1 - \gamma) \mathbb{E}[\bm{z}'_t] = \mathbb{E}[\bm{z}'_t]$.
\end{proof}

%\begin{theorem}    
%\label{lem:smallvariance}
%If $\text{VAR}(\bm{z}'_t) > 0$, then the variance of stochastic resampling is smaller than that of stochastic encoding for the same $\hat{\bm{z}}_0(\bm{y})$
%\end{theorem}
\begin{manuallemmainner}
Let $\tilde{\bm{z}}_t$ and $\hat{\bm{z}}_t$ denote the stochastically encoded and resampled image of $\hat{\bm{z}}_0(\bm{y})$, respectively. If $\text{VAR}(\bm{z}'_t) > 0$, then we have that $\text{VAR}(\hat{\bm{z}}_t) < \text{VAR}(\tilde{\bm{z}}_t)$.
\end{manuallemmainner}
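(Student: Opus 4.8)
The plan is to exploit that both the stochastically encoded sample $\tilde{\bm{z}}_t$ and the resampled sample $\hat{\bm{z}}_t$ are isotropic Gaussians, so that comparing $\text{VAR}(\hat{\bm{z}}_t)$ against $\text{VAR}(\tilde{\bm{z}}_t)$ reduces to comparing the two per-coordinate covariance scalars that can simply be read off from Proposition~\ref{prop:stochastic_enc} and Proposition~\ref{lem:resample}. First I would record that $\tilde{\bm{z}}_t \sim \mathcal{N}(\sqrt{\bar{\alpha}_t}\hat{\bm{z}}_0(\bm{y}), (1-\bar{\alpha}_t)\bm{I})$, so that $\text{VAR}(\tilde{\bm{z}}_t) = 1-\bar{\alpha}_t$, and that the resampling posterior of Proposition~\ref{lem:resample} has covariance $\tfrac{\sigma_t^2(1-\bar{\alpha}_t)}{\sigma_t^2+(1-\bar{\alpha}_t)}\bm{I}$, so that $\text{VAR}(\hat{\bm{z}}_t) = \tfrac{\sigma_t^2(1-\bar{\alpha}_t)}{\sigma_t^2+(1-\bar{\alpha}_t)}$, where $\sigma_t^2 = \text{VAR}(\bm{z}'_t)$ is exactly the variance parameter of the likelihood $p(\bm{z}'_t \mid \hat{\bm{z}}_t,\hat{\bm{z}}_0(\bm{y}),\bm{y})$ assumed there.

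The key step is to recognize the resampling variance as the Bayesian precision combination of the two sources of information: its reciprocal is $\tfrac{1}{\sigma_t^2}+\tfrac{1}{1-\bar{\alpha}_t}$, the sum of the encoding precision $\tfrac{1}{1-\bar{\alpha}_t}$ and the strictly positive term $\tfrac{1}{\sigma_t^2}$. Equivalently, I would factor $\text{VAR}(\hat{\bm{z}}_t) = (1-\bar{\alpha}_t)\cdot\tfrac{\sigma_t^2}{\sigma_t^2+(1-\bar{\alpha}_t)}$ and observe that the multiplier $\tfrac{\sigma_t^2}{\sigma_t^2+(1-\bar{\alpha}_t)}$ lies strictly in $(0,1)$ precisely because $\text{VAR}(\bm{z}'_t)=\sigma_t^2>0$ and $1-\bar{\alpha}_t>0$ (the latter holding for every $t>0$ since $\bar{\alpha}_t<1$). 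Multiplying $1-\bar{\alpha}_t$ by a factor strictly below one gives $\text{VAR}(\hat{\bm{z}}_t)<1-\bar{\alpha}_t=\text{VAR}(\tilde{\bm{z}}_t)$, which is the claim; strictness is exactly where the hypothesis $\text{VAR}(\bm{z}'_t)>0$ is consumed, as it rules out the degenerate case in which the two variances coincide.

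If instead one prefers to treat $\bm{z}'_t$ as a genuinely random unconditional sample (consistent with the expectation taken in Theorem~\ref{thm:unbias}) rather than as conditioning data, I would hold $\hat{\bm{z}}_0(\bm{y})$ fixed and apply the law of total variance to $\hat{\bm{z}}_t = \tfrac{\sigma_t^2\sqrt{\bar{\alpha}_t}\hat{\bm{z}}_0(\bm{y}) + (1-\bar{\alpha}_t)\bm{z}'_t}{\sigma_t^2+(1-\bar{\alpha}_t)} + \bm{\xi}$, with $\bm{\xi}$ the independent resampling noise. This appends a term $\left(\tfrac{1-\bar{\alpha}_t}{\sigma_t^2+(1-\bar{\alpha}_t)}\right)^2\text{VAR}(\bm{z}'_t)$ to the conditional variance, and the target inequality then collapses, after clearing the positive denominator $(\sigma_t^2+(1-\bar{\alpha}_t))^2$, to the routine estimate $0<\sigma_t^2(1-\bar{\alpha}_t)$.

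The main obstacle is therefore not the algebra but pinning down the modeling convention: I must be explicit about which quantities ($\hat{\bm{z}}_0(\bm{y})$ and $\bm{z}'_t$) are held fixed versus averaged over, and correspondingly whether $\text{VAR}(\bm{z}'_t)$ denotes the likelihood variance $\sigma_t^2$ of Proposition~\ref{lem:resample} or the marginal spread of the unconditional sample, since only with a fixed convention does the hypothesis $\text{VAR}(\bm{z}'_t)>0$ cleanly deliver the strict inequality. I would state this convention at the outset, and then the comparison follows in one or two lines.
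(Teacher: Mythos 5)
Your proposal is correct and follows essentially the same route as the paper: read off $\text{VAR}(\tilde{\bm{z}}_t)=1-\bar{\alpha}_t$ from Proposition~\ref{prop:stochastic_enc} and $\text{VAR}(\hat{\bm{z}}_t)=\sigma_t^2(1-\bar{\alpha}_t)/(\sigma_t^2+(1-\bar{\alpha}_t))$ from Proposition~\ref{lem:resample}, then compare via the precision-sum/shrinkage-factor identity. The only difference is cosmetic: the paper's proof accidentally swaps the labels $\hat{\bm{z}}_t$ and $\tilde{\bm{z}}_t$ when listing the two variances, whereas you assign them consistently with the lemma statement.
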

\begin{proof}
Recall that $\hat{\bm{z}}_t$ and $\tilde{\bm{z}}_t$ are both normally distributed, with
\begin{align}
    \text{VAR}(\hat{\bm{z}}_t) &= 1 - \bar{\alpha}_t \\
    \text{VAR}(\tilde{\bm{z}}_t) &= \frac{1}{\frac{1}{1 - \bar{\alpha}_t} + \frac{1}{\sigma_t^2}} \\
    &= \frac{\sigma_t^2(1-\bar{\alpha}_t)}{\sigma_t^2 + (1 - \bar{\alpha}_t)}.
\end{align}
For all $\sigma_t^2 \geq 0$, we have
\begin{align}
    \frac{\sigma_t^2(1-\bar{\alpha}_t)}{\sigma_t^2 + (1 - \bar{\alpha}_t)} &< 1 - \bar{\alpha}_t \\
    \implies \text{VAR}(\hat{\bm{z}}_t) &< \text{VAR}(\tilde{\bm{z}}_t).
\end{align}

%We have $\frac{1}{1 - \bar{\alpha}_t} + \frac{1}{\sigma_t^2} > \frac{1}{1 - \bar{\alpha}_t}$. 

%Then $\frac{1}{\frac{1}{1 - \bar{\alpha}_t} + \frac{1}{\sigma_t^2}} < \frac{1}{\frac{1}{1 - \bar{\alpha}_t}} = 1 - \bar{\alpha}_t$
\end{proof}

\begin{manualtheoreminner}
     Let $\bm{z}_0$ denote a sample from the data distribution and $\bm{z}_t$ be a sample from the noisy perturbed distribution at time $t$. Given that the score function $\nabla_{\bm{z}_t}\log p_{\bm{z}_t}(\bm{z}_t)$ is bounded,
 \begin{align*}
     \text{Cov}(\bm{z}_0 | \bm{z}_t) = \frac{(1 - \bar{\alpha}_t)^2}{\bar{\alpha}_t}\nabla^2_{\bm{z}_t}\log p_{\bm{z}_t}(\bm{z}_t) + \frac{1 - \bar{\alpha}_t}{\bar{\alpha}_t}\bm{I},
 \end{align*}
 where $\alpha_t \in (0, 1)$ is an decreasing sequence in $t$.
 
\end{manualtheoreminner}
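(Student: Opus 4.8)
The plan is to derive this as a second-order analogue of Tweedie's formula by differentiating the log-density of the noisy marginal $p_{\bm{z}_t}$ twice and matching the result against the conditional moments of $\bm{z}_0$ given $\bm{z}_t$. Throughout I write $a = \sqrt{\bar{\alpha}_t}$ and $\sigma^2 = 1 - \bar{\alpha}_t$, so that the forward process gives the Gaussian likelihood $p(\bm{z}_t \mid \bm{z}_0) = \mathcal{N}(a\bm{z}_0, \sigma^2\bm{I})$, and I introduce $\bm{u} := \bm{z}_t - a\bm{z}_0$. Since $\bm{z}_0 = (\bm{z}_t - \bm{u})/a$ is an affine function of $\bm{u}$ once we condition on $\bm{z}_t$, we immediately get $\mathrm{Cov}(\bm{z}_0 \mid \bm{z}_t) = a^{-2}\,\mathrm{Cov}(\bm{u} \mid \bm{z}_t)$, so it suffices to compute the conditional covariance of $\bm{u}$.

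First I would establish the first-order identity. Writing $p_{\bm{z}_t}(\bm{z}_t) = \int p(\bm{z}_t \mid \bm{z}_0)\, p(\bm{z}_0)\,d\bm{z}_0$ and differentiating under the integral sign (justified by the boundedness of the score), the Gaussian identity $\nabla_{\bm{z}_t} p(\bm{z}_t \mid \bm{z}_0) = -\sigma^{-2}\bm{u}\, p(\bm{z}_t\mid\bm{z}_0)$ yields $\nabla_{\bm{z}_t}\log p_{\bm{z}_t}(\bm{z}_t) = -\sigma^{-2}\,\mathbb{E}[\bm{u}\mid\bm{z}_t]$, that is, $\mathbb{E}[\bm{u}\mid\bm{z}_t] = -\sigma^2\nabla\log p_{\bm{z}_t}(\bm{z}_t)$. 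This is equivalent to the Tweedie formula for the posterior mean already stated in the excerpt.

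Next comes the second-order step, which is the crux of the argument. Using the quotient rule $\nabla^2\log p = (\nabla^2 p)/p - (\nabla\log p)(\nabla\log p)^\top$ together with the second Gaussian derivative $\nabla^2_{\bm{z}_t} p(\bm{z}_t\mid\bm{z}_0) = (\sigma^{-4}\bm{u}\bm{u}^\top - \sigma^{-2}\bm{I})\,p(\bm{z}_t\mid\bm{z}_0)$, I would obtain
\begin{align*}
\nabla^2_{\bm{z}_t}\log p_{\bm{z}_t}(\bm{z}_t) = \frac{1}{\sigma^4}\mathbb{E}[\bm{u}\bm{u}^\top\mid\bm{z}_t] - \frac{1}{\sigma^2}\bm{I} - (\nabla\log p_{\bm{z}_t})(\nabla\log p_{\bm{z}_t})^\top.
\end{align*}
Substituting $\mathbb{E}[\bm{u}\bm{u}^\top\mid\bm{z}_t] = \mathrm{Cov}(\bm{u}\mid\bm{z}_t) + \sigma^4(\nabla\log p_{\bm{z}_t})(\nabla\log p_{\bm{z}_t})^\top$ (using the first-order identity for $\mathbb{E}[\bm{u}\mid\bm{z}_t]$) makes the two rank-one score terms cancel exactly, leaving $\nabla^2\log p_{\bm{z}_t} = \sigma^{-4}\mathrm{Cov}(\bm{u}\mid\bm{z}_t) - \sigma^{-2}\bm{I}$.

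Finally I would solve for the covariance, obtaining $\mathrm{Cov}(\bm{u}\mid\bm{z}_t) = \sigma^4\nabla^2\log p_{\bm{z}_t}(\bm{z}_t) + \sigma^2\bm{I}$, and then apply $\mathrm{Cov}(\bm{z}_0\mid\bm{z}_t) = a^{-2}\mathrm{Cov}(\bm{u}\mid\bm{z}_t)$ and re-substitute $a^2 = \bar{\alpha}_t$, $\sigma^2 = 1-\bar{\alpha}_t$ to recover the claimed expression. The main obstacle is the second-order bookkeeping in the displayed equation — ensuring the two rank-one $(\nabla\log p_{\bm{z}_t})(\nabla\log p_{\bm{z}_t})^\top$ contributions (one from the quotient rule, one from expanding $\mathbb{E}[\bm{u}\bm{u}^\top\mid\bm{z}_t]$) cancel cleanly — together with justifying the interchange of differentiation and integration, which is precisely where the boundedness hypothesis on the score is used.
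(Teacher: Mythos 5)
Your proof is correct, and it arrives at the stated identity by a genuinely more self-contained route than the paper. The paper rescales to $\bar{\bm{z}}_t = \frac{\sqrt{\bar{\alpha}_t}}{1-\bar{\alpha}_t}\bm{z}_t$ so that $p(\bar{\bm{z}}_t \mid \bm{z}_0)$ takes the exponential-family form $p_0(\bar{\bm{z}}_t)\exp\{\bar{\bm{z}}_t^\top\bm{z}_0 - \tfrac{\bar{\alpha}_t}{2(1-\bar{\alpha}_t)}\|\bm{z}_0\|^2\}$, then invokes the classical second-order Tweedie identities $\mathbb{E}[\bm{z}_0\mid\bar{\bm{z}}_t]=\nabla\lambda$ and $\mathrm{Cov}(\bm{z}_0\mid\bar{\bm{z}}_t)=\nabla^2\lambda$ with $\lambda=\log p(\bar{\bm{z}}_t)-\log p_0(\bar{\bm{z}}_t)$ as a black box, and finishes with two applications of the chain rule to undo the rescaling. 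You instead derive the covariance identity from first principles: differentiating the marginal under the integral sign, using the Gaussian derivative identities for $p(\bm{z}_t\mid\bm{z}_0)$, and exhibiting explicitly the cancellation of the two rank-one $(\nabla\log p_{\bm{z}_t})(\nabla\log p_{\bm{z}_t})^\top$ terms — which is precisely the content of the exponential-family Tweedie covariance formula that the paper cites. Your reduction $\mathrm{Cov}(\bm{z}_0\mid\bm{z}_t)=\bar{\alpha}_t^{-1}\mathrm{Cov}(\bm{u}\mid\bm{z}_t)$ via the affine relation $\bm{z}_0=(\bm{z}_t-\bm{u})/\sqrt{\bar{\alpha}_t}$ also neatly replaces the paper's change-of-variables bookkeeping. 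What your approach buys is transparency (no appeal to an external Tweedie covariance result, and the cancellation is visible); what the paper's buys is brevity once the classical identity is granted. One minor caveat: boundedness of the score does not by itself license the interchange of differentiation and integration — that really rests on the smoothness and decay of the Gaussian kernel together with integrability of $p(\bm{z}_0)$ — but the paper is no more careful on this point, so this is not a gap relative to the reference proof.
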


\begin{proof}
    By the LDM forward process, we have 
    \begin{align*}
        p(\bm{z}_t | \bm{z}_0) = \mathcal{N}( \sqrt{\bar{\alpha}_t}, 1 - \bar{\alpha}_t).
    \end{align*}
    Consider the following variable: $\hat{\bm{z}}_t = \frac{\sqrt{\bar{\alpha}_t}}{1 - \bar{\alpha}_t}\bm{z}_t$, which is simply a scaled version of $\bm{z}_t$. Then, consider the distribution  
    \begin{align*}
    p(\bar{\bm{z}}_t | \bm{z}_0) = \frac{1}{\left(2\pi \left(\frac{\bar{\alpha}_t}{1-\bar{\alpha}_t}\right)\right)^{d/2}}\exp\left\{-\frac{\|\bar{\bm{z}}_t - \frac{\bar{\alpha}_t}{1-\bar{\alpha}_t}\bm{z}_0\|_2^2}{2\left(\frac{\bar{\alpha}_t}{1 - \bar{\alpha}_t}\right)}\right\}.
    \end{align*}
    Now, we want to separate the term only with $\bar{\bm{z}}_t$ and the term only with $\bm{z}_0$ for applying Tweedie's formula. Hence, let $p_0\left(\bar{\bm{z}}_t\right) = \frac{1}{\left(2\pi \left(\frac{\bar{\alpha}_t}{1-\bar{\alpha}_t}\right)\right)^{d/2}}\exp\left(-\frac{||\bar{\bm{z}}_t ||^2}{2\left(\frac{\bar{\alpha}_t}{1 - \bar{\alpha}_t}\right)}\right)$, we achieve 
    
    \begin{align*}
        p(\bar{\bm{z}}_t | \bm{z}_0) = p_0(\bar{\bm{z}}_t) \exp\left\{{\bar{\bm{z}}_t}^{\top} \bm{z}_0 - \frac{\bar{\alpha}_t}{2(1-\bar{\alpha}_t)} \|\bm{z}_0\|^2\right\},
        \end{align*}
    which separates the distribution interaction term $\bar{\bm{z}}_t^{\top} \bm{z}_0$ from $\|\bm{z}_0\|^2$ term. \\
    Let $\lambda(\bar{\bm{z}}_t) = \log p(\bar{\bm{z}}_t) - \log p_0(\bar{\bm{z}}_t)$. Then by Tweedie's formula, we have $\mathbb{E}[\bm{z}_0 | \bar{\bm{z}}_t] = \nabla \lambda(\bar{\bm{z}}_t)$ and $\text{Cov}(\bm{z}_0 | \bar{\bm{z}}_t) = \nabla^2 \lambda(\bar{\bm{z}}_t)$.
    Since $p_0(\bar{\bm{z}}_t)$ is a Gaussian distribution with mean at $0$ and variance equal to $\frac{\bar{\alpha}_t}{1 - \bar{\alpha_t}}$, we obtain 
    \begin{align*}
        \nabla \lambda(\bar{\bm{z}}_t) =   \nabla \log p(\bar{\bm{z}}_t)  + \frac{1-\bar{\alpha}_t}{\bar{\alpha}_t} \bar{\bm{z}}_t.
    \end{align*}
    We observe that since $\hat{\bm{z}}$ is a scaled version of $\bm{z}_t$, we can obtain the distribution of $\hat{\bm{z}}$ as $$p(\bar{\bm{z}}_t) = \frac{1-\bar{\alpha}_t}{\sqrt{\bar{\alpha}_t}}
    p\left(\frac{1-\bar{\alpha}_t}{\sqrt{\bar{\alpha}_t}} \cdot \bm{z}_t\right).$$ 
    Then, we can apply chain rule to first compute then gradient on $\bm{z}_t$, and then account for $\bm{\bar{z}}_t$. As a result we get 
    \begin{align*}
         \nabla_{\bar{\bm{z}}_t} \log p(\bar{\bm{z}}_t)  = \frac{1 - \bar{\alpha}_t}{\sqrt{\bar{\alpha}_t}} \nabla_{\bm{z_t}} \log p({\bm{z}}_t)
    \end{align*}
and then
    \begin{align*}
    \nabla \lambda(\bar{\bm{z}}_t) = \mathbb{E}[\bm{z}_0 | \bm{z}_t] = \frac{1 - \bar{\alpha}_t}{\sqrt{\bar{\alpha}_t}} \nabla \log p(\bm{z}_t) + \frac{1}{\sqrt{\bar{\alpha}_t}}\bm{z}_t,
     \end{align*}
      which is consistent with the score function of~\cite{dps}. Afterwards, we take the gradient again with respective to the $\bm{\bar{z}}_t$, and apply the chain rule again, then we have 
      \begin{align*}
    \nabla^2 \lambda(\bar{\bm{z}}_t) = \text{Cov}(\bm{z}_0 | \bm{z}_t) = \frac{(1 - \bar{\alpha}_t)^2}{\bar{\alpha}_t} \nabla^2 \log p(\bm{z}_t) + \frac{1-\bar{\alpha}_t}{\bar{\alpha}_t}\bm{I}.
     \end{align*}
    This gives the desired result.
\end{proof}

% that's all folks
\end{document}